\let\originalleft\left
\let\originalright\right
\renewcommand{\left}{\mathopen{}\mathclose\bgroup\originalleft}
\renewcommand{\right}{\aftergroup\egroup\originalright}
\newcommand{\wt}[1]{\widetilde{#1}}
\newcommand{\wb}[1]{\overline{#1}}
\newcommand{\wh}[1]{\widehat{#1}}
\newcommand{\RgVal}{{\small\textsc{Rg-Val}}\xspace}
\newcommand{\UCSSP}{{\small\textsc{UC-SSP}}\xspace}
\newcommand{\UCSSPBF}{{\small\textsc{UC-SSP-BF}}\xspace}
\newcommand{\UCLSSP}{{\small\textsc{UC-LSSP}}\xspace}
\newcommand{\UCLSSPBF}{{\small\textsc{UC-LSSP-BF}}\xspace}
\newcommand{\UCBVI}{{\small\textsc{UCBVI}}\xspace}
\newcommand{\SSP}{{\small\textsc{SSP}}\xspace}
\newcommand{\EVI}{{\small\textsc{EVI}}}
\newcommand{\VI}{{\small\textsc{VI}}\xspace}
\newcommand{\maybeEVI}{{\small\textsc{(E)VI}}}
\newcommand{\RESET}{{\small\textsc{RESET}}\xspace}
\newcommand{\UCBVICH}{{\small\textsc{UCBVI-CH}}\xspace}
\newcommand{\UCBVIBF}{{\small\textsc{UCBVI-BF}}\xspace}
\newcommand{\UCRL}{{\small\textsc{UCRL}}\xspace}
\newcommand{\UCRLSSP}{{\small\textsc{UCRL-SSP}}\xspace}
\newcommand{\UCRLtwo}{{\small\textsc{UCRL2}}\xspace}
\newcommand{\UCRLB}{{\small\textsc{UCRL2B}}\xspace}
\newcommand{\TUCRL}{{\small\textsc{TUCRL}}\xspace}
\newcommand\myeqa{\mathrel{\stackrel{\makebox[0pt]{\mbox{\normalfont\tiny (a)}}}{=}}}
\newcommand\myeqb{\mathrel{\stackrel{\makebox[0pt]{\mbox{\normalfont\tiny (b)}}}{=}}}
\newcommand\myineeqb{\mathrel{\stackrel{\makebox[0pt]{\mbox{\normalfont\tiny (b)}}}{\leq}}}
\newcommand\myineeqc{\mathrel{\stackrel{\makebox[0pt]{\mbox{\normalfont\tiny (c)}}}{\leq}}}
\newcommand{\SD}{\Pi^\text{SD}}
\newcommand{\PSD}{\Pi^\text{PSD}}
\newcommand\footnoteref[1]{\protected@xdef\@thefnmark{\ref{#1}}\@footnotemark}
\DeclarePairedDelimiter\abs{\lvert}{\rvert}%
\DeclarePairedDelimiter\norm{\lVert}{\rVert}%
\newtheorem{theorem}{Theorem}
\newtheorem{lemma}{Lemma}
\newtheorem{proposition}{Proposition}
\theoremstyle{definition}
\newtheorem{definition}{Definition}
\newtheorem{assumption}{Assumption}
\newtheorem*{remark}{Remark}
\DeclareMathOperator*{\argmax}{arg\,max}
\DeclareMathOperator*{\argmin}{arg\,min}
\newcommand{\CommentSty}[1]{\texttt{#1}}
\newcommand{\tcp}[1]{\texttt{// #1}}
\newcommand{\jt}[1]{\textcolor{black}{#1}}
\icmltitlerunning{No-Regret Exploration in Goal-Oriented Reinforcement Learning}
\begin{document}

\twocolumn[
\icmltitle{No-Regret Exploration in Goal-Oriented Reinforcement Learning}



\icmlsetsymbol{equal}{*}

\begin{icmlauthorlist}
\icmlauthor{Jean Tarbouriech}{fb,inria}
\icmlauthor{Evrard Garcelon}{fb}
\icmlauthor{Michal Valko}{inria}
\icmlauthor{Matteo Pirotta}{fb}
\icmlauthor{Alessandro Lazaric}{fb}
\end{icmlauthorlist}

\icmlaffiliation{fb}{Facebook AI Research, Paris, France}
\icmlaffiliation{inria}{SequeL team, Inria Lille - Nord Europe, France}

\icmlcorrespondingauthor{Jean Tarbouriech}{jean.tarbouriech@gmail.com}


\vskip 0.3in
]



\printAffiliationsAndNotice{}  

\begin{abstract}
Many popular reinforcement learning problems (e.g., navigation in a maze, some Atari games, mountain car) are instances of the \textit{episodic setting} under its \textit{stochastic shortest path} (SSP) formulation, where an agent has to achieve a goal state while minimizing the cumulative cost. Despite the popularity of this setting, the exploration-exploitation dilemma has been sparsely studied in general SSP problems, with most of the theoretical literature focusing on different problems (i.e., finite-horizon and infinite-horizon) or making the restrictive \textit{loop-free} SSP assumption (i.e., no state can be visited twice during an episode). In this paper, we study the general SSP problem with no assumption on its dynamics (some policies may actually never reach the goal). We introduce \UCSSP, the first no-regret algorithm in this setting, and prove a regret bound scaling as $\displaystyle \widetilde{\mathcal{O}}( D S \sqrt{ A D K})$ after $K$ episodes for any unknown SSP with $S$ states, $A$ actions, positive costs and \textit{SSP-diameter} $D$, defined as the smallest expected hitting time from any starting state to the goal. We achieve this result by crafting a novel stopping rule, such that \UCSSP may interrupt the current policy if it is taking \textit{too long} to achieve the goal and switch to alternative policies that are designed to \textit{rapidly} terminate the episode.
\end{abstract}


\section{Introduction}
\label{introduction}

We consider the problem of exploration-exploitation in episodic Markov decision processes (MDPs), where the objective is to minimize the expected cost to reach a specific goal state. Several popular reinforcement learning (RL) problems fall into this framework, such as navigation problems, many \emph{Atari games} (e.g., breakout) and Mujoco environments (e.g., reacher). In all these problems, the length of an episode (i.e., the time to reach the goal state) is unknown and depends on the policy executed during the episode. Furthermore, the performance is not directly connected to the length of the episode, as the objective is to minimize the cost over time rather than reaching the goal state as fast as possible. The conditions for the existence and the computation of an optimal policy have been studied in the MDP literature under the name of the \emph{stochastic shortest path} (SSP) problem~\citep[][Sect.\,3]{bertsekas1995dynamic}. 

The exploration-exploitation dilemma has been extensively studied in the finite-horizon (see e.g., \citealp{azar2017minimax, zanette2019tighter}) and infinite-horizon settings (see e.g., \citealp{jaksch2010near, fruit2018near, fruit2018efficient}). In the former, the performance is optimized over a fixed and known horizon of $H$ steps. Typically, this model is used to solve SSP problems by setting $H$ \textit{large enough}. While for $H\rightarrow \infty$ the optimal finite-horizon policy converges to the optimal SSP policy, for any finite $H$, this approach may introduce a bias leading exploration algorithms to converge to suboptimal policies and suffer linear regret (see e.g., \citealp{toromanoff2019deep}, for a discussion of this problem in Atari games). In the latter, the performance is optimized for the asymptotic average cost. While this removes any strict ``deadline'', it does not introduce any incentive to reach the goal state. This may favor policies with small average cost and yet poor performance in the SSP sense, as they may never terminate. \jt{Note that SSP forms an important class of MDPs as both infinite-horizon (discounted) and finite-horizon MDPs, two much more extensively researched settings, are a subtype of SSP-MDPs~\cite{bertsekas1995dynamic, guillot2020stochastic}.}

Prior work on exploration in SSPs can be divided in two cases.
The first is the online shortest path routing problem, which has deterministic dynamics and stochastic rewards. In this case, the optimal policy is open-loop (i.e., it is a sequence of actions independent from the states) and it can be solved as an instance of a combinatorial bandit problem~\citep[see e.g.,][]{gyorgy2007line, talebi2017stochastic}. Exploration algorithms know the set of admissible paths of bounded length and regret bounds are available in both the semi- and full-bandit setting. 
The second case allows for stochastic transitions and mostly considers adversarial problems, but it is restricted to \textit{loop-free} environments~\citep[see e.g.,][]{jin2019learning, rosenberg2019online, rosenberg2019online2, neu2012adversarial, neu2010online, zimin2013online}. Under this assumption, the state space can be decomposed into $L$ non-intersecting layers $X_0, \ldots, X_L$ such that $X_0 = \{x_0\}$ and $X_L = \{x_L\}$, and transitions are only possible between consecutive layers. In this case, it is possible to derive regret bounds leveraging the fact that \textit{any} episode length is upper bounded by $L$ almost surely. Unfortunately, this requirement is restrictive and fails to hold in many realistic environments. 

In this paper, exploration in general SSP problems is investigated for the first time. The solution of an SSP is obtained by computing the policy minimizing the value function, i.e., the expected costs accumulated until reaching the goal state. Studying SSP value functions poses technical difficulties that do not appear in the conventional settings such as loop-free SSP, finite-horizon and infinite-horizon: \textbf{1)} it features two possibly conflicting objectives: quickly reaching the goal state while minimizing the costs along the way; \textbf{2)} it is unbounded for policies that may never reach the goal state (i.e., non-proper policies); \textbf{3)} it is not state-independent (a crucial property of the gain of any optimal policy in infinite-horizon); \textbf{4)} its number of summands may differ from one trajectory to another due to variations in the time to reach the goal state (thus making the regret decomposition tricky compared to finite-horizon); \textbf{5)} it cannot be computed using backward induction (a crucial technique used in finite-horizon)\jt{; \textbf{6)} it cannot be discounted (since a discount factor would have a undesirable effect of biasing importance towards short-term behavior and thus weakening the incentive to eventually reach the goal state). This last point means that SSP-MDPs do not have a notion of ``equivalent horizon'', which is $1/(1-\gamma)$ in the special case of infinite-horizon discounted MDPs with known discount factor $\gamma$, thus making the general setting of SSP-MDPs more difficult to analyze.}

While we leverage algorithmic and technical tools from both finite- and infinite-horizon settings, 
tackling the general SSP problem requires introducing novel techniques to manage the challenges highlighted above. Notably, we investigate the properties of \textit{optimistic} policies and their associated \textit{discrete phase-type distributions} (i.e., the hitting time distribution) to design a novel criterion to stop executing the current optimistic SSP policy \textit{during} an episode and switch to alternative policies designed to rapidly reach the goal.

The main contributions of this paper are: \textbf{1)} We formalize \emph{exploration-exploitation} in SSP problems by defining an adequate notion of regret (Sect.\,\ref{section_SSP}). \textbf{2)} We show that the special case of SSP with uniform costs can be cast as an infinite-horizon problem and tackled by \UCRLtwo~\cite{jaksch2010near} with a regret bound adapting to the complexity of the environment (Sect.\,\ref{section_uniform_costs}). \textbf{3)} We then introduce \UCSSP, the first algorithm with vanishing regret in general SSP problems (Sect.\,\ref{section_general_SSP}). We also show that not only \UCSSP effectively deals with the general case, but it remains competitive (if not better) even in the limit cases of uniform costs or loop-free SSP, which can be addressed by infinite- and finite-horizon regret minimization algorithms respectively. \textbf{4)} Moreover, we demonstrate how our (mild) assumptions (e.g., no dead-end states, positive costs) can be effectively relaxed using variants of \UCSSP (Sect.\,\ref{section_relaxation_asm}). \jt{Finally, we support our theoretical findings with experiments in App.\,\ref{app:experiments}.} 




\section{Stochastic Shortest Path (SSP)}
\label{section_SSP}

We consider a finite \textit{stochastic shortest path} problem~\citep[][Sect.\,3]{bertsekas1995dynamic} $M := \langle\mathcal{S}', \mathcal{A}, c, p, s_{0} \rangle$, where $\mathcal{S}' :=  \mathcal{S}\cup \{\overline{s}\}$ is the set of states with $\overline{s}$ being the goal state (also called the terminal state) and $s_{0} \in \mathcal{S}$ being the starting state\footnote{\label{note1} Our algorithm can handle any (possibly unknown) distribution of initial states.}, and $\mathcal{A}$ is the set of actions. We denote by $A = \abs{\mathcal{A}}$ and $S = \abs{\mathcal{S}}$ the number of actions and non-goal states. Each state-action pair $(s,a) \in \mathcal{S} \times \mathcal{A}$ is characterized by a known, deterministic cost $c(s,a)$ and an unknown transition probability distribution $p(\cdot \, \vert \, s,a)$ over next states. The goal state $\overline{s}$ is absorbing (i.e.,~$p(\overline{s} \, \vert \, \overline{s},a) = 1$ for all $a \in \mathcal{A}$) and cost-free (i.e.,~$c(\overline{s},a) = 0$ for all $a \in \mathcal{A}$). We assume the following property of the cost function.

\begin{assumption}\label{assumption_costs}
There exist known constants $0 < c_{\min} \leq c_{\max}$ such that $c(s,a) \in [c_{\min}, c_{\max}]$ for all $(s,a) \in \mathcal{S} \times \mathcal{A}$.
\end{assumption}

Extending the setting to unknown, stochastic costs poses no major difficulty, as long as the learner knows in advance the range of the costs, i.e., the constants $c_{\min}$ and $c_{\max}$ (see  App.\,\ref{subsection_relaxation_unknown_costs}). Moreover, in Sect.\,\ref{section_relaxation_asm} we derive a variant of our algorithm that can handle zero costs (i.e., $c_{\min} = 0$).

\citet{bertsekas1995dynamic} showed that under Asm.\,\ref{assumption_costs} we can restrict the attention to the set of stationary deterministic policies $\SD := \{\pi : \mathcal{S} \to \mathcal{A}\}$. For any $\pi \in \SD$ and $(s,s') \in \mathcal{S} \times \mathcal{S}'$, the (possibly unbounded) \textit{hitting time} to $s'$ starting from $s$ is denoted by $\tau_{\pi}(s \rightarrow s') := \inf \{ t \geq 0: s_{t+1} = s' \,\vert\, s_1 = s, \pi \}$. We also set $\tau_{\pi}(s) := \tau_{\pi}(s \rightarrow \overline{s})$. 

\begin{assumption}\label{assumption_finite_SSP_diameter}
We define the \textit{SSP-diameter} $D$ as
\begin{align}\label{definition_SSP_diameter}
    D := \max_{s \in \mathcal{S}} \min_{\pi \in \SD} \mathbb{E}\left[\tau_{\pi}(s)\right],
\end{align}
and we assume that $D < +\infty$.
\end{assumption}

We say that $M$ is \textit{SSP-communicating} when Asm.\,\ref{assumption_finite_SSP_diameter} holds. We defer to Sect.\,\ref{section_relaxation_asm} the treatment of the case $D = +\infty$.

The \textit{value function} (also called expected cost-to-go) of any $\pi \in \SD$ is defined as
\begin{align*}
    V^{\pi}(s_{0}) 
    := \mathbb{E}\bigg[ \sum_{t = 1}^{\tau_{\pi}(s_0)} c(s_{t}, \pi(s_t)) \,\Big\vert\,s_{0}\bigg].
\end{align*}
For any vector $V \in \mathbb{R}^S$, the optimal Bellman operator is defined as
\begin{align*}
\mathcal{L}V(s) := \min_{a \in \mathcal{A}} \Big\{ c(s, a) + \sum_{y \in \mathcal{S}} p(y  \, \vert \,  s,a) V(y) \Big\}.
\end{align*}
An important role in the definition of the SSP is played by the set $\PSD \subseteq \SD$ of proper stationary policies. 
\begin{definition}
	A stationary policy $\pi$ is \textit{proper} if $\overline{s}$ is reached with probability 1 from any state in $\mathcal{S}$ following $\pi$.\footnote{Note that Def.\,\ref{def_proper} is slightly different from (and is implied by) the conventional definition of \citet[][Sect.\,3.1]{bertsekas1995dynamic}, for which a policy is proper if there is a positive probability that $\overline{s}$ will be reached after at most $S$ stages.}
	\label{def_proper}
\end{definition}

The next lemma shows that the SSP problem is well-posed.
\begin{lemma}
Under Asm.\,\ref{assumption_costs} and \ref{assumption_finite_SSP_diameter}, there exists an optimal policy $\pi^{\star} \in \arg\min_{\pi \in \PSD} V^{\pi}(s_{0})$ for which $V^\star = V^{\pi^\star}$ is the unique solution of the optimality equations $V^\star = \mathcal{L} V^\star$ and $V^\star(s) < +\infty$ for any $s \in \mathcal{S}$.
\label{lemma_wellposedproblem}
\end{lemma}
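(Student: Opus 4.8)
The plan is to reduce the statement to the classical well-posedness theory for SSP problems \citep[][Sect.\,3]{bertsekas1995dynamic}: under the two structural conditions (a) there exists at least one proper policy, and (b) every improper policy $\pi$ satisfies $V^\pi(s) = +\infty$ for some $s \in \mathcal{S}$, the operator equation $V = \mathcal{L}V$ has a unique solution $V^\star$ in $\mathbb{R}^S$, this $V^\star$ is finite, it coincides with $V^{\pi^\star}$ for the policy $\pi^\star$ greedy with respect to $V^\star$, and $\pi^\star$ is proper and minimizes $V^\pi$ over all policies (hence in particular over $\PSD$ and at $s_0$). Thus the work reduces to extracting (a) and (b) from Asm.\,\ref{assumption_costs} and Asm.\,\ref{assumption_finite_SSP_diameter}.

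Condition (b) is immediate from $c_{\min} > 0$: if $\pi$ is improper then by Def.\,\ref{def_proper} there is a state $s$ with $\mathbb{P}[\tau_\pi(s) = +\infty] > 0$, so $\mathbb{E}[\tau_\pi(s)] = +\infty$, and since every incurred cost is at least $c_{\min}$ we get $V^\pi(s) \geq c_{\min}\,\mathbb{E}[\tau_\pi(s)] = +\infty$. The upper bound $c_{\max} < +\infty$ enters only to guarantee finiteness of the value of a proper policy, via $V^\pi(s) \leq c_{\max}\,\mathbb{E}[\tau_\pi(s)]$ together with the standard fact that a proper stationary policy on a finite MDP has finite expected hitting time (geometric-tail argument, recalled below).

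The main obstacle is condition (a), since the per-state minimizers in \eqref{definition_SSP_diameter} depend on $s$ and need not glue into a single policy. I would instead argue from reachability. Since $\min_{\pi \in \SD}\mathbb{E}[\tau_\pi(s)] \leq D < +\infty$ for every $s$, under the corresponding (state-dependent) policy the goal is hit in finite time almost surely, so $\overline{s}$ is reachable with positive probability from every state; consequently $d(s) :=$ (length of a shortest positive-probability path from $s$ to $\overline{s}$) is finite for every $s \in \mathcal{S}'$, with $d(\overline{s}) = 0$ and $d(s) \leq S$. For each $s$ with $d(s) \geq 1$ the first edge of such a shortest path identifies an action $a(s)$ with $\sum_{y\,:\,d(y) = d(s)-1} p(y\,\vert\,s,a(s)) > 0$; set $\pi^\dagger(s) := a(s)$. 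Under $\pi^\dagger$, from any state the quantity $d$ strictly decreases in one step with probability at least $q := \min_{s}\sum_{y\,:\,d(y)=d(s)-1} p(y\,\vert\,s,\pi^\dagger(s)) > 0$, hence $\overline{s}$ is reached within $\ell := \max_s d(s) \leq S$ steps with probability at least $q^{\ell} > 0$ from every state. Iterating this bound over successive blocks of $\ell$ steps shows $\mathbb{P}[\tau_{\pi^\dagger}(s) > k\ell] \leq (1 - q^{\ell})^{k} \to 0$ and $\mathbb{E}[\tau_{\pi^\dagger}(s)] < +\infty$, i.e.\ $\pi^\dagger$ is proper, establishing (a) (and, applied to any proper policy, the finite-expected-hitting-time fact used above).

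With (a) and (b) verified, invoking the classical SSP result \citep{bertsekas1995dynamic} yields every claim of the lemma. The only genuinely delicate step is (a): converting the state-by-state finiteness of the SSP-diameter into a single proper stationary policy, which is achieved here by the shortest-positive-probability-path construction.
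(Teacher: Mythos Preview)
Your proposal is correct and follows essentially the same approach as the paper: verify the two structural hypotheses of the classical SSP theory in \citet[][Sect.\,3.2]{bertsekas1995dynamic} (existence of a proper policy, infinite cost of improper policies) from Asm.\,\ref{assumption_costs} and~\ref{assumption_finite_SSP_diameter}, then invoke that reference. The paper's proof is a two-line assertion of these implications, whereas you actually supply the details---in particular the explicit shortest-positive-probability-path construction of a proper policy, which the paper omits.
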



Similarly to the average-reward case, we can provide a bound on the range of the optimal value function depending on the largest cost and the SSP-diameter.
\begin{lemma}
Under Asm.\,\ref{assumption_costs} and~\ref{assumption_finite_SSP_diameter}, 
$\norm{V^{\star}}_{\infty} \leq c_{\max} D$.
\label{lemma_1}
\end{lemma}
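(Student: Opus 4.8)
The plan is to exploit the optimality of $V^\star$ pointwise: for each fixed state $s \in \mathcal{S}$, the value $V^\star(s)$ is the minimum over all proper policies of the expected cumulative cost to the goal, so it is bounded above by the expected cumulative cost of any *particular* proper policy started at $s$. The natural candidate is the policy $\pi_s \in \SD$ that achieves the minimum in the definition of the SSP-diameter, i.e. $\pi_s \in \arg\min_{\pi \in \SD} \mathbb{E}[\tau_\pi(s)]$, so that $\mathbb{E}[\tau_{\pi_s}(s)] \le D < +\infty$. First I would observe that having finite expected hitting time from $s$ implies $\tau_{\pi_s}(s) < +\infty$ almost surely, and moreover one can argue $\pi_s$ reaches $\overline s$ with probability one from $s$ (and, via Asm.\,\ref{assumption_finite_SSP_diameter} applied at every state, this extends so that the relevant policy is proper in the sense of Def.\,\ref{def_proper}); this is needed so that $V^{\pi_s}(s)$ is finite and so that we may legitimately compare against it in the minimization defining $V^\star$.

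The core computation is then a simple bound on the cost accumulated along the trajectory of $\pi_s$:
\begin{align*}
V^\star(s) \;\le\; V^{\pi_s}(s) \;=\; \mathbb{E}\!\left[\sum_{t=1}^{\tau_{\pi_s}(s)} c\big(s_t,\pi_s(s_t)\big)\,\Big\vert\, s_1 = s\right] \;\le\; \mathbb{E}\!\left[\sum_{t=1}^{\tau_{\pi_s}(s)} c_{\max}\,\Big\vert\, s_1=s\right] \;=\; c_{\max}\,\mathbb{E}\big[\tau_{\pi_s}(s)\big] \;\le\; c_{\max} D,
\end{align*}
where the first inequality is by Lemma~\ref{lemma_wellposedproblem} (optimality of $V^\star$ over $\PSD$), the second uses Asm.\,\ref{assumption_costs} bounding each cost by $c_{\max}$, the middle equality is the elementary identity $\mathbb{E}[\sum_{t=1}^{N} 1] = \mathbb{E}[N]$ for the (a.s. finite, integrable) random stopping time $N = \tau_{\pi_s}(s)$, and the last step is the definition of $\pi_s$ together with Asm.\,\ref{assumption_finite_SSP_diameter}. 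Taking the maximum over $s \in \mathcal{S}$ yields $\norm{V^\star}_\infty \le c_{\max} D$. Since costs are nonnegative (Asm.\,\ref{assumption_costs}), $V^\star \ge 0$, so the sup-norm is controlled.

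The only mildly delicate point — and the place I would be most careful — is the interchange/justification that lets us write $\mathbb{E}[\sum_{t=1}^{\tau_{\pi_s}(s)} c(s_t,\pi_s(s_t))] \le c_{\max}\,\mathbb{E}[\tau_{\pi_s}(s)]$ when $\tau_{\pi_s}(s)$ is an unbounded random variable: this is a monotone-convergence / Tonelli argument on nonnegative summands (truncate the sum at horizon $H$, bound termwise by $c_{\max}$, and let $H \to \infty$), valid precisely because $\mathbb{E}[\tau_{\pi_s}(s)] \le D < \infty$. Everything else is bookkeeping. Depending on how much rigor the paper wants, one can also note that $\min_{\pi \in \SD}$ and $\min_{\pi \in \PSD}$ coincide for the diameter-achieving policy because a policy with finite expected hitting time from a state is proper from that state, which ties the bound cleanly back to the set over which $V^\star$ is optimal in Lemma~\ref{lemma_wellposedproblem}.
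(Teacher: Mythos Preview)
Your proposal is correct and follows essentially the same approach as the paper's proof: both bound $V^\star(s)$ by picking, for each $s$, the hitting-time-minimizing policy from the definition of $D$, bounding each per-step cost by $c_{\max}$, and invoking the definition of the SSP-diameter. The paper simply compresses this into a single one-line chain of inequalities (writing $\norm{V^\star}_\infty = \max_s \min_{\pi\in\SD}\mathbb{E}[\sum_{t=1}^{\tau_\pi(s)} c(s_t,\pi(s_t))] \le c_{\max}\max_s\min_{\pi\in\SD}\mathbb{E}[\tau_\pi(s)] = c_{\max}D$) and does not spell out the properness or monotone-convergence justifications you discuss.
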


For any $\pi \in \PSD$, its (almost surely finite) hitting time starting from any state in $\mathcal{S}$ follows a \textit{discrete phase-type distribution}, or in short \textit{discrete PH distribution} (see e.g.,~\citealp[][Sect.\,2.5]{latouche1999introduction} for an introduction). Indeed, its induced Markov chain is terminating with a single absorbing state $\overline{s}$ and all the other states are transient. The transition matrix associated to $\pi$, denoted by $P_{\pi} \in \mathbb{R}^{(S+1) \times (S+1)}$, can thus be arranged in the following canonical form
\[
P_{\pi}=
\left[
\begin{array}{cc}
Q_{\pi} & R_{\pi} \\
0 & 1
\end{array}
\right],
\]
where $Q_{\pi} \in \mathbb{R}^{S \times S}$ is the transition matrix between non-absorbing states (i.e., $\mathcal{S}$) and $R_{\pi} \in \mathbb{R}^{S}$ is the transition vector from $\mathcal{S}$ to $\overline{s}$. Note that $Q_{\pi}$ is strictly substochastic ($Q_{\pi} \mathds{1} \leq \mathds{1}$
where $\mathds{1} := (1,\ldots,1)^T \in \mathbb{R}^S$ and $\exists j$ s.t.\,$(Q_{\pi} \mathds{1})_j < 1$). Denoting by $\mathds{1}_s$ the $S$-sized one-hot vector at the position of state $s \in \mathcal{S}$, we have the following result (see e.g.,~\citealp[][Thm.\,2.5.3]{latouche1999introduction}).

\begin{proposition} For any $\pi \in \PSD$, $s \in \mathcal{S}$ and $n > 0$,
\begin{align*}
\mathbb{P}(\tau_{\pi}(s) > n) = \mathds{1}_s^\top Q_{\pi}^n \mathds{1} = \sum_{s' \in \mathcal{S}} (Q_{\pi}^n)_{s s'}.
\end{align*}
\vspace{-0.1in}
\label{equation_discrete_phase_type_distribution}
\end{proposition}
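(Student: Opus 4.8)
The plan is to unroll the Markov chain induced by $\pi$ and to recognize $Q_\pi$ as the object that governs trajectories not yet absorbed at $\overline{s}$. Recall that the trajectory is indexed so that $s_1 = s$ and $\tau_\pi(s) = \inf\{t \ge 0 : s_{t+1} = \overline{s}\}$; hence, for $n>0$, the event $\{\tau_\pi(s) > n\}$ is precisely the event that $s_{t+1} \neq \overline{s}$ for every $t \in \{0,1,\dots,n\}$, i.e., that $s_1, s_2, \dots, s_{n+1}$ all lie in $\mathcal{S}$. Since $s_1 = s \in \mathcal{S}$ this is simply the event $\{s_2 \in \mathcal{S}, \dots, s_{n+1} \in \mathcal{S}\}$, and the key structural fact is that $Q_\pi$ is exactly the restriction of the transition kernel $P_\pi$ to $\mathcal{S} \times \mathcal{S}$, so that ``staying in $\mathcal{S}$ for $n$ consecutive steps'' is propagated by $Q_\pi$.

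Concretely, I would marginalize over the intermediate states and invoke the Markov property of the chain induced by $\pi$:
\[
\mathbb{P}\big(\tau_\pi(s) > n\big) \;=\; \sum_{s_2 \in \mathcal{S}} \cdots \sum_{s_{n+1} \in \mathcal{S}} \; \prod_{k=1}^{n} p\big(s_{k+1} \mid s_k, \pi(s_k)\big), \qquad s_1 = s.
\]
Since every $s_k$ appearing here ranges over $\mathcal{S}$, each factor $p(s_{k+1} \mid s_k, \pi(s_k))$ equals $(Q_\pi)_{s_k s_{k+1}}$; the nested sum is then a telescoping chain of matrix products and collapses to $\sum_{s' \in \mathcal{S}} (Q_\pi^n)_{s s'} = (Q_\pi^n \mathds{1})_s = \mathds{1}_s^\top Q_\pi^n \mathds{1}$, which is the claim. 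Equivalently, one can package the same computation as an induction on $n$: the base case $n = 1$ holds because $\mathbb{P}(\tau_\pi(s) > 1) = \sum_{s' \in \mathcal{S}} p(s' \mid s, \pi(s)) = (Q_\pi \mathds{1})_s$, and conditioning on $s_2$ together with time-homogeneity yields the one-step recursion $\mathbb{P}(\tau_\pi(s) > n+1) = \sum_{s' \in \mathcal{S}} (Q_\pi)_{s s'}\, \mathbb{P}(\tau_\pi(s') > n)$, into which the inductive hypothesis substitutes directly.

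I do not expect a genuine obstacle here: this is essentially the textbook formula for the survival function of a discrete phase-type distribution (see \citealp[][Thm.\,2.5.3]{latouche1999introduction}). The only points that demand care are (i) the off-by-one in the definition of $\tau_\pi$ --- the chain is indexed from $1$ and absorption is read off $s_{t+1}$ --- and (ii) checking that $Q_\pi$ is literally the $\mathcal{S}\times\mathcal{S}$ block of $P_\pi$, which is immediate from the canonical form displayed just above the statement. Finally, note that properness of $\pi$ (i.e., $\pi \in \PSD$) is what guarantees that $\overline{s}$ is absorbing and $Q_\pi$ substochastic, but the identity itself is purely algebraic: properness is only needed to further conclude that $Q_\pi^n \to 0$, equivalently that $\tau_\pi(s) < \infty$ almost surely.
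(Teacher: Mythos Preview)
Your argument is correct and is the standard proof of this textbook identity. Note that the paper does not actually supply a proof of this proposition: it simply cites \citet[][Thm.\,2.5.3]{latouche1999introduction} and states the result, so there is no ``paper's approach'' to compare against --- your derivation (marginalize over the trajectory, identify each factor with an entry of $Q_\pi$, collapse the nested sum to a matrix power) is precisely the argument one finds in that reference, and your handling of the indexing convention $\tau_\pi(s) = \inf\{t \ge 0 : s_{t+1} = \overline{s}\}$ is accurate.
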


Finally, for any $X \in \mathbb{R}^{m \times n}$ we define the $\infty$-matrix-norm $\norm{X}_{\infty} := \max_{ 1 \leq i \leq m} \sum_{j=1}^n \abs{X_{ij}}$.

\textbf{Learning problem.}
We consider the learning problem where $\mathcal{S}', \mathcal{A},$ and $c$ are known, while the dynamics $p$ is unknown and can be estimated online. An \textit{environmental episode} starts at $s_0$ and ends \textit{only} when the goal state $\overline{s}$ is reached. We evaluate the performance of an algorithm $\mathfrak{A}$ after $K$ environmental episodes by its cumulative \textit{SSP-regret}
\begin{align*}
\Delta(\mathfrak{A},K) := \sum_{k = 1}^{K} \bigg[ \Big( \sum_{h=1}^{\tau_k(s_0)} c(s_{k,h}, \mu_k(s_{k,h})) \Big) - V^{\star}(s_{0}) \bigg],
\end{align*} 
where for any $k \in [K]$,\footnote{For any integer $n$, we denote by $[n]$ the set $\{ 1, \ldots, n\}$.} $\tau_k(s_0)$ is the length of episode $k$ following a possibly non-stationary policy $\mu_k=(\pi_{k,0}, \pi_{k,1}, \pi_{k,2}, \ldots)$, $\pi_{k,i} \in \SD$, until $\overline{s}$ is reached. Moreover, $s_{k,h}$ denotes the $h$-th state visited during episode $k$. $\Delta(\mathfrak{A}, K)$ also corresponds to the cumulative SSP-regret after $T_K$ steps, where $T_K := \sum_{k=1}^K \tau_k(s_0)$ is the time step at the end of episode $K$. This definition resembles the infinite-horizon regret, where the performance of the algorithm is evaluated by the costs accumulated by executing $\mu_k$. At the same time, it incorporates the episodic nature of finite-horizon problems, where the performance of the optimal policy is evaluated by its value function at the initial state. Nonetheless, notice that we cannot use the finite-horizon regret definition, i.e., $\sum_{k=1}^K V^{\mu_k}(s_0) - V^{\star}(s_0)$, where a policy $\mu_k$ is chosen at the beginning of the episode and run until its termination. Indeed, as $\mu_k$ may be non-proper and satisfy $V^{\mu_k}(s_0) = + \infty$, the execution of a single non-proper policy would directly lead to an unbounded regret.

\section{Uniform-cost SSP}
\label{section_uniform_costs}

In this section we focus on the SSP problems with uniform costs to illustrate a very first case where a sublinear regret can be achieved without any restrictive loop-free assumption. In particular, we show that in this case the SSP problem can be cast as an infinite-horizon problem and that an algorithm such as \UCRLtwo~\citep{jaksch2010near} can be directly applied and achieve surprisingly good regret guarantees.

\begin{assumption}[\textbf{only in Sect.\,\ref{section_uniform_costs}}]
The costs $c(s,a)$ are constant (equal to $1$ w.l.o.g.) for all $(s,a) \in \mathcal{S} \times \mathcal{A}$. 
\label{assumption_uniform_costs}
\end{assumption}

In this case, solving the SSP problem corresponds to computing the policy minimizing the expected hitting time to the goal $\overline{s}$.

We introduce the infinite-horizon reward-based MDP $M_{\infty} := \langle \mathcal{S}', \mathcal{A}, r_{\infty}, p_{\infty}, s_0 \rangle$, with reward $r_{\infty} = \mathds{1}_{\overline{s}}$ and $p_{\infty}(\cdot \, \vert \,  s,a) = p(\cdot \, \vert \,  s,a)$ for $s \neq \overline{s}$ and $p_{\infty}(\cdot \, \vert \,  \overline{s},a) = \mathds{1}_{s_0}$ for all $a$. In words, the transitions in $M_{\infty}$ behave as in $M$ and give zero rewards except at $\overline{s}$ where all actions give a reward of 1 and loop back to $s_0$ instead of self-looping with probability 1. We show that the solution of $M_{\infty}$ coincides with solving the original SSP and we bound the SSP-regret of \UCRLtwo applied to this problem.

\begin{theorem}\label{theorem_regret_SSP_UCRL_constant_costs_communicating}
For any policy $\pi\in\SD$, let $\rho_\pi := \lim_{T \rightarrow + \infty} \mathbb{E}_{\pi} \big[ \sum_{t=1}^{T} r_t/T \big]$ be the average reward of $\pi$ in the MDP $M_\infty$. Under Asm.\,\ref{assumption_uniform_costs}, we have
\begin{align*}
\pi^\star = \argmin_\pi V^\pi(s_0) = \argmin_\pi \mathbb{E}[\tau_\pi(s_0)] = \argmax_\pi \rho^\pi.
\end{align*}
With probability $1-\delta$, \UCRLtwo run for any $K \geq 1$ episodes suffers a regret
\begin{small}
\begin{align}\label{eq:regret.ucrl}
    \Delta(\UCRLtwo, K) \leq 34 (V^{\star}(s_0)\!+\! 1) D S \sqrt{A T_K \log\Big(\frac{T_K}{\delta}\Big)},
\end{align}
\end{small}
with
\begin{align}\label{eq:regret.ucrl.time}
T_K \leq 2 (V^{\star}(s_0)+1) K + \widetilde{O}\left( V^{\star}(s_0)^2 D^2 S^2 A \right). 
\end{align}
\vspace{-0.1in}
\end{theorem}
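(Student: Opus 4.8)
The plan is to split the statement into two parts: (i) the equivalence of the optimizers of the SSP value function, the SSP hitting time, and the average reward in $M_\infty$; and (ii) the regret and time bounds for \UCRLtwo applied to $M_\infty$.

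For part (i), under Asm.\,\ref{assumption_uniform_costs} the cost is uniformly $1$, so $V^\pi(s_0) = \mathbb{E}[\tau_\pi(s_0)]$ by definition of the value function, which gives the first equality for free. For the third equality, I would invoke a renewal-reward argument on $M_\infty$: since every visit to $\overline{s}$ immediately restarts the chain at $s_0$, the trajectory of any stationary policy $\pi$ decomposes into i.i.d.\ cycles, each of expected length $\mathbb{E}[\tau_\pi(s_0)]+1$ (the $+1$ accounts for the step spent at $\overline{s}$ collecting the unit reward and looping back), and each yielding exactly one unit of reward. The renewal-reward theorem then gives $\rho^\pi = 1/(\mathbb{E}[\tau_\pi(s_0)]+1)$. (For non-proper $\pi$, $\mathbb{E}[\tau_\pi(s_0)] = \infty$ so $\rho^\pi = 0$, which is consistent and strictly dominated.) Since $x \mapsto 1/(x+1)$ is strictly decreasing, maximizing $\rho^\pi$ is equivalent to minimizing $\mathbb{E}[\tau_\pi(s_0)]$. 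Asm.\,\ref{assumption_finite_SSP_diameter} (finite SSP-diameter) ensures a proper policy exists, so the optimum is attained at a proper policy and $\rho^\star > 0$; I should also check that $M_\infty$ is communicating (any state reaches $\overline{s}$ with positive probability by SSP-communicatingness, and $\overline{s}$ reaches every state via $s_0$ and communicatingness of $M$ restricted to reachable states — this needs a small argument, or one restricts $\mathcal{S}$ to states reachable from $s_0$).

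For part (ii), I would apply the standard \UCRLtwo regret guarantee (Jaksch et al., 2010): with probability $1-\delta$, after $T$ steps the regret against the optimal average reward is at most $34 \, \mathrm{Diam}(M_\infty) \, S \sqrt{A T \log(T/\delta)}$, where here the relevant diameter is the SSP-diameter $D$ of $M$ (one must verify $\mathrm{Diam}(M_\infty) \le D$, or at least $O(D)$, since travelling times in $M_\infty$ between any two states are controlled by $D$ plus the one-step return through $\overline{s}$). The key translation step is to relate the average-reward regret $T\rho^\star - \sum_t r_t$ in $M_\infty$ to the SSP-regret $\Delta(\UCRLtwo,K)$ in $M$. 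Over $K$ environmental episodes, the agent collects exactly $K$ rewards (one per goal-hit) in $T_K$ steps, and the number of steps "wasted" relative to optimal is $T_K - K/\rho^\star = T_K - K(V^\star(s_0)+1)$; meanwhile $\Delta(\UCRLtwo,K) = T_K - K V^\star(s_0) = (T_K - K(V^\star(s_0)+1)) + K$, but since under uniform costs each episode's optimal value is $V^\star(s_0)$ and $\rho^\star = 1/(V^\star(s_0)+1)$, careful bookkeeping shows the average-reward regret bounds $\Delta(\UCRLtwo,K)$ up to the multiplicative factor $(V^\star(s_0)+1)$ appearing in \eqref{eq:regret.ucrl} (the $+1$ in $\rho^\star$'s denominator is exactly what produces this factor). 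Finally, to turn the bound from one in $T_K$ into the time bound \eqref{eq:regret.ucrl.time}, I would use the regret bound itself: the total number of steps $T_K$ exceeds its "ideal" value $K(V^\star(s_0)+1)$ by at most the regret, i.e.\ $T_K \le K(V^\star(s_0)+1) + \Delta(\UCRLtwo,K) \lesssim K(V^\star(s_0)+1) + (V^\star(s_0)+1)DS\sqrt{AT_K\log(T_K/\delta)}$, and then solve this self-referential inequality for $T_K$ (a quadratic-in-$\sqrt{T_K}$ manipulation), yielding the linear term $2(V^\star(s_0)+1)K$ plus the lower-order $\widetilde{O}(V^\star(s_0)^2 D^2 S^2 A)$ term.

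The main obstacle I expect is the regret translation in part (ii): making rigorous that the average-reward regret of \UCRLtwo in the restarting MDP $M_\infty$ upper-bounds the episodic SSP-regret $\Delta(\UCRLtwo,K)$, including correctly tracking the $(V^\star(s_0)+1)$ factor and handling the boundary effect of the last (possibly incomplete) episode and the extra steps spent at $\overline{s}$. A secondary subtlety is verifying that the diameter of $M_\infty$ is indeed controlled by $D$ (not by some larger quantity), which relies on the SSP-communicating assumption and may require restricting attention to the states reachable from $s_0$; and ensuring the self-bounding inequality for $T_K$ is solved with the right constants to match the stated $2(V^\star(s_0)+1)K$ leading term.
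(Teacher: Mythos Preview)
Your overall strategy matches the paper's: establish $\rho^\pi = 1/(\mathbb{E}[\tau_\pi(s_0)]+1)$ (the paper uses the ergodic theorem, you use renewal-reward, same content), derive the exact identity $\Delta(\mathfrak{A},K) = (V^\star(s_0)+1)\,\Delta_\infty(\mathfrak{A},T_K,M_\infty)$, invoke the \UCRLtwo bound, and then solve the self-referential inequality for $T_K$. The regret translation and the $T_K$ bound are essentially right; your bookkeeping around the $(V^\star(s_0)+1)$ factor is a bit tangled but the identity $\Delta = T_K - (V^\star(s_0)+1)K$ cleans it up immediately, and the paper's ``solve the quadratic'' step is exactly what you describe.

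There is, however, a real gap at the diameter step. You propose to show $\mathrm{Diam}(M_\infty) \le O(D)$ by routing any $s \to s'$ through $\overline{s}$ and back via $s_0$. This fails: Asm.\,\ref{assumption_finite_SSP_diameter} only guarantees that every state can reach $\overline{s}$, not that every state is reachable from $s_0$. Hence $M_\infty$ is in general only \emph{weakly}-communicating and $D_\infty$ can be infinite (the paper gives an explicit example). Restricting to states reachable from $s_0$ is not how the paper handles it either, since \UCRLtwo still plans over the full state space. The actual fix is to go inside the \UCRLtwo analysis and bound not the diameter but the span of the EVI iterates $h_i$ directly: using the known reward structure $r_\infty = \mathds{1}_{\overline{s}}$, one shows by induction that $h_i(\overline{s}) = \max_s h_i(s)$, so $\mathrm{sp}(h_i) = h_i(\overline{s}) - h_i(\underline{s})$, and this is at most the optimistic shortest path from $\underline{s}$ to $\overline{s}$, which is $\le D$ since the true model lies in the confidence set. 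This is the step you are missing, and without it the black-box invocation of the \UCRLtwo bound gives you $D_\infty$, not $D$, in \eqref{eq:regret.ucrl}.
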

Up to logarithmic and lower-order terms, the previous bound scales as $\wt O(V^\star(s_0) D S \sqrt{AT_K})$. This can be contrasted with the infinite-horizon regret \mbox{$\Delta_{\infty} := T\rho^\star - \sum_t r_t$} of \UCRLtwo, which in general infinite-horizon problems scales as $\wt O(D_\infty S \sqrt{AT})$, where \mbox{$D_{\infty} := \max_{s \neq s' \in \mathcal{S}'} \min_{\pi \in \SD} \mathbb{E}\left[\tau_{\pi}(s \rightarrow s')\right]$} is the diameter of $M_\infty$~\citep{jaksch2010near} and measures the longest shortest path between \textit{any} two states. We first notice that the ``extra'' factor $V^\star(s_0)$ is a direct consequence of the different definition of regret in the two settings. In fact, we have $\Delta = (V^\star (s_0)+1)\Delta_{\infty}$. As \UCRLtwo is designed for general infinite-horizon problems, we can only bound the regret~$\Delta_{\infty}$ and use the previous equality to translate it into the corresponding SSP-regret. As such, the factor $V^\star(s_0)$ is the price to pay for adapting \UCRLtwo to the SSP case. 
On the other hand, it is easy to see that in general $D \leq D_{\infty}$. Interestingly, Asm.\,\ref{assumption_finite_SSP_diameter} does not imply that $M_\infty$ is communicating, which is needed for proving regret bounds for \UCRLtwo in general MDPs. Thm.\,\ref{theorem_regret_SSP_UCRL_constant_costs_communicating} shows that even when $M_\infty$ is weakly-communicating ($D_{\infty} = + \infty$) and some states may not be accessible from one another, \UCRLtwo is able to adapt to the SSP nature of the problem and achieve a bounded regret.

Importantly, notice that no assumption is made about the properness of the policies. The key for \UCRLtwo to manage policies that may never reach the goal state is the construction of \textit{internal} episodes, where policies are interrupted when the number of samples collected in a state-action pair is doubled. This allows \UCRLtwo to avoid accumulating too much regret when executing non-proper policies (they are eventually stopped) and, at the same time, perform well when the current policy is near-optimal (it is not stopped too early). Nonetheless, the stopping condition only relies on the number of samples and it is completely agnostic to the episodic nature of the SSP problem.

While the previous analysis suggests that algorithms for infinite-horizon MDPs could be readily executed in SSP problems with strong regret guarantees, this is no longer the case when moving to the general setting of non-uniform costs. Indeed, in order to estimate the performance of a stationary policy w.r.t.\,its value function, we cannot use the average-cost criterion since it does not capture the incentive to reach the goal state. 
As an illustrative example, consider the deterministic two-state SSP $M$ from Fig.\,\ref{fig:toy_SSP}. The optimal SSP policy $\pi^\star$ always selects action $a_2$ since it has minimal value $V^{\star}(s_0) = c_{\max}$. The optimal infinite-horizon policy always selects action $a_1$ since it has minimal average cost $\rho^\star = c_{\min}$, whereas $\rho_{\pi^\star} = c_{\max} / 2$. Consequently, running \UCRLtwo in general SSP may converge to a suboptimal policy and yield linear SSP-regret.

In the next section, we propose a novel algorithm designed to target the general SSP objective function (non-uniform costs) with a two-phase structure and a carefully designed condition to interrupt executing policies.

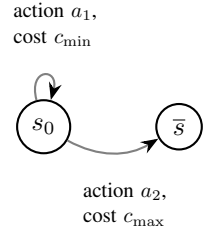
\begin{figure}[t]
\begin{minipage}[t]{0.49\columnwidth}
	\centering
	\begin{tikzpicture}[thick,scale=0.9]
	\begin{scope}[every node/.style={circle,thick,draw}]
	\node (1) at (0,0) {$s_0$};
	\node (2) at (2,0) {$\overline{s}$};
	\end{scope}
	\begin{scope}[>={Stealth[black]},
	every node/.style={fill=white,circle},
	every edge/.style={draw=gray, thick},
	every loop/.style={draw=gray, thick, min distance=5mm,looseness=5}]
	\path[]
	(1) [->,thick] edge[in=200,out=150, loop above,thick] node[above=0.05, scale=0.8, text width=10mm] {action~$a_{1}$, cost~$c_{\min}$} (1)
	(1) [->,thick] edge[bend right=30] node[below=0.05, scale=0.8, text width = 10mm] {action~$a_{2}$, cost~$c_{\max}$} (2);
	\end{scope}
	\end{tikzpicture}
\end{minipage}%
\begin{minipage}[t]{0.51\columnwidth}
\vspace{-1.25in}
\caption{Deterministic two-state SSP $M$ with two available actions: $a_1$ which self-loops on $s_0$ with cost $c_{\min}$ and $a_2$ which goes from $s_0$ to $\overline{s}$ with cost $c_{\max} > 2 c_{\min}$.}
\label{fig:toy_SSP}
\end{minipage}

\vspace{-0.2in}
\end{figure}




\section{General SSP}
\label{section_general_SSP}

The general SSP problem requires \textit{(i)} to quickly reach the goal state while \textit{(ii)} at the same time minimizing the cumulative costs. On the one hand, if we constrain the costs to be all equal, objectives \textit{(i)} and \textit{(ii)} coincide and the SSP problem can be addressed using infinite-horizon algorithms as seen in Sect.\,\ref{section_uniform_costs}. On the other hand, all previous works in the SSP setting constrain the hitting time of \textit{all} policies (i.e., the loop-free assumption), which means that objective \textit{(i)} is always guaranteed and the algorithm can focus its efforts on objective \textit{(ii)}.

In this section, we tackle head-on the general SSP problem for the first time, where we need \textit{to optimize over the two possibly conflicting objectives \textit{(i)} and \textit{(ii)} at the same time.} This poses algorithmic and technical challenges (e.g., non-proper policies may never reach the goal state and have unbounded value function) that require devising a novel optimistic algorithm, specifically designed for SSP problems.

\subsection{The \UCSSP Algorithm}
\label{sect_ucssp_alg}

\begin{algorithm}[t]
    \begin{small}
    \begin{algorithmic}
      \STATE \textbf{Input:} Confidence $\delta \in (0,1)$, costs, $\mathcal{S}', \mathcal{A}$.
      \STATE \textbf{Initialization:} Set the state-action counter $N_{0,0}(s,a) := 0$ for any $(s,a) \in \mathcal{S} \times \mathcal{A}$ and the time step $t := 1$. 
      \STATE Set $k := 0$. \CommentSty{\scriptsize{// episode index}}
      \STATE Set $G_{0,0} := 0$. \CommentSty{\scriptsize{// number of attempts in phases \ding{173}}} 
      \WHILE{$k < K$}
        \STATE \tcp{\scriptsize{New environmental episode}}
        \STATE Increment $k \mathrel{+}= 1$.
        \STATE Set $j := 0$ \CommentSty{\scriptsize{// attempts in phase \ding{173} of episode $k$}}
        \WHILE{$s_t \neq \overline{s}$}
            \STATE Set $t_{k,j} := t$ and counter $\nu_{k,j}(s,a) := 0$.
            \STATE Set $G_{k,j} = G_{k,0} + j$
            \STATE Compute $(\wt{\pi}_{k,j}, H_{k,j}) := \EVI_{\textrm{SSP}}(k,j)$.
            \WHILE{$t \leq t_{k,j} + H_{k,j}$ ~{\bf and}~ $s_t \neq \overline{s}$}
                \STATE Execute action $a_t =\wt{\pi}_{k,j}(s_t)$, observe cost $c(s_t,a_t)$ and next state $s_{t+1}$.
                \STATE Set $\nu_{k,j}(s_t,a_t) \mathrel{+}= 1$.
                \STATE Set $t \mathrel{+}= 1$.
            \ENDWHILE
            \IF{$s_t \neq \overline{s}$}
            \STATE \tcp{\scriptsize{Switch to phase \ding{173}}}
                \STATE Set $N_{k,j+1}(s,a) := N_{k,j}(s,a) + \nu_{k,j}(s,a)$
                \STATE Set $j \mathrel{+}= 1$
            \ENDIF
        \ENDWHILE
        \STATE Set $N_{k+1,0}(s,a) := N_{k,j}(s,a)+ \nu_{k,j}(s,a)$.
        \STATE Set $G_{k+1,0} := G_{k,j}$.
      \ENDWHILE
    \end{algorithmic}
    \end{small}
    \caption{\UCSSP algorithm}
    \label{algorithm_UCSSP}
\end{algorithm}

\begin{algorithm}[t]
    \caption{$\EVI_{\textrm{SSP}}$}
    \label{algorithm_VI_SSP}
    \begin{small}
        \begin{algorithmic}
            \STATE \textbf{Input:} Attempt index $(k,j)$ and $N_{k,j}(s,a)$ samples.
            \IF{$j=0$}
                \STATE $\varepsilon_{k,0} := \frac{c_{\min}}{2t_{k,0}}$, $\gamma_{k,j} := \frac{1}{\sqrt{k}}$.
            \ELSE
                \STATE $\varepsilon_{k,j} := \frac{1}{2t_{k,j}}$, $\gamma_{k,j} := \frac{1}{\sqrt{G_{k,j}}}$.
            \ENDIF
            \STATE Compute estimates $\widehat{p}_{k,j}$ and confidence set $\mathcal{M}_{k,j}$ with the $N_{k,j}$ samples collected so far.
            \STATE Define the extended optimal Bellman operator $\wt{\mathcal{L}}_{k,j}$ as in Eq.\,\eqref{eq_bellman_operator}.
            \vspace{-0.14in}
            \STATE \tcp{\small EVI scheme}
            \STATE Set $m:=0$, $v_0 := \textbf{0}$ ($S$-sized vector) and $v_1 := \wt{\mathcal{L}}_{k,j} v_0$.
            \WHILE{$\norm{v_{m+1} - v_m}_{\infty} > \varepsilon_{k,j}$}
                \STATE $m \mathrel{+}= 1$.
                \STATE $v_{m+1} := \wt{\mathcal{L}}_{k,j} v_m$.
            \ENDWHILE
            \STATE Set $\wt{v}_{k,j} := v_m$.
            \STATE Compute $\wt{\pi}_{k,j}$ the optimistic greedy policy w.r.t.\,$\wt{v}_{k,j}$.
            \STATE Compute $\wt{p}_{k,j}$ the corresponding optimistic model.
            \STATE Compute $\wt{Q}_{k,j}$ the transition matrix of $\wt{\pi}_{k,j}$ in the optimistic model $\wt{p}_{k,j}$ over $\mathcal{S}$, i.e., 
            for any $(s, s') \in \mathcal{S}^2$,
                \begin{align*}
                    \wt{Q}_{k,j}(s,s') := \sum_{a \in \mathcal{A}} \wt{\pi}_{k,j}(a \vert s) \wt{p}_{k,j}(s' \vert s,a).
                \end{align*}
            \vspace{-0.08in}
            \STATE Compute $H_{k,j} := \min \left\{ n > 1 : \norm{\wt{Q}_{k,j}^{n-1}}_{\infty} \leq \gamma_{k,j} \right\}$.
            \STATE \textbf{Output:} policy $\wt{\pi}_{k,j}$ and horizon $H_{k,j}$.
        \end{algorithmic}
    \end{small}
\end{algorithm}

We present \UCSSP, an algorithm for efficient exploration in general SSP problems (Alg.\,\ref{algorithm_UCSSP}). At a high level, \UCSSP proceeds through each environmental episode $k$ in a \textit{two-phase} fashion. In phase \ding{172}, \UCSSP executes a policy trying to solve the SSP problem by tackling both objectives~\textit{(i)} and~\textit{(ii)} (i.e., reach the goal while minimizing the cumulative costs). We refer to this first policy as an \textit{attempt} in phase~\ding{172}. As \UCSSP relies on estimates of the true (unknown) SSP, it may select a non-proper policy that would never reach the goal state and incur an unbounded regret. In order to avoid this situation, if the goal state is not reached after a given \textit{pivot} horizon, the algorithm deems the whole episode as a \textit{failure} and it switches to phase \ding{173}, whose only objective is to terminate the episode as fast as possible (i.e., it only considers objective \textit{(i)} and disregards the costs). Nonetheless, optimizing an estimate of the hitting time (i.e., objective \textit{(i)}) does not guarantee that the corresponding policy successfully reaches the goal state (i.e., is proper) and multiple \textit{attempts} (i.e., policies) in phase \ding{173} may be needed. Similar to phase \ding{172}, whenever the goal state is not reached after a certain \textit{pivot} horizon, the current policy is terminated and a new policy is computed. Phase \ding{173} and the overall episode ends when the goal state is eventually reached. Notation-wise, the $k$-th phase \ding{172} is indexed by $(k,0)$ (note that $k$ coincides with the current number of episodes), while the $j$-th attempt in the phase \ding{173} of episode $k$ is indexed by $(k,j)$ for $j \geq 1$. Moreover, we denote by $J_k$ the number of attempts performed during the phase \ding{173} of episode $k$, and by $G_{k,j}$ the total number of attempts in phases \ding{173} up to (and including) attempt $(k,j)$. 

\textbf{Optimistic policies.} 
\UCSSP relies on the principle of \textit{optimism in face of uncertainty}. At each attempt, it executes a policy with either lowest optimistic (cost-weighted) value for an attempt in phase \ding{172}, or with lowest optimistic expected hitting time for an attempt in phase \ding{173}. At the beginning of any attempt $(k,j)$, the algorithm computes a set of plausible MDPs defined as $\mathcal{M}_{k,j} := \{ \langle \mathcal{S}, \mathcal{A}, c, \widetilde{p} \rangle ~\vert ~ \widetilde{p}(\cdot|s,a) \in B_{k,j}(s,a) \}$ where $B_{k,j}(s,a)$ is a high-probability confidence set on the transition probabilities of the true MDP $M$. We set $B_{k,j}(s,a) := \{ \widetilde{p} \in \mathcal{C} ~\vert ~ \widetilde{p}(\cdot \, \vert \,  \overline{s},a) = \mathds{1}_{\overline{s}}, \norm{\widetilde{p}(\cdot \, \vert \,  s,a) - \widehat{p}_{k,j}(\cdot \, \vert \,  s,a)}_1 \leq \beta_{k,j}(s,a)\}$, with $\mathcal{C}$ the $S'$-dimensional simplex, $\widehat{p}_{k,j}$ the empirical average of transitions prior to attempt $(k,j)$ and
\begin{align*}
\beta_{k,j}(s,a) := \sqrt{\frac{8S\log\big(2AN_{k,j}^+(s,a)\,\delta^{-1}\big)}{N_{k,j}^+(s,a)}},
\end{align*}
where $N_{k,j}^{+}(s,a) := \max \{1,N_{k,j}(s,a) \} $ with $N_{k,j}$ being the state-action counts prior to attempt $(k,j)$. 
The construction of $\beta_{k,j}(s,a)$ 
guarantees that $M \in \mathcal{M}_{k,j}$ with high probability, as shown in the following lemma.

\begin{lemma}
Introduce the event $\mathcal{E} := \bigcap_{k=1}^{+ \infty} \bigcap_{j=1}^{J_k} \{ M \in \mathcal{M}_{k,j} \}$. Then $\mathbb{P}(\mathcal{E}) \geq 1 - \frac{\delta}{3}$.
\label{lemma_high_probability_intersection_bound_M_in_M_k}
\end{lemma}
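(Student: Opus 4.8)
The statement to prove is Lemma~\ref{lemma_high_probability_intersection_bound_M_in_M_k}: the event $\mathcal{E} = \bigcap_{k\geq 1}\bigcap_{j=1}^{J_k}\{M\in\mathcal{M}_{k,j}\}$ holds with probability at least $1-\delta/3$. This is a standard concentration-plus-union-bound argument of the type used in \UCRLtwo-style analyses, so the plan is to reduce the (random, data-dependent) family of confidence sets indexed by $(k,j)$ to a fixed countable family indexed by the integer sample count, and then apply an $L_1$-deviation bound for empirical distributions.

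**Key steps.**

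First I would recall the $L_1$ concentration inequality for empirical multinomial distributions: for a fixed $(s,a)$, if $\widehat p_n(\cdot\mid s,a)$ is the empirical estimate built from $n$ i.i.d.\ samples of $p(\cdot\mid s,a)$ over the $S'$-element support, then for any $\epsilon>0$, $\mathbb{P}\big(\|\widehat p_n(\cdot\mid s,a)-p(\cdot\mid s,a)\|_1 \geq \epsilon\big) \leq 2^{S'}e^{-n\epsilon^2/2}$ (Weissman et al.; this is the form used in \UCRLtwo). Plugging in the choice $\epsilon = \beta(n) := \sqrt{8S\log(2An\delta^{-1})/n}$ makes the right-hand side at most, say, $\delta/(20 S A n^{2})$ (one checks $2^{S'}e^{-n\beta(n)^2/2} = 2^{S+1}(2An/\delta)^{-4S} \leq \delta/(20SAn^2)$ for $S\geq 1$, since $4S \geq 2$ and the constants absorb the $2^{S+1}$). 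Next I would take a union bound over all $(s,a)\in\mathcal{S}\times\mathcal{A}$ (there are $SA$ of them) and over all possible integer sample counts $n\geq 1$: summing $\sum_{n\geq 1}\delta/(20SAn^2) = \delta\pi^2/(120 SA) < \delta/(10 SA)$ per pair, and multiplying by $SA$, yields that with probability at least $1-\delta/10 \geq 1-\delta/3$, simultaneously for every $(s,a)$ and every $n\geq 1$ we have $\|\widehat p_n(\cdot\mid s,a)-p(\cdot\mid s,a)\|_1 < \beta(n)$.

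Then I would argue that on this high-probability event, $M\in\mathcal{M}_{k,j}$ for every attempt $(k,j)$. The point is that the random quantity $N_{k,j}^+(s,a)$ is a (stopping-time-determined) positive integer, so the empirical estimate $\widehat p_{k,j}(\cdot\mid s,a)$ used in attempt $(k,j)$ equals $\widehat p_{n}(\cdot\mid s,a)$ for $n = N_{k,j}^+(s,a)$ — more carefully, the samples are collected in a fixed order and $\widehat p_{k,j}$ is the average of the first $N_{k,j}(s,a)$ of them (or the trivial estimate if the count is $0$, in which case $\beta_{k,j}=\beta(1)\geq \sqrt{8S\log 2}\geq 2$ and the $L_1$ distance is trivially $\leq 2 < \beta_{k,j}$). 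Hence the deviation $\|\widehat p_{k,j}(\cdot\mid s,a)-p(\cdot\mid s,a)\|_1$ is bounded by $\beta(N_{k,j}^+(s,a)) = \beta_{k,j}(s,a)$ on the event. Since this holds for all $(s,a)$, and since $M$'s goal-state row $\mathds{1}_{\overline s}$ is built into $B_{k,j}$ by definition, we get $p(\cdot\mid s,a)\in B_{k,j}(s,a)$ for all $(s,a)$, i.e.\ $M\in\mathcal{M}_{k,j}$. As this is simultaneous over all $k$ and $1\leq j\leq J_k$, it establishes $\mathbb{P}(\mathcal{E})\geq 1-\delta/3$.

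**Main obstacle.**

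The only genuinely delicate point is the measurability/adaptivity issue: $N_{k,j}(s,a)$ is a random count that depends on the algorithm's past behavior, so one cannot naively treat $\widehat p_{k,j}$ as an average of $N$ i.i.d.\ samples for a deterministic $N$. The standard fix — which I would invoke — is exactly the union-over-all-possible-counts device above: because we bound the deviation \emph{uniformly over all integer values $n$} of the count before taking the union bound over $(s,a)$, the bound holds for whatever value $N_{k,j}(s,a)$ happens to take, and a martingale/optional-stopping argument is not needed (transitions from a fixed $(s,a)$ are i.i.d.\ regardless of when they are observed). Getting the constants to land so that the total failure probability is at most $\delta/3$ rather than merely "some constant times $\delta$" is routine bookkeeping, handled by the generous $8S$ inside the logarithm which gives polynomial-in-$n$ decay with exponent $4S \geq 4$, far more than enough for $\sum_n n^{-2}$ to converge with room to spare.
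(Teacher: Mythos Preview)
Your proof is correct and follows essentially the same route as the paper's: reduce the random attempt index $(k,j)$ to the deterministic sample count $n$, apply Weissman's $L_1$ deviation bound at each fixed $(s,a,n)$, and take a union bound over $(s,a)\in\mathcal{S}\times\mathcal{A}$ and $n\geq 1$, with the $n=0$ case handled trivially since $\beta\geq 2$. The only cosmetic difference is that the paper introduces an auxiliary radius $\epsilon_n(s,a)=\sqrt{2\log((2^{S'}-2)5SA(n^+)^2/\delta)/n^+}\leq \beta_n(s,a)$ so that Weissman yields exactly $\delta/(5n^2SA)$ per triple (summing to $\pi^2\delta/30\leq\delta/3$), whereas you plug $\beta_n$ in directly and bound the resulting $2^{S+1}(2An/\delta)^{-4S}$ more crudely; both arrive at the same conclusion.
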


Once $\mathcal{M}_{k,j}$ has been computed, \UCSSP applies an extended value iteration (\EVI) scheme (Alg.\,\ref{algorithm_VI_SSP}) to compute a policy with lowest optimistic value (if $j=0$) or lowest optimistic expected hitting time (if $j \geq 1$). Formally, we define the extended optimal Bellman operator $\widetilde{\mathcal{L}}_{k,j}$ such that for any $v\in \mathbb{R}^{\mathcal{S}}$ and $s\in \mathcal{S}$,
\begin{align}
        \widetilde{\mathcal{L}}_{k,j}v(s) := &\min_{a \in \mathcal{A}} \Big\{ c_{k,j}(s,a) \nonumber \\ &+ \min_{\widetilde{p}\in B_{k,j}(s,a)}\sum_{y \in \mathcal{S}} \widetilde{p}(y \, \vert \,  s,a) v(y) \Big\},
\label{eq_bellman_operator}
\end{align}
where the costs $c_{k,j}$ depend on the phase as follows
\begin{align*}
    c_{k,j}(s,a) := \left\{
    \begin{array}{ll}
        c(s,a) & \mbox{if~} j=0 \\
        1 & \mbox{otherwise.}
    \end{array}
\right. 
\end{align*}

As explained by \citet[][Sect.\,3.1]{jaksch2010near}, we can combine all the MDPs in $\mathcal{M}_{k,j}$ into a single MDP $\widetilde{M}$ with extended action set $\mathcal{A}'$. As proved by \citet[][Sect.\,3.3]{bertsekas1995dynamic} about the generalization of the SSP results to a compact action set, the Bellman operator $\widetilde{\mathcal{L}}_{k,j}$ satisfies the contraction property and thus $\EVI_{\textrm{SSP}}$ converges to a vector we denote by $\widetilde{V}_{k,j}^{\star}$. We have the following component-wise inequalities when the stopping condition of Alg.\,\ref{algorithm_VI_SSP} is met.\footnote{Note that the stopping condition is different from the standard one for \VI for average reward MDPs~\citep[see e.g.,][]{puterman2014markov, jaksch2010near} that is defined in span seminorm. Also note that as opposed to standard \VI, we do not have guarantees of the type $\|v_n - \widetilde{V}_{k,j}^{\star}\|_{\infty} \leq \epsilon$ where $\widetilde{V}_{k,j}^{\star} = \widetilde{\mathcal{L}}_{k,j} \widetilde{V}_{k,j}^{\star}$.}

\begin{lemma}
    For any attempt $(k,j)$, denote by $\widetilde{v}_{k,j}$ the output of $\EVI_{\textrm{SSP}}$ with operator $\widetilde{\mathcal{L}}_{k,j}$ and accuracy $\varepsilon_{k,j}$. Then $\widetilde{\mathcal{L}}_{k,j} \widetilde{v}_{k,j} \leq \widetilde{v}_{k,j} + \varepsilon_{k,j}$. Furthermore, under the event $\mathcal{E}$ we have $\widetilde{v}_{k,j} \leq V^{\star}$ if $j=0$ or $\widetilde{v}_{k,j} \leq \min_{\pi} \mathbb{E}(\tau_{\pi})$ otherwise.
\label{lemma_optimism}
\end{lemma}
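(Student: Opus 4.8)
The plan is to establish the two assertions separately, both resting on the monotonicity of the extended Bellman operator $\widetilde{\mathcal{L}}_{k,j}$ together with the convergence of $\EVI_{\textrm{SSP}}$ already granted above the statement (the cited SSP theory guarantees that $\widetilde{\mathcal{L}}_{k,j}$ has a finite fixed point $\widetilde{V}_{k,j}^\star$ and that value iteration converges to it). First I would record the elementary facts reused throughout: \textbf{(a)} $\widetilde{\mathcal{L}}_{k,j}$ is monotone, i.e. $u \le v$ componentwise implies $\widetilde{\mathcal{L}}_{k,j} u \le \widetilde{\mathcal{L}}_{k,j} v$, because for a fixed action and a fixed (nonnegative) transition vector $\widetilde p$ the inner sum in~\eqref{eq_bellman_operator} is monotone in its argument, and taking minima over $\widetilde p \in B_{k,j}(s,a)$ and over $a$, and adding $c_{k,j}(s,a)$, all preserve inequalities; \textbf{(b)} since $v_0 = \mathbf{0}$ and the costs are positive, the iterates $(v_m)$ are non-decreasing and bounded above by $\widetilde{V}_{k,j}^\star < \infty$, hence converge, so $\|v_{m+1}-v_m\|_\infty \to 0$ and the while-loop of $\EVI_{\textrm{SSP}}$ exits after a finite number $m^\star$ of iterations.

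For the first claim, at the exit index $m=m^\star$ we have simultaneously $v_{m+1} = \widetilde{\mathcal{L}}_{k,j} v_m$ and $\|v_{m+1} - v_m\|_\infty \le \varepsilon_{k,j}$. Since $\widetilde{v}_{k,j} = v_m$, this gives the componentwise chain $\widetilde{\mathcal{L}}_{k,j}\widetilde{v}_{k,j} = v_{m+1} \le v_m + \|v_{m+1}-v_m\|_\infty\,\mathds{1} \le \widetilde{v}_{k,j} + \varepsilon_{k,j}\,\mathds{1}$, which is exactly $\widetilde{\mathcal{L}}_{k,j}\widetilde{v}_{k,j} \le \widetilde{v}_{k,j} + \varepsilon_{k,j}$. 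No appeal to $\mathcal{E}$ is needed here.

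For the optimism claim I would work under $\mathcal{E}$, so that $p(\cdot\,\vert\,s,a) \in B_{k,j}(s,a)$ for every $(s,a)$. Restricting the inner minimum in~\eqref{eq_bellman_operator} to the single choice $\widetilde p = p$ then yields, for any $v$, the pointwise domination $\widetilde{\mathcal{L}}_{k,0} v \le \mathcal{L} v$ when $j=0$ (identical costs), and $\widetilde{\mathcal{L}}_{k,j} v \le \mathcal{L}_1 v$ when $j\ge 1$, where $\mathcal{L}_1 v(s) := \min_{a} \{ 1 + \sum_{y} p(y\,\vert\,s,a) v(y)\}$ is the true optimal Bellman operator of the unit-cost SSP. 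In the case $j=0$, an induction on $m$ shows $v_m \le V^\star$: it holds for $v_0 = \mathbf{0} \le V^\star$, and if $v_m \le V^\star$ then by monotonicity (fact (a)), the domination, and Lemma~\ref{lemma_wellposedproblem}, $v_{m+1} = \widetilde{\mathcal{L}}_{k,0} v_m \le \widetilde{\mathcal{L}}_{k,0} V^\star \le \mathcal{L} V^\star = V^\star$; evaluating at $m = m^\star$ gives $\widetilde{v}_{k,0} \le V^\star$. In the case $j\ge 1$, I would first apply Lemma~\ref{lemma_wellposedproblem} to the unit-cost SSP (which satisfies Asm.~\ref{assumption_costs} with $c_{\min}=c_{\max}=1$ and Asm.~\ref{assumption_finite_SSP_diameter} by hypothesis) to identify the vector $T^\star := \min_\pi \mathbb{E}(\tau_\pi)$ — taken componentwise, with the minimum attained at a single proper policy, since any non-proper policy has infinite expected hitting time from some state — as the unique finite fixed point of $\mathcal{L}_1$, satisfying $T^\star(s) \le D$ for all $s$. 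The identical induction, $v_0 = \mathbf{0} \le T^\star$ and $v_{m+1} = \widetilde{\mathcal{L}}_{k,j} v_m \le \mathcal{L}_1 v_m \le \mathcal{L}_1 T^\star = T^\star$, yields $\widetilde{v}_{k,j} \le T^\star = \min_\pi \mathbb{E}(\tau_\pi)$.

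The only real subtlety I anticipate is the $j\ge1$ case: one must be careful to read $\min_\pi\mathbb{E}(\tau_\pi)$ as a vector, to justify that it coincides with the fixed point of the true unit-cost Bellman operator, and to know it is finite — this is precisely where Asm.~\ref{assumption_finite_SSP_diameter} (SSP-communicating, hence optimal expected hitting time bounded by $D$ from every state) is used. Everything else is a routine monotone value-iteration argument, and termination of the while-loop follows from the contraction property already invoked above the lemma.
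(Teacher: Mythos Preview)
Your proposal is correct and follows essentially the same monotone value-iteration argument as the paper. The only cosmetic difference is that the paper routes the optimism through the extended fixed point $\widetilde{V}_{k,j}^\star$ (showing $\widetilde{v}_{k,j}\le\widetilde{V}_{k,j}^\star$ by monotonicity from $v_0=0$, then $\widetilde{V}_{k,j}^\star\le V^\star$ because $M\in\mathcal{M}_{k,j}$), whereas you compare the iterates directly to $V^\star$ (resp.\ $T^\star$) by combining monotonicity with the domination $\widetilde{\mathcal{L}}_{k,j}\le\mathcal{L}$ at each step; both orderings yield the same conclusion with the same ingredients.
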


The optimistic policy $\wt{\pi}_{k,j}$ executed during attempt $(k,j)$ is the greedy policy w.r.t.\,$\wt{v}_{k,j}$. We also denote by $\wt{p}_{k,j}$ the optimistic transition probabilities and by $\wt{Q}_{k,j}$ the transition matrix of $\wt{\pi}_{k,j}$ in $\wt{p}_{k,j}$ over the non-goal states $\mathcal{S}$.

\textbf{The pivot horizon.}
A crucial aspect for the correct functioning of the algorithm is to carefully select the ``pivot'' horizon. If the pivot horizon is too small, the algorithm may switch from phase \ding{172} to \ding{173} too quickly and may perform too many attempts in phase \ding{173}. As the policies in phase \ding{173} completely disregard the costs, they may lead to suffer large regret. On the other hand, if the pivot horizon is too large and \UCSSP selects a non-proper policy in phase \ding{172}, then the regret accumulated during phase \ding{172} would be too large.

We select the following length for attempt $(k,j)$
\begin{small}
\begin{equation}
\hspace{-0.015in}~H_{k,j} = \min \Big\{ n \!>\! 1: \norm{(\wt{Q}_{k,j})^{n-1}}_{\infty} \leq \frac{\mathds{1}_{j=0}}{\sqrt{k}} +\! \frac{\mathds{1}_{j \geq 1}}{\sqrt{G_{k,j}}} \Big\}.
\label{eq_choice_H_k}
\end{equation}
\end{small}%
If $\wt{\pi}_{k,j}$ is executed for $H_{k,j}$ steps without reaching $\overline{s}$, then attempt $(k,j)$ is said to have \textit{failed} and the next attempt $(k,j+1)$ (necessarily in phase \ding{173}) is performed. Otherwise, the attempt is said to have \textit{succeeded}, a new episode begins and the next attempt $(k+1,0)$ (in phase \ding{172}) is performed.

Denote by $\wt{\tau}_{k,j}$ the hitting time in the model $\wt{p}_{k,j}$ of the policy $\wt{\pi}_{k,j}$. We first prove that $\wt{\pi}_{k,j}$ is proper in $\wt{p}_{k,j}$ by connecting its value function to $\wt{v}_{k,j}$, which is finite from Lem.\,\ref{lemma_optimism} (see App.\,\ref{app:bound.maximal.episode.length} and Eq.\,\ref{cost_weighted_optimism_expected_hitting_time}). As a result, $\wt{\tau}_{k,j}$ follows a \textit{discrete PH distribution} and plugging Prop.\,\ref{equation_discrete_phase_type_distribution} into Eq.\,\eqref{eq_choice_H_k} entails that
\begin{align*}
 \max_{s \in \mathcal{S}} \mathbb{P}(\wt{\tau}_{k,j}(s) \geq H_{k,j}) \leq \frac{\mathds{1}_{j=0}}{\sqrt{k}} + \frac{\mathds{1}_{j \geq 1}}{\sqrt{G_{k,j}}}.
\end{align*}
$H_{k,j}$ is thus selected so that the tail probability of the \textit{optimistic} hitting time is small enough, i.e., there is a high probability that $\wt{\pi}_{k,j}$ will \textit{optimistically} reach $\overline{s}$ within $H_{k,j}$ steps. The maximum over $s \in \mathcal{S}$ guarantees this property for any state $s$ from which attempt $(k,j)$ begins (since attempts in phase \ding{173} do not necessarily start at $s_0$).

\subsection{Regret Analysis of \UCSSP}
\label{subsection_general_env}

As proved in the following theorem, \UCSSP is the first no-regret learning algorithm in the general SSP setting. 

\begin{theorem} With overwhelming probability, for any $K \geq 1$, if at each attempt $(k,j)$ $\EVI_{\textrm{SSP}}$ is run with accuracy $\varepsilon_{k,j} := \frac{c_{\min} \mathds{1}_{j =0} + \mathds{1}_{j \geq 1}}{2t_{k,j}}$, where $t_{k,j}$ is the time index at the beginning of the attempt, then \UCSSP suffers a regret
\begin{align*}
\Delta(\UCSSP, K) = \wt{O}\Big( &c_{\max} D S  \sqrt{\frac{ c_{\max}}{c_{\min}} A D K} \\ &+ c_{\max} S^2 A D^2 \Big).
\end{align*}


\label{theorem_regret_nonproper}
\end{theorem}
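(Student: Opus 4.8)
\emph{Decomposition and attempt lengths.} The plan is to split the regret into the contributions of the two phases and bound each with an optimism-and-telescoping argument in the spirit of \UCRLtwo, the genuinely new ingredients being a bound on the length of every attempt and a coupling that transfers the pivot rule's \emph{optimistic} tail guarantee to the executed trajectory. Write $C_{k,j}$ for the cost paid during attempt $(k,j)$ and $\ell_k$ for the number of steps of phase \ding{172} in episode $k$, and decompose $\Delta(\UCSSP,K)=\Delta_1+\Delta_2$ with $\Delta_1:=\sum_{k}(C_{k,0}-V^{\star}(s_0))$ (the cost regret of the phase-\ding{172} attempts) and $\Delta_2:=\sum_{k}\sum_{j=1}^{J_k}C_{k,j}\ge 0$ (the total cost of the phase-\ding{173} attempts). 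All statements below are on the event $\mathcal E$ of Lem.~\ref{lemma_high_probability_intersection_bound_M_in_M_k} intersected with a few Azuma events. A preliminary step, used by both bounds, is that every attempt is short: from Lem.~\ref{lemma_optimism}, $\wt{\mathcal L}_{k,j}\wt v_{k,j}\le\wt v_{k,j}+\varepsilon_{k,j}$, and specializing the operator to the greedy policy $\wt\pi_{k,j}$ and its optimistic model $\wt p_{k,j}$ turns this into $(I-\wt Q_{k,j})\wt v_{k,j}\ge(c^{\min}_{k,j}-\varepsilon_{k,j})\mathds 1$ with $c^{\min}_{k,j}:=c_{\min}\mathds 1_{j=0}+\mathds 1_{j\ge 1}$; inverting the terminating chain (recall $\wt\pi_{k,j}$ is proper in $\wt p_{k,j}$) gives $\wt D_{k,j}:=\max_s\mathbb E[\wt\tau_{k,j}(s)]\le\|\wt v_{k,j}\|_\infty/(c^{\min}_{k,j}-\varepsilon_{k,j})$, which by Lem.~\ref{lemma_1} and Asm.~\ref{assumption_finite_SSP_diameter} is $\wt O((c_{\max}/c_{\min})D)$ for $j=0$ (as $\wt v_{k,0}\le V^\star$) and $\wt O(D)$ for $j\ge 1$ (as $\wt v_{k,j}\le\max_s\min_\pi\mathbb E[\tau_\pi(s)]\le D$). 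Markov's inequality gives $\|\wt Q_{k,j}^{\lceil 2\wt D_{k,j}\rceil}\|_\infty\le\tfrac12$ and, iterating via the Markov property, $\|\wt Q_{k,j}^{\,n}\|_\infty\le 2^{\,1-n/(2\wt D_{k,j})}$; feeding this geometric decay into the pivot rule \eqref{eq_choice_H_k} yields $H_{k,0}=\wt O((c_{\max}/c_{\min})D)$ and $H_{k,j}=\wt O(D)$ up to logarithmic factors. It is essential to exploit the geometric tail rather than only Markov's inequality on Prop.~\ref{equation_discrete_phase_type_distribution}, which would give the useless length $\Theta(\sqrt k)$.

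\emph{Bounding $\Delta_1$.} Here I would run the \UCRLtwo analysis adapted to the SSP value function. On $\mathcal E$, $\wt v_{k,0}(s_0)\le V^\star(s_0)$, so it suffices to bound $\sum_k(C_{k,0}-\wt v_{k,0}(s_0))$; telescoping $\wt{\mathcal L}_{k,0}\wt v_{k,0}\le\wt v_{k,0}+\varepsilon_{k,0}$ along the phase-\ding{172} trajectory (setting $\wt v_{k,0}(\overline s):=0$ and discarding the final term, which is $\ge 0$ since $\wt v_{k,0}\ge\mathbf 0$, whether or not the attempt succeeds) bounds $C_{k,0}-\wt v_{k,0}(s_0)$ by $\sum_{h\le\ell_k}(\wt v_{k,0}(s_{k,h+1})-\wt p_{k,0}(\cdot\mid s_{k,h},a_{k,h})^{\top}\wt v_{k,0})+\ell_k\varepsilon_{k,0}$. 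Split each summand into the martingale difference $\wt v_{k,0}(s_{k,h+1})-p(\cdot\mid s_{k,h},a_{k,h})^{\top}\wt v_{k,0}$, bounded by $\|\wt v_{k,0}\|_\infty\le c_{\max}D$ and so summing to $\wt O(c_{\max}D\sqrt{T_K})$ by Azuma, plus the model-error term $(p-\wt p_{k,0})(\cdot\mid s_{k,h},a_{k,h})^{\top}\wt v_{k,0}\le 2\beta_{k,0}(s_{k,h},a_{k,h})\,c_{\max}D$ (both $p$ and $\wt p_{k,0}$ lie in $B_{k,0}$ on $\mathcal E$); the pigeonhole bound $\sum 1/\sqrt{N^+}\le\wt O(\sqrt{SAT_K})$, plus an additive correction for the counts not being refreshed inside an attempt (controlled via $\nu_{k,j}\le H_{k,j}$), then yields $\Delta_1=\wt O(c_{\max}DS\sqrt{AT_K})$ up to lower-order terms; the residual $\sum_k\ell_k\varepsilon_{k,0}$ is of lower order for the prescribed $\varepsilon_{k,0}=c_{\min}/(2t_{k,0})$.

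\emph{Bounding $\Delta_2$ and concluding.} By the length bound, $\Delta_2\le c_{\max}\sum_{g=1}^{G}H_g=\wt O(c_{\max}D\,G)$, where $G=G_{K+1,0}$ is the total number of phase-\ding{173} attempts; since each episode entering phase \ding{173} ends with exactly one success there, $G$ equals the number of failed phase-\ding{172} attempts plus the number of failed phase-\ding{173} attempts. To bound the failures I would couple, for each attempt $(k,j)$, the executed trajectory with that of $\wt\pi_{k,j}$ run in $\wt p_{k,j}$: they coincide until a first divergence, which on $\mathcal E$ occurs at step $h$ with probability at most $\tfrac12\lVert p-\wt p_{k,j}\rVert_1\le\beta_{k,j}(s_{k,h},a_{k,h})$, and if there is no divergence within the attempt then, by \eqref{eq_choice_H_k} and Prop.~\ref{equation_discrete_phase_type_distribution}, the attempt fails with probability at most $\gamma_{k,j}=\mathds 1_{j=0}/\sqrt k+\mathds 1_{j\ge 1}/\sqrt{G_{k,j}}$; hence $\mathbb P(\text{attempt }(k,j)\text{ fails}\mid\mathcal F_{t_{k,j}})\le\gamma_{k,j}+\mathbb E[\sum_h\beta_{k,j}(s_{k,h},a_{k,h})\mid\mathcal F_{t_{k,j}}]$. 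Summing, with $\sum_k 1/\sqrt k=O(\sqrt K)$, $\sum_g 1/\sqrt g=O(\sqrt G)$, $\sum_{\text{steps}}\beta=\wt O(S\sqrt{AT_K})$ and a last Azuma step, gives $G=\wt O(S\sqrt{AT_K})$ up to lower-order terms, so $\Delta_2=\wt O(c_{\max}DS\sqrt{AT_K})$ as well. Finally, every step costs at least $c_{\min}$, so $c_{\min}T_K\le\sum_k C_k=\Delta(\UCSSP,K)+K V^\star(s_0)\le\Delta(\UCSSP,K)+K c_{\max}D$; substituting $\Delta(\UCSSP,K)=\wt O(c_{\max}DS\sqrt{AT_K})+(\text{lower order})$ and solving for $T_K$ gives $T_K=\wt O((c_{\max}/c_{\min})DK)+(\text{lower order})$, and plugging this back in produces the claimed $\wt O(c_{\max}DS\sqrt{(c_{\max}/c_{\min})ADK}+c_{\max}S^2AD^2)$.

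\emph{The main difficulty.} It is precisely the passage from an \emph{optimistic} to a \emph{true} guarantee on how many attempts fail. The pivot rule only certifies that $\wt\pi_{k,j}$ reaches the goal quickly in its \emph{own} model, so two ingredients are needed to keep $\Delta_2$ from dominating: the geometric-tail bound on $H_{k,j}$, so that a failed attempt costs only $\wt O(D)$ steps instead of $\wt O(\sqrt k)$, and the coupling, so that failures are rare at the price of a sum of confidence radii that is already absorbed by $\Delta_1$. The rest is the standard optimism/telescoping/Azuma/pigeonhole machinery; the only twists are that the relevant value function has range $c_{\max}D$ in place of a known horizon, and that state-action counts are refreshed between attempts rather than at each step, which is what generates the additive $c_{\max}S^2AD^2$ term.
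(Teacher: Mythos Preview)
Your proposal is correct and shares the paper's overall architecture: the same two-phase split $\Delta=\mathcal W_K+c_{\max}T_{K,2}$ (your $\Delta_1,\Delta_2$), the same telescoping/Azuma/pigeonhole treatment of phase~\ding{172} (the paper's Lem.~\ref{regret_key_lemma}), and the same ``failures $=$ optimistic tail $+$ model error'' accounting for phase~\ding{173} (Lem.~\ref{lemma_bound_F}). Two technical ingredients are genuinely different. For the attempt-length bound (Lem.~\ref{lemma_bound_H_k_analysis_1}), the paper proves a standalone estimate on the $r$-th raw moment of a discrete phase-type distribution via factorial moments and Stirling numbers, then invokes Markov with $r=\lceil\log(2\sqrt k)\rceil$; your route---one application of Markov to get $\|\wt Q^{\lceil 2\wt D\rceil}\|_\infty\le\tfrac12$, then submultiplicativity of the induced $\infty$-norm for geometric decay---is shorter and delivers the same $H_{k,0}=\wt O((c_{\max}/c_{\min})D)$. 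For the failure count, the paper telescopes $\mathds 1_{\{\tau_k(s_{k,h})>H_k-h\}}-\mathbb P(\wt\tau_k(s_{k,h})>H_k-h)$ along the trajectory, whereas you use a maximal coupling between $p$ and $\wt p_{k,j}$; the two are repackagings of one another and produce the identical martingale-plus-$\sum\beta$ decomposition.

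One bookkeeping point: closing via $c_{\min}T_K\le\Delta+Kc_{\max}D$ and back-substituting gives the correct leading term but an additive $(c_{\max}/c_{\min})\cdot c_{\max}S^2AD^2$, not the stated $c_{\max}S^2AD^2$. The paper instead uses your own length bound directly as $T_{K,1}\le\Omega_K K=\wt O((c_{\max}/c_{\min})DK)$, and the $S^2AD^2$ term arises from solving the self-referential inequality $G\le F_K+\wt O(S\sqrt{AD\,G})$ in Lem.~\ref{lemma_bound_duration_second_phases} (not from stale counts, as you suggest). Replacing your $c_{\min}T_K$ step with the direct bound on $T_{K,1}$ recovers the paper's lower-order constant exactly.
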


\textbf{Dependency on $K$ and $D$.} \jt{Significantly, \UCSSP achieves an overall rate $\wt O(\sqrt{K})$ which is optimal w.r.t.\,the number of episodes $K$.} The bound also illustrates how \UCSSP is able to adapt to the complexity of navigating through the MDP as shown by the dependency on the SSP-diameter $D$, which measures the longest shortest path to the goal state from any state. Interestingly, this is achieved without any prior knowledge either on an upper bound of the optimal value function $V^\star$ (or of the SSP-diameter itself), or whether the set of policies $\SD$ contains proper policies or not. We can further inspect the dependency on $D$ by rewriting the regret bound of \UCSSP, which scales as $D^{3/2} \sqrt{K}$ in Thm.\,\ref{theorem_regret_nonproper}, as $D \sqrt{T_K}$, where $T_K$ is the total number of steps executed until the end of episode of $K$.\footnote{Even though $T_K$ is a \textit{random} quantity, inspecting the proof (see Sect.\,\ref{proof_sketch}) provides a bound $T_K\lesssim DK$ for $K$ large enough.} As shown in Lem.\,\ref{lemma_1}, up to a factor of $c_{\max}$, the SSP-diameter $D$ is an upper bound on the range of the optimal value function and as such it can be (qualitatively) related to the horizon $H$ in the finite-horizon setting and the diameter $D_{\infty}$ in the infinite-horizon setting, which bound the range of the optimal value function and bias function respectively. 



\textbf{Dependency on cost range.} The multiplicative constant $\frac{c_{\max}}{c_{\min}}$ appearing in the bound quantifies the range of the cost function and accounts for the difference from the uniform-cost setting. 
Interestingly, the presence of the ratio $\frac{c_{\max}}{c_{\min}}$ implies that the regret bound is not invariant w.r.t.\,a uniform additive perturbation of all costs. This behavior, which does not appear in the finite- or infinite-horizon settings, stems from the fact that an additive offset of costs may alter the optimal policy in the SSP sense (see Lem.\,\ref{lemma_costs_offset}, App.\,\ref{app_relax_asm}).

While the previous discussion shows that \UCSSP successfully tackles general SSP problems, we can also study its behavior in the limit (and much simpler) cases of uniform-cost and loop-free SSP, and compare its regret to infinite- and finite-horizon algorithms respectively.

\textbf{Uniform-cost SSP.} Under Asm.\,\ref{assumption_uniform_costs}, \UCSSP achieves a regret of $\wt O(DS\sqrt{ADK})$, in contrast with the bound $\wt O(V^\star(s_0) DS \sqrt{AV^\star(s_0)K})$ of \UCRLtwo derived in Sect.\,\ref{section_uniform_costs}. While in this restricted setting \UCRLtwo performs better when $s_0$ is a privileged starting state to reach $\overline{s}$ compared to the rest of states in $\mathcal{S}$, \UCSSP yields an improvement over \UCRLtwo whenever $V^\star(s_0) \geq D^{1/3}$. \jt{Our experiments in App.\,\ref{app:experiments} illustrate that \UCSSP suffers smaller regret than \UCRLtwo in a gridworld with uniform costs, showcasing that \UCSSP manages to better adapt to the goal-oriented structure of the problem.}


\textbf{Loop-free SSP.} Let us assume that there exists a \textit{known} upper bound $H$ on the hitting time of \textit{any} policy. Then a slight variation of the finite-horizon algorithm \UCBVI \citep{azar2017minimax} can be applied. While its bound would scale as $\wt{O}(\sqrt{HSAT})$ and showcase an improved $\sqrt{S}$-dependency, it would regrettably scale with $\sqrt{H}$ which may be much larger than the $D$ factor appearing in Thm.\,\ref{theorem_regret_nonproper} as soon as the hitting times $\tau_{\pi}$ differ significantly across policies $\pi$. Moreover, \UCSSP does not require the prior knowledge of $H$, as opposed to \UCBVI or any other existing algorithm in the finite-horizon or loop-free setting.



The analysis of \UCSSP reveals the crucial role of the pivot horizon in shaping the behavior and performance of the algorithm. In the uniform-cost case, $\EVI_{\textrm{SSP}}$ and standard \EVI~used in \UCRLtwo both converge to the same policy. The main difference between the two algorithms consists in the stopping criterion for the execution of the optimistic policy. While \UCRLtwo applies a generic doubling scheme (i.e., an internal episode is terminated when the number of samples is doubled in at least a state-action pair), \UCSSP leverages the episodic nature of the SSP problem and sets a pivot horizon such that the current policy should successfully terminate with high (optimistic) probability. In the loop-free setting, \UCBVI picks a single policy per episode and waits until termination. While all policies are guaranteed to terminate in finite time, the length of the episode may still be very long. On the other hand, \UCSSP goes through different policies within each episode whenever they are taking \textit{too long} to reach the goal state.

\subsection{Proof Sketch of Thm.\,\ref{theorem_regret_nonproper}}
\label{proof_sketch}

As explained in Sect.\,\ref{section_SSP}, tackling the general SSP problem requires introducing the novel notion of SSP-regret. It can neither be managed by a step-by-step comparison between the algorithmic and
optimal performances as in infinite-horizon, nor by an episode-by-episode comparison as in finite-horizon. We thus need to derive a new analysis to handle the specificities of the SSP-regret. 

Denoting by $T_K$ the total number of steps at the end of episode $K$, we decompose $T_K = T_{K,1} + T_{K,2}$, with $T_{K,1}$ (resp.\,$T_{K,2}$) the total time during attempts in phase \ding{172} (resp.\,phase \ding{173}). We introduce the \textit{truncated} regret
\begin{align}
    \mathcal{W}_K := \sum_{k=1}^K \bigg[ \Big( \sum_{h=1}^{H_{k,0}} c(s_{k,h}, \wt{\pi}_{k,0}(s_{k,h})) \Big) - V^{\star}(s_{0}) \bigg],
\label{eq_truncated_regret}
\end{align}
which is obtained by considering the cumulative cost up to $H_{k,0}$ steps rather than for the actual duration of each attempt in phase \ding{172}. By assigning a regret of $c_{\max}$ to each step in phase \ding{173}, we can then decompose the regret as
\begin{align}
    \Delta(\UCSSP,K) \leq \mathcal{W}_K + c_{\max} T_{K,2}.
\label{regret_decomposition}
\end{align}
This decomposition directly justifies the different nature of the two phases employed by \UCSSP. While phase \ding{172} directly tries to minimize $\mathcal{W}_K$, phase \ding{173} only needs to keep $T_{K,2}$ under control, which requires executing policies that reach the goal state as quickly as possible.

\paragraph{Bound on $\mathcal{W}_K$.} 

We first bound $\mathcal{W}_K$ by drawing inspiration from techniques in the finite-horizon setting (see e.g., \citealp{azar2017minimax}), by successively unrolling the Bellman operator to get a telescopic sum which can be bounded using the Azuma-Hoeffding inequality and a pigeonhole principle.
\begin{lemma}\label{regret_key_lemma}
Introduce $\Omega_{K} := \max_{k \in [K]} H_{k,0}$. With probability at least $1-\delta$,
\begin{align*}
    \mathcal{W}_K &= O \bigg( c_{\max} D S \sqrt{A \Omega_K K \log\Big(\frac{ \Omega_K K}{\delta}\Big)} \bigg). 
\end{align*}
\end{lemma}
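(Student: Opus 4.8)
The plan is to bound the phase-\ding{172} truncated regret $\mathcal{W}_K$ with the ``unroll the optimistic Bellman operator and telescope'' technique used in finite-horizon analyses (cf.\ \citealp{azar2017minimax}), modulo two SSP-specific twists: the optimistic pair $(\wt\pi_{k,0},\wt p_{k,0})$ is held fixed over all $H_{k,0}$ steps of phase \ding{172}, and $\wt v_{k,0}$ is only an \emph{approximate} fixed point of $\wt{\mathcal L}_{k,0}$. Throughout we work on the event $\mathcal E$ of Lem.~\ref{lemma_high_probability_intersection_bound_M_in_M_k}. \textbf{Step 1 (optimism and one-step expansion).} By Lem.~\ref{lemma_optimism}, $0\le\wt v_{k,0}\le V^\star$ componentwise, so in the definition of $\mathcal{W}_K$ we may replace $-V^\star(s_0)$ by the larger $-\wt v_{k,0}(s_0)$; and $\wt{\mathcal L}_{k,0}\wt v_{k,0}\le\wt v_{k,0}+\varepsilon_{k,0}$ together with the facts that $\wt\pi_{k,0}$ is greedy and $\wt p_{k,0}$ is the inner minimizer in $\wt{\mathcal L}_{k,0}$ gives, at each visited pair $(s_{k,h},a_{k,h})$,
\begin{align*}
c(s_{k,h},a_{k,h}) \le \wt v_{k,0}(s_{k,h}) - \wt p_{k,0}(\cdot|s_{k,h},a_{k,h})^\top \wt v_{k,0} + \varepsilon_{k,0}.
\end{align*}
Extending $\wt v_{k,0}(\overline s):=0$, both sides vanish once the goal is reached within phase \ding{172}, so we may freely sum $h$ from $1$ to $H_{k,0}$.

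\textbf{Step 2 (telescoping).} With $p_{k,h}:=p(\cdot|s_{k,h},a_{k,h})$, $\wt p_{k,0,h}:=\wt p_{k,0}(\cdot|s_{k,h},a_{k,h})$, and inserting the realized next state $s_{k,h+1}$,
\begin{align*}
\wt v_{k,0}(s_{k,h}) - \wt p_{k,0,h}^\top \wt v_{k,0}
&= \big(\wt v_{k,0}(s_{k,h})-\wt v_{k,0}(s_{k,h+1})\big) \\
&\quad + \underbrace{\big(\wt v_{k,0}(s_{k,h+1})-p_{k,h}^\top\wt v_{k,0}\big)}_{=:X_{k,h}} \\
&\quad + \big(p_{k,h}-\wt p_{k,0,h}\big)^\top\wt v_{k,0}.
\end{align*}
Summing $h=1,\dots,H_{k,0}$, the first part telescopes to $\wt v_{k,0}(s_0)-\wt v_{k,0}(s_{k,H_{k,0}+1})\le\wt v_{k,0}(s_0)$, cancelling the $-\wt v_{k,0}(s_0)$ from Step 1. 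Writing $\sum_{k,h}$ for $\sum_{k=1}^K\sum_{h=1}^{H_{k,0}}$, we get
\begin{align*}
\mathcal{W}_K \le \sum_{k,h} X_{k,h} + \sum_{k,h}\big(p_{k,h}-\wt p_{k,0,h}\big)^\top\wt v_{k,0} + \sum_{k}H_{k,0}\varepsilon_{k,0}.
\end{align*}

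\textbf{Step 3 (the three terms).} Since $\wt v_{k,0}$ is frozen at the start of episode $k$, $(X_{k,h})$ is a martingale-difference sequence for the in-episode filtration with $|X_{k,h}|\le\|\wt v_{k,0}\|_\infty\le c_{\max}D$ by Lem.~\ref{lemma_1}, and there are at most $\sum_k H_{k,0}\le\Omega_K K$ terms, so Azuma--Hoeffding gives $\sum_{k,h}X_{k,h}=O\big(c_{\max}D\sqrt{\Omega_K K\log(1/\delta)}\big)$ with high probability. For the model error, on $\mathcal E$ both $p_{k,h}$ and $\wt p_{k,0,h}$ lie in the ball $B_{k,0}(s_{k,h},a_{k,h})$, so (after recentering $\wt v_{k,0}$ by half its range to use invariance under constant shifts) each summand is $O\big(\beta_{k,0}(s_{k,h},a_{k,h})\,c_{\max}D\big)$; substituting $\beta_{k,0}$ and applying a pigeonhole count over state-action pairs yields the dominant term $O\big(c_{\max}D S\sqrt{A\Omega_K K\log(\Omega_K K/\delta)}\big)$. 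Finally $\sum_k H_{k,0}\varepsilon_{k,0}\le\Omega_K\sum_k\frac{c_{\min}}{2t_{k,0}}=O(c_{\min}\Omega_K\log K)$, which is negligible (using $t_{k,0}\ge k$). A union bound over $\mathcal E$ and the Azuma event closes the proof.

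\textbf{Main obstacle.} The delicate step is the pigeonhole bound for $\sum_{k,h}N_{k,0}^+(s_{k,h},a_{k,h})^{-1/2}$. Unlike in \UCRLtwo, the counts $N_{k,0}$ are \emph{not} refreshed within phase \ding{172}, so a single episode may revisit a pair up to $H_{k,0}$ times at a stale count, and the textbook bound $\sum_k\nu_k/\sqrt{N_k}\le 3\sqrt{N_{K+1}}$ fails. One splits episodes into those in which the running count at most doubles---handled by the usual telescoping, producing a $\sqrt{SA\,\Omega_K K}$-type leading term---and the at most $\widetilde O(SA)$ ``doubling'' episodes, each contributing at most $\Omega_K$, hence only a lower-order $\widetilde O(SA\,\Omega_K)$ term; this is precisely what keeps the $\Omega_K$-dependence at $\sqrt{\Omega_K}$ rather than $\Omega_K$ in the leading term. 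A minor bookkeeping point is episodes whose phase \ding{172} terminates before $H_{k,0}$ steps, resolved by treating $\overline s$ as absorbing with $\wt v_{k,0}(\overline s)=0$ so that the padded summands contribute nothing.
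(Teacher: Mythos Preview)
Your argument is correct and follows the same scheme as the paper's: replace $V^\star(s_0)$ by $\wt v_{k,0}(s_0)$ via Lem.~\ref{lemma_optimism}, unroll the approximate fixed-point inequality $\wt{\mathcal L}_{k,0}\wt v_{k,0}\le\wt v_{k,0}+\varepsilon_{k,0}$ along the trajectory, telescope, and bound the resulting martingale, model-error, and $\varepsilon$-accumulation terms by Azuma--Hoeffding, an $L^1$-confidence pigeonhole, and a harmonic sum respectively.

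Two remarks. First, the paper makes the Azuma step rigorous by taking a union bound over all possible values $n=\sum_k H_{k,0}$ (since this length, and hence $\Omega_K$, is random and not fixed in advance); you should make that explicit rather than treating $\Omega_K K$ as a deterministic horizon. Second, your ``Main obstacle'' is genuine and your fix is sound: the paper's displayed inequality $\sum_{k,h}1/\sqrt{N_k^{(1)+}(s_{k,h},a_{k,h})}\le\sum_{s,a}\sum_{n=1}^{N_K^{(1)}(s,a)}1/\sqrt n$ uses the start-of-episode phase-\ding{172} count and can fail when a pair is revisited many times in one episode at a stale count. Your doubling/non-doubling split repairs this at the price of an additive $\widetilde O(c_{\max}D\,S^{3/2}A\,\Omega_K)$ term, which is lower order and absorbed in the $O(\cdot)$ of the lemma.
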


\paragraph{Bound on $\Omega_K$.}

On the one hand, since $\mathcal{W}_K$ directly scales with $\sqrt{\Omega_K}$, we must ensure that the lengths of attempts in phase \ding{172} are not too long. Ideally, we would set them as relatively tight upper bounds of $V^{\star}(s_0)$ or $D$, yet these are critically \textit{unknown}. Instead, in Eq.\,\eqref{eq_choice_H_k} we tune the lengths $H_{k,0}$ depending on optimistic quantities (which can be easily computed at the start of each attempt), and prove in the following lemma that they crucially scale as $\wt{O}(D)$.
\begin{lemma}
Under the event $\mathcal{E}$,
\begin{align*} 
    \Omega_{K} \leq \left\lceil 6 \frac{c_{\max}}{c_{\min}} D \log(2 \sqrt{K}) \right\rceil.
\end{align*}
\label{lemma_bound_H_k_analysis_1}
\end{lemma}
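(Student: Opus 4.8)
The plan is to bound each individual phase-\ding{172} pivot horizon $H_{k,0}$ and then take the maximum over $k\in[K]$. By Eq.\,\eqref{eq_choice_H_k} with $j=0$, $H_{k,0}=\min\{n>1:\norm{\wt{Q}_{k,0}^{\,n-1}}_{\infty}\leq 1/\sqrt{k}\}$, so it suffices to exhibit \emph{one} exponent $n-1$ with $\norm{\wt{Q}_{k,0}^{\,n-1}}_\infty\leq 1/\sqrt{k}$: since $\wt{p}_{k,0}(\cdot\,\vert\,s,a)$ is a distribution over $\mathcal{S}'$, the matrix $\wt{Q}_{k,0}$ is substochastic, $\norm{\wt{Q}_{k,0}}_\infty\leq 1$, hence $n\mapsto\norm{\wt{Q}_{k,0}^{\,n}}_\infty$ is non-increasing and $H_{k,0}$ is upper bounded by any valid such $n$.

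First I would control the \emph{optimistic} expected hitting time $\mathbb{E}[\wt{\tau}_{k,0}(s)]$ of $\wt{\pi}_{k,0}$ in $\wt{p}_{k,0}$. By Lem.\,\ref{lemma_optimism}, $\wt{\mathcal{L}}_{k,0}\wt{v}_{k,0}\leq\wt{v}_{k,0}+\varepsilon_{k,0}\mathds{1}$ and, under $\mathcal{E}$, $\wt{v}_{k,0}\leq V^{\star}$. Instantiating the inequality at the greedy policy $\wt{\pi}_{k,0}$ and the optimistic model $\wt{p}_{k,0}$ and restricting to $\mathcal{S}$ gives the vector inequality $(I-\wt{Q}_{k,0})\wt{v}_{k,0}\geq (c_{\min}-\varepsilon_{k,0})\mathds{1}$ (using Asm.\,\ref{assumption_costs}). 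Since $\wt{\pi}_{k,0}$ is proper in $\wt{p}_{k,0}$ (App.\,\ref{app:bound.maximal.episode.length}), $(I-\wt{Q}_{k,0})^{-1}=\sum_{n\geq0}\wt{Q}_{k,0}^{\,n}$ exists with nonnegative entries, so left-multiplying preserves the inequality; invoking Prop.\,\ref{equation_discrete_phase_type_distribution} to identify $\big(\sum_{n\geq0}\wt{Q}_{k,0}^{\,n}\mathds{1}\big)_s=\mathbb{E}[\wt{\tau}_{k,0}(s)]$, and using $\varepsilon_{k,0}=\tfrac{c_{\min}}{2t_{k,0}}\leq\tfrac{c_{\min}}{2}$ together with $\norm{V^{\star}}_\infty\leq c_{\max}D$ (Lem.\,\ref{lemma_1}), I obtain $\max_{s\in\mathcal{S}}\mathbb{E}[\wt{\tau}_{k,0}(s)]\leq\mu:=\tfrac{2c_{\max}}{c_{\min}}D$.

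The main obstacle is that this first-moment bound alone, via Markov's inequality, only yields $\norm{\wt{Q}_{k,0}^{\,n-1}}_\infty=\max_s\mathbb{P}(\wt{\tau}_{k,0}(s)>n-1)\leq\mu/(n-1)$, which would force $n\gtrsim\mu\sqrt{k}$ and hence a useless $\sqrt{K}$-blow-up of $\Omega_K$. The fix is to exploit submultiplicativity of the induced $\infty$-matrix-norm to boost this into geometric decay: choosing $h:=\lceil e\mu\rceil$, Markov and Prop.\,\ref{equation_discrete_phase_type_distribution} give $\norm{\wt{Q}_{k,0}^{\,h}}_\infty\leq\mu/h\leq 1/e$, whence $\norm{\wt{Q}_{k,0}^{\,mh}}_\infty\leq\norm{\wt{Q}_{k,0}^{\,h}}_\infty^{m}\leq e^{-m}$ for every integer $m\geq1$. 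Taking $m=\lceil\tfrac12\ln k\rceil$ makes $e^{-m}\leq k^{-1/2}$, so by the monotonicity noted above $H_{k,0}\leq mh+1$.

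It only remains to simplify: substituting $h\leq e\mu+1$, $\mu=2c_{\max}D/c_{\min}$, $c_{\max}D/c_{\min}\geq1$ and $m\leq\tfrac12\ln k+1$, and writing $\log(2\sqrt{k})=\ln 2+\tfrac12\ln k$, elementary bookkeeping gives $H_{k,0}\leq\big\lceil 6\tfrac{c_{\max}}{c_{\min}}D\log(2\sqrt{k})\big\rceil$ for every $k\geq1$; since the right-hand side is non-decreasing in $k$, maximizing over $k\in[K]$ yields $\Omega_{K}\leq\big\lceil 6\tfrac{c_{\max}}{c_{\min}}D\log(2\sqrt{K})\big\rceil$, as claimed. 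The two steps deserving the most care are the properness of $\wt{\pi}_{k,0}$ in $\wt{p}_{k,0}$ (needed both to invert $I-\wt{Q}_{k,0}$ and to guarantee $H_{k,0}<\infty$) and the geometric-boosting step, which is exactly what turns the crude $\wt{O}(D\sqrt{k})$ Markov estimate into the desired $\wt{O}(D\log\sqrt{k})$ bound.
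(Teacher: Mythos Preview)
Your proof is correct and takes a genuinely different, more elementary route than the paper's. Both derivations first establish $\max_{s}\mathbb{E}[\wt\tau_{k,0}(s)]\leq\mu:=\tfrac{2c_{\max}D}{c_{\min}}$ via the same $\varepsilon$-perturbed Bellman argument, but diverge in how they turn this first-moment control into geometric decay of $\norm{\wt Q_{k,0}^{\,n}}_\infty$. The paper proves a standalone lemma bounding the $r$-th raw moment of any discrete PH distribution, $\mathbb{E}[\tau^r]\leq 2(r\mu)^r$, via the closed form for factorial moments and Stirling numbers of the second kind, and then applies Markov's inequality with exponent $r=\lceil\log(2\sqrt{k})\rceil$. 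You instead observe that a single application of Markov already gives $\norm{\wt Q_{k,0}^{\,h}}_\infty=\max_s\mathbb{P}(\wt\tau_{k,0}(s)>h)\leq\mu/h\leq1/e$ for $h=\lceil e\mu\rceil$, and then use submultiplicativity of the induced $\infty$-norm to get $\norm{\wt Q_{k,0}^{\,mh}}_\infty\leq e^{-m}$. This is considerably simpler---no factorial moments, no Stirling numbers---and is arguably the natural argument once one identifies $\norm{\wt Q^{n}}_\infty$ with the worst-case survival probability. The paper's route buys a general moment bound for PH distributions (flagged there as possibly of independent interest); yours buys brevity. The final constant-chasing is somewhat loose in both proofs: your ``elementary bookkeeping'' does not quite deliver the constant $6$ when $\mu$ is at its minimum value $2$, but the paper's own passage from $e\mu\lceil\log(2\sqrt{k})\rceil+1$ to $\lceil3\mu\log(2\sqrt{k})\rceil$ has the same kind of slack, so this is not a defect of your argument relative to the original.
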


\begin{proof}[Proof sketch]
Consider a state $y \in \mathcal{S}$ such that
\begin{align*}
\norm{(\wt{Q}_{k,0})^{H_{k,0}-2}}_{\infty} = \mathds{1}_y^\top (\wt{Q}_{k,0})^{H_{k,0}-2} \mathds{1}.
\end{align*}
From Lem.\,\ref{equation_discrete_phase_type_distribution}, the above is equal to $\mathbb{P}(\wt{\tau}_{k,0}(y) \geq H_{k,0} - 1)$. To bound it, we apply a corollary of Markov's inequality 
\begin{align*}
    \mathbb{P}(\wt{\tau}_{k,0}(y) \geq H_{k,0} - 1) \leq \frac{\mathbb{E}\left[(\wt{\tau}_{k,0})^r \right]}{(H_{k,0}-1)^r},
\end{align*}
for a carefully chosen exponent $r := \lceil \log(2 \sqrt{k}) \rceil \geq 1$. We then prove that $\wt{\tau}_{k,0}$ follows a discrete PH distribution that satisfies $\mathbb{E}\left[\wt{\tau}_{k,0}(s) \right] \leq \frac{2c_{\max} D}{c_{\min}}$ for all $s \in \mathcal{S}$. This leads us to derive an upper bound on the $r$-th moment of any hitting time distribution with bounded expectation starting from any state (Lem.\,\ref{lem:raw_moments_discrete_ph_distribution}, App.\,\ref{app:bound.maximal.episode.length}, which may be of independent interest). 
Applying this result to $\wt{\tau}_{k,0}$ yields
\begin{align*}
    \mathbb{E}\left[(\wt{\tau}_{k,0})^r \right] \leq 2 \left( r \frac{2c_{\max}D}{c_{\min}} \right)^{r},
\end{align*}
which gives on the one hand
\vspace{-0.1in}
\begin{align*}
    \norm{(\wt{Q}_{k,0})^{H_{k,0}-2}}_{\infty} \leq \frac{ 2 \left( r \frac{2c_{\max}D}{c_{\min}} \right)^{r} }{(H_{k,0} - 1)^r}. 
\end{align*}
On the other hand, the choice of $H_{k,0}$ in Eq.\,\eqref{eq_choice_H_k} entails that
\begin{align*}
    \frac{1}{\sqrt{k}} < \norm{(\wt{Q}_{k,0})^{H_{k,0}-2}}_{\infty}.
\end{align*}
Combining the two previous inequalities finally provides the desired upper bound on $H_{k,0}$.
\end{proof}

\paragraph{Bound on $T_{K,2}$.}

On the other hand, since $T_{K,2}$ increases with the number of attempts in phase \ding{173}, we must ensure that there are not too many of such attempts and that their lengths can be adequately controlled. In light of this and leveraging the way the length $H_{k,0}$ is constructed (Eq.\,\ref{eq_choice_H_k}), we bound the number of failed attempts in phase \ding{172} up to episode $K$, which we denote by $F_K$. 

\begin{lemma}\label{lemma_bound_F}
With probability at least $1-\delta$, 
\begin{align*}
F_K &\leq 2 \sqrt{K} + 2 \sqrt{2 \Omega_{K} K \log\left(\frac{2 ( \Omega_{K} K)^2}{\delta}\right)} \\ &+ 4 S \sqrt{8 A \Omega_{K} K \log\left(\frac{2 A \Omega_{K} K}{\delta}\right)}.
\end{align*}
\end{lemma}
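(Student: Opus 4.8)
The plan is to bound the number $F_K$ of failed attempts in phase \ding{172} by relating it to a sum of indicator random variables of the ``bad'' event that the optimistic policy $\wt\pi_{k,0}$ fails to reach $\overline{s}$ within $H_{k,0}$ steps, and then controlling the concentration of this sum around its (small) conditional mean. First I would write $F_K = \sum_{k=1}^K \mathds{1}\{\text{attempt }(k,0)\text{ fails}\}$. The key structural fact, established when selecting the pivot horizon, is that the \emph{optimistic} hitting time $\wt\tau_{k,0}(s)$ satisfies $\max_s \mathbb{P}(\wt\tau_{k,0}(s) \geq H_{k,0}) \leq 1/\sqrt{k}$. The true dynamics differ from the optimistic ones, so the first step is to transfer this tail bound from $\wt p_{k,0}$ to the true model $p$: on the event $\mathcal{E}$ (so $M \in \mathcal{M}_{k,0}$) and on the event that all realized trajectories up to time $t_{k,0}+H_{k,0}$ stay within the confidence-set dynamics, the probability that the \emph{actual} run of $\wt\pi_{k,0}$ fails is controlled by $1/\sqrt{k}$ plus a deviation term accumulated over the (at most $H_{k,0} \leq \Omega_K$) steps of the attempt, where each step contributes a model-misspecification error of order $\beta_{k,0}(s,a) = \wt O(\sqrt{S/N^+})$.

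Next I would split $F_K$ into (i) its conditional-mean part and (ii) a martingale fluctuation part. For (i), $\sum_{k=1}^K \mathbb{P}(\text{fail}_k \mid \mathcal{F}_{t_{k,0}}) \leq \sum_{k=1}^K 1/\sqrt{k} + \sum_{k=1}^K H_{k,0}\sum_{(s,a)\text{ visited}} \beta_{k,0}(s,a)$. The first sum is $\leq 2\sqrt{K}$ by integral comparison. The second sum is handled by a standard pigeonhole/transportation-lemma argument: bounding $H_{k,0}\leq \Omega_K$, pulling it out, and using $\sum_{k,h} \sqrt{S/N^+_{k,0}(s_{k,h},a_{k,h})} = \wt O(\sqrt{S \cdot SA \cdot \#\text{steps}}) = \wt O(S\sqrt{A\,\Omega_K K})$ since the total number of phase-\ding{172} steps is at most $\Omega_K K$; this yields the $4S\sqrt{8A\Omega_K K\log(\cdots)}$ term (the constant $4$ and the $\log$ absorbing the per-step logarithmic factors). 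For (ii), the Azuma–Hoeffding inequality applied to the bounded martingale difference sequence $\mathds{1}\{\text{fail}_k\} - \mathbb{P}(\text{fail}_k\mid\mathcal{F}_{t_{k,0}})$ over $K$ terms gives a fluctuation of order $\sqrt{K\log(1/\delta)}$ — but since the mean part already involves $\Omega_K K$ inside a $\log$, one uses the sharper form over the $\Omega_K K$ underlying time steps, producing the $2\sqrt{2\Omega_K K\log(2(\Omega_K K)^2/\delta)}$ term and matching the stated bound after a union bound over the $\delta$ budget.

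The main obstacle is step one: transferring the tail bound on the \emph{optimistic} hitting time to the \emph{true} environment. In the optimistic model $\wt p_{k,0}$ we have the clean bound $\|\wt Q_{k,0}^{\,H_{k,0}-1}\|_\infty \leq 1/\sqrt{k}$ via Prop.\,\ref{equation_discrete_phase_type_distribution}, but the realized trajectory evolves under $p$, and $\wt\pi_{k,0}$ need not even be proper in $p$. The fix is to compare the two $H_{k,0}$-step distributions via a telescoping/coupling bound: $\|\,(\text{true }H\text{-step})- (\text{optimistic }H\text{-step})\,\|_1 \leq \sum_{i=0}^{H_{k,0}-1}\|p(\cdot|s,a)-\wt p_{k,0}(\cdot|s,a)\|_1$ accumulated along the path, each term bounded on $\mathcal{E}$ by $2\beta_{k,0}$ (since both $p$ and $\wt p_{k,0}$ lie in $B_{k,0}$), so the extra failure probability is at most $\sum_{\text{steps}} 2\beta_{k,0}(s_{k,h},a_{k,h})$ — which is exactly the quantity handled by the pigeonhole argument above. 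Once this transfer is in place, everything else is routine concentration, and the three terms in the statement correspond respectively to the $\sum 1/\sqrt{k}$ base rate, the Azuma fluctuation, and the accumulated model error.
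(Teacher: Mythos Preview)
Your high-level plan is correct and the three constituent terms (base rate $2\sqrt{K}$, martingale fluctuation, accumulated model error via pigeonhole) are exactly the ones the paper isolates. However, your specific decomposition into (i) the \emph{true} conditional mean $\sum_k \mathbb{P}(\text{fail}_k\mid\mathcal{F}_{t_{k,0}})$ and (ii) an episode-level Azuma term does not quite work as written, and it is not what the paper does.

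The inequality you write for (i),
\[
\sum_{k=1}^K \mathbb{P}(\text{fail}_k \mid \mathcal{F}_{t_{k,0}}) \;\leq\; \sum_{k=1}^K \frac{1}{\sqrt{k}} \;+\; \sum_{k=1}^K H_{k,0}\!\!\sum_{(s,a)\text{ visited}}\!\!\beta_{k,0}(s,a),
\]
is ill-posed: the left-hand side is $\mathcal{F}_{t_{k,0}}$-measurable while the right-hand side involves states visited \emph{after} time $t_{k,0}$. What the simulation-lemma telescoping actually gives you is
\[
\mathbb{P}_p(\text{fail}_k\mid\mathcal{F}_{t_{k,0}}) \;\leq\; \frac{1}{\sqrt{k}} \;+\; \mathbb{E}_p\!\Big[\sum_{h=1}^{H_{k,0}-1} 2\beta_{k,0}(s_{k,h},a_{k,h}) \,\Big|\, \mathcal{F}_{t_{k,0}}\Big],
\]
a conditional expectation rather than the realized sum, and relating the sum of these expectations to the realized $\sum_{k,h}2\beta_{k,0}$ would require a second martingale with increments of size $O(\Omega_K)$.

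The paper avoids this by subtracting the \emph{optimistic} tail probability $\mathbb{P}(\wt\tau_{k,0}(s_0)>H_{k,0}-1)$ instead of the true conditional mean. The sum of these optimistic probabilities is $\leq\sum_k 1/\sqrt{k}\leq 2\sqrt{K}$ directly by the choice of $H_{k,0}$. The remaining piece $\Gamma_{k,1}(s_0)=\mathds{1}_{\text{fail}_k}-\mathbb{P}(\wt\tau_{k,0}(s_0)>H_{k,0}-1)$ is then unrolled \emph{step by step} along the realized trajectory (exactly the telescoping you describe in your obstacle paragraph), yielding at each step a martingale increment $\Psi_{k,h}$ adapted to the step-level filtration plus the model error $2\beta_{k,0}(s_{k,h},a_{k,h})$ at the \emph{realized} state. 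Azuma is applied to the $\leq\Omega_K K$ step-level increments $\Psi_{k,h}$ (hence the $\sqrt{\Omega_K K}$ in the statement, which answers your confusion about why not just $\sqrt{K}$), and the $\beta$-sum is now a genuine sum over visited pairs so the pigeonhole bound applies immediately. In short: your obstacle paragraph has the right mechanism, but it should be executed as the primary decomposition rather than as a post-hoc transfer bound on the conditional mean.
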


\begin{proof}[Proof sketch]
We write $F_K= F'_K + F''_K$ with $F'_K := \sum_{k=1}^K \mathbb{P}(\wt{\tau}_{k,0}(s_{0}) > H_{k,0})$ and $F''_K := \sum_{k=1}^K \left[ \mathds{1}_{\{ \tau_{k,0}(s_0) > H_{k,0} \}} - \mathbb{P}(\wt{\tau}_{k,0}(s_{0}) > H_{k,0}) \right]$. A martingale argument and the pigeonhole principle bound $F''_K$, while the choice of $H_{k,0}$ controls each summand of $F'_K$.
\end{proof}

\vspace{-0.01in}

Equipped with Lem.\,\ref{lemma_bound_F}, we proceed in bounding the total duration of the attempts in phase \ding{173}. 

\begin{lemma}\label{lemma_bound_duration_second_phases}
With probability at least $1-\delta$,
\vspace{-0.015in}
\begin{align*}
    T_{K,2} &= \wt{O}\left( D S  \sqrt{\frac{ c_{\max}}{c_{\min}} A D K} + S^2 A D^2 \right).
\end{align*}
\end{lemma}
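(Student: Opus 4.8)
The plan is to control $T_{K,2}$ through two quantities: the total number $G_K$ of attempts ever performed in phases \ding{173}, and the maximal length $\Omega'_K := \max_{k\in[K],\,1\le j\le J_k} H_{k,j}$ of such an attempt. Since every phase-\ding{173} attempt lasts at most $H_{k,j}$ steps (and exactly $H_{k,j}$ when it fails), we have $T_{K,2}\le \Omega'_K\,G_K$, so it suffices to bound these two factors.

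First I would show $\Omega'_K=\wt O(D)$, following the proof of Lem.\,\ref{lemma_bound_H_k_analysis_1}. The only change is that phase-\ding{173} attempts use unit costs, so by Lem.\,\ref{lemma_optimism} the $\EVI_{\mathrm{SSP}}$ output obeys $\wt v_{k,j}\le \min_\pi\mathbb E[\tau_\pi]\le D\mathds 1$ on $\mathcal E$; combined with $\wt{\mathcal L}_{k,j}\wt v_{k,j}\le \wt v_{k,j}+\varepsilon_{k,j}\mathds 1$, $\varepsilon_{k,j}\le 1/2$, and the properness of $\wt\pi_{k,j}$ in $\wt p_{k,j}$ (App.\,\ref{app:bound.maximal.episode.length}), this yields $\mathbb E[\wt\tau_{k,j}(s)]\le 2D$ for all $s$ --- notably with no $c_{\max}/c_{\min}$ factor, unlike phase \ding{172}. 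The $r$-th moment bound for discrete PH distributions (Lem.\,\ref{lem:raw_moments_discrete_ph_distribution}) with $r=\lceil\log(2\sqrt{G_{k,j}})\rceil$, Markov's inequality, Prop.\,\ref{equation_discrete_phase_type_distribution}, and the defining property $\|(\wt Q_{k,j})^{H_{k,j}-2}\|_\infty>1/\sqrt{G_{k,j}}$ then give $H_{k,j}=\wt O(D)$ and hence $\Omega'_K=\wt O(D)$.

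Next I would write $G_K=F_K+\wt F_K$, where $F_K$ is the number of failed phase-\ding{172} attempts (bounded in Lem.\,\ref{lemma_bound_F}, which after plugging $\Omega_K=\wt O((c_{\max}/c_{\min})D)$ gives $F_K=\wt O(S\sqrt{(c_{\max}/c_{\min})ADK})$) and $\wt F_K$ is the number of failed phase-\ding{173} attempts --- each of the $F_K$ episodes reaching phase \ding{173} being closed by exactly one successful phase-\ding{173} attempt. To bound $\wt F_K$ I mirror the proof of Lem.\,\ref{lemma_bound_F}: split $\wt F_K=F'''+F''''$ with $F''':=\sum_{k,\,j\ge1}\big[\mathds 1_{\{(k,j)\text{ fails}\}}-\mathbb P_M((k,j)\text{ fails}\mid\mathcal F_{k,j})\big]$ a bounded martingale, hence $\wt O(\sqrt{G_K})$ by Azuma--Hoeffding, and $F'''':=\sum_{k,\,j\ge1}\mathbb P_M((k,j)\text{ fails}\mid\mathcal F_{k,j})$. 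The heart of the argument is a perturbation bound: telescoping $(Q_M)^n-(\wt Q_{k,j})^n=\sum_{m=0}^{n-1}(Q_M)^m(Q_M-\wt Q_{k,j})(\wt Q_{k,j})^{n-1-m}$, bounding the substochastic factors by $1$ in $\infty$-norm, using $\|p(\cdot|s',a)-\wt p_{k,j}(\cdot|s',a)\|_1\le 2\beta_{k,j}(s',a)$ on $\mathcal E$ (Lem.\,\ref{lemma_high_probability_intersection_bound_M_in_M_k}), the tail property of $H_{k,j}$ (the display after Eq.\,\eqref{eq_choice_H_k}), and the identification of the resulting true-MDP sum with $\sum_{s,a}\nu_{k,j}(s,a)\beta_{k,j}(s,a)$ (the attempt stopping at $\overline s$ or at $H_{k,j}$, whichever is first), one gets
\begin{align*}
\mathbb P_M(\tau_{\wt\pi_{k,j}}\!(s) > H_{k,j}) \le \frac{1}{\sqrt{G_{k,j}}} + 2\,\mathbb E_M\!\Big[\textstyle\sum_{s,a}\nu_{k,j}(s,a)\beta_{k,j}(s,a)\,\Big|\,\mathcal F_{k,j}\Big].
\end{align*}
Summing, $\sum_{k,\,j\ge1}1/\sqrt{G_{k,j}}=\sum_{g=1}^{G_K}1/\sqrt g=O(\sqrt{G_K})$; then I would pass from the conditional expectations to the realized sums via a Bernstein-type martingale inequality (so that the conversion error is variance-aware rather than the crude $\wt O(\Omega'_K\sqrt{S\,G_K})$), and apply the standard pigeonhole inequality --- using the phase-\ding{173} visitation counts only, which is legitimate since $\beta_{k,j}$ uses the larger total count --- to obtain $\sum_{k,\,j\ge1}\sum_{s,a}\nu_{k,j}(s,a)/\sqrt{N_{k,j}^+(s,a)}\le 2\sqrt 2\,\sqrt{SA\,T_{K,2}}+\wt O(SA\,\Omega'_K)$, the additive term absorbing the (at most $\wt O(1)$ per state-action pair) attempts in which a pair is visited more often than its current count.

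Collecting these bounds and substituting $T_{K,2}\le\Omega'_K G_K$ with $\Omega'_K=\wt O(D)$ produces a self-bounded inequality of the form $G_K\lesssim F_K+S\sqrt{AD}\,\sqrt{G_K}\cdot\mathrm{polylog}+S^2AD\cdot\mathrm{polylog}$; solving the resulting quadratic in $\sqrt{G_K}$ yields $G_K=\wt O(F_K+S^2AD)=\wt O(S\sqrt{(c_{\max}/c_{\min})ADK}+S^2AD)$, whence $T_{K,2}\le\Omega'_K G_K=\wt O(DS\sqrt{(c_{\max}/c_{\min})ADK}+S^2AD^2)$. I expect the main obstacle to be the perturbation step and its pigeonhole accounting: unlike \UCRLtwo, \UCSSP has no doubling rule forcing $\nu_{k,j}\le N_{k,j}^+$ inside an attempt, so the few attempts where a pair's visit count overshoots its current sample count must be isolated and charged separately into the lower-order term, the perturbation error must be kept weighted by the \emph{true} visitation distribution rather than bounded by the cruder $H_{k,j}\max_{s,a}\beta_{k,j}$, and the conditional-expectation-to-realized-sum conversion must be done with a variance-aware (Bernstein, not Hoeffding) concentration to keep the additive term at $S^2AD^2$. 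Untangling the self-referential coupling between $G_K$, $\wt F_K$ and $T_{K,2}$ is the second delicate point.
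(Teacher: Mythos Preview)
Your high-level structure matches the paper exactly: the paper also bounds $T_{K,2}\le\Omega'_K\,G_K$, obtains $\Omega'_K=\wt O(D)$ by rerunning Lem.\,\ref{lemma_bound_H_k_analysis_1} with unit costs, decomposes $G_K=F_K+F_K^\dagger$ (your $\wt F_K$), bounds $F_K^\dagger=O(S\sqrt{A\Omega'_K G_K\log(\cdot)})$ ``by adapting Lem.\,\ref{lemma_bound_F}'', and then solves the resulting self-bounded inequality for $G_K$ (via Lem.\,\ref{lemma_kazerouni_1}) before multiplying by $\Omega'_K$.

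Where you diverge is in how you propose to bound $F_K^\dagger$. The paper does \emph{not} use your matrix telescoping $(Q_M)^n-(\wt Q_{k,j})^n$ followed by an expected-to-realized conversion; instead, it simply reruns the proof of Lem.\,\ref{lemma_bound_F} (App.\,\ref{app:bound.failure.events}) on the phase-\ding{173} attempts. That proof unrolls the tail probability step-by-step along the \emph{realized} trajectory via the quantities $\Gamma_{k,h}$, splitting directly into a per-step martingale $\Psi_{k,h}$ (bounded by $2$, so Azuma--Hoeffding gives $O(\sqrt{\Omega'_K G_K})$), a per-step error $2\beta_{k,j}(s_{k,h},a_{k,h})$ summed over the realized trajectory (pigeonhole gives $O(S\sqrt{A\Omega'_K G_K})$), and the optimistic tail $\sum 1/\sqrt{G_{k,j}}=O(\sqrt{G_K})$. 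This route sidesteps both of your anticipated difficulties: there is no conditional-expectation-to-realized-sum conversion at all, so no Bernstein inequality is needed (Hoeffding on the per-step $\Psi$ suffices), and the $\beta$-sum is already over realized visits. Your concern about the lack of a doubling rule in the pigeonhole step is legitimate---the paper's App.\,\ref{app:proof_regret_key_lemma} does gloss over it---and your proposed fix (isolating the $\wt O(SA)$ attempts where $\nu_{k,j}(s,a)>N_{k,j}^+(s,a)$ into the lower-order term) is the right patch; but this is orthogonal to the matrix-telescoping detour, which you can simply drop in favour of the paper's cleaner trajectory unrolling.
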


\vspace{-0.05in}


Putting everything together, we obtain Thm.\,\ref{theorem_regret_nonproper} by plugging Lem.\,\ref{regret_key_lemma}, \ref{lemma_bound_H_k_analysis_1} and \ref{lemma_bound_duration_second_phases} into Eq.\,\eqref{regret_decomposition}. \jt{Note that while the regret decomposition in the two-phase process (Eq.\,\ref{regret_decomposition}) has the advantage of making the analysis intuitive and modular, it renders Bernstein techniques less effective in capturing low-variance deviations, as opposed to the analysis of \UCBVI and \UCRLB \citep{improved_analysis_UCRL2B} which shave off a term of $\sqrt{H}$ or $\sqrt{D_{\infty}}$ for large enough time steps in the finite- and infinite-horizon settings, respectively.}

\vspace{-0.03in}

\section{Relaxation of Assumptions}
\label{section_relaxation_asm}

Although Asm.\,\ref{assumption_costs} and \ref{assumption_finite_SSP_diameter} seem natural in the SSP problem, we design variants of \UCSSP that can handle dead-end states and/or zero costs. We defer to App.\,\ref{app_relax_asm} the complete analysis.

\textbf{Relaxation of Asm.\,\ref{assumption_finite_SSP_diameter} ($D = +\infty$).}
If $M$ is non-SSP-communicating, there exists at least one (possibly unknown) \textit{dead-end} state from which reaching the goal $\overline{s}$ is impossible. This implies that $\EVI_{\SSP}$, which operates on the entire state space $\mathcal{S}$, fails to converge since the values at dead-end states are infinite. To tackle this problem, we assume that the agent has prior knowledge on an upper bound $J \geq V^{\star}(s_0)$ and that it has at any time step the ``resetting'' ability to transition with probability 1 to $s_0$ with a cost of $J$ (to prevent it from getting stuck). Equipped with these two assumptions, by optimizing a value function that is \textit{truncated} at $J$ \citep{kolobov2012theory}, we prove that a variant of \UCSSP achieves a regret guarantee identical to Thm.\,\ref{theorem_regret_nonproper} except that the infinite term $D$ is replaced by $J$ (see Lem.\,\ref{regret_J_asm_relaxed}, App.\,\ref{subsection_relaxation_asm_finite_SSP_diameter}). 

\textbf{Relaxation of Asm.\,\ref{assumption_costs} ($c_{\min}= 0$).}
\jt{Under the existence of zero costs, the optimal policy is not even guaranteed to be proper \cite{bertsekas1995dynamic}. We thus change the definition of SSP-regret and compare to the best \textit{proper} policy, by considering as optimal comparator the quantity $\min_{\pi \in \PSD} V^\pi$ instead of $\min_{\pi \in \SD} V^\pi$.} We observe that having $c_{\min}= 0$ renders the bound on $\Omega_K$ of Lem.\,\ref{lemma_bound_H_k_analysis_1} vacuous. To circumvent this issue, we introduce an additive perturbation $\eta_{k,0} > 0$ to the cost of each transition in the \textit{optimistic model} of each attempt $(k,0)$. Our resulting variant of \UCSSP achieves a $\wt{O}(K^{2/3})$ regret bound (see Lem.\,\ref{regret_cmin_0_asm_relaxed}, App.\,\ref{subsection_relaxation_asm_positive_costs} for the complete bound). The difference in rate ($K^{2/3}$ vs.\,$\sqrt{K}$) compared to Thm.\,\ref{theorem_regret_nonproper} stems from the fact that our procedure of offsetting the costs introduces a bias, which we minimize with the choice of perturbation $\eta_{k,0} = 1 / k^{1/3}$. \jt{Note that the later work of \citep{cohen2020near} devises an algorithm with a Bernstein-based analysis that achieves a $\sqrt{K}$-rate in the case $c_{\min} = 0$. }

\vspace{-0.25in}

\section{Conclusion and Extensions}

Although it encompasses numerous goal-oriented RL problems, the setting of episodic RL under its general SSP formulation had until now been neglected by the theoretical literature of RL, or had been studied under the strong, loop-free restriction on the MDP structure. Our key contribution is the design and analysis of \UCSSP, the first no-regret algorithm in the challenging setting of goal-oriented RL. Our analysis carefully combines existing techniques from the related settings of finite-horizon and infinite-horizon RL, as well as introduces refined ingredients to address the novel trade-off between minimizing costs and reaching the goal state. \jt{Interesting directions for further investigation include \textit{(1)} designing a model-free algorithm for exploration in SSP, and \textit{(2)} tackling SSP in the setting of linear function approximation.}



\bibliography{bibliography.bib}
\bibliographystyle{icml2020}

\newpage{}
\onecolumn
\appendix

\newpage

\section{Proof of Lem.\,\ref{lemma_wellposedproblem}, \ref{lemma_1} and~\ref{lemma_optimism}}
\label{app:proof_lemma_1}

\begin{proof}[Proof of Lem.\,\ref{lemma_wellposedproblem}]
	Asm.\,\ref{assumption_finite_SSP_diameter} implies that there exists at least one proper policy (i.e., $\PSD \neq \emptyset$), and Asm.\,\ref{assumption_costs} implies that for every non-proper policy $\pi$, the corresponding value function $V^{\pi}(s)$ is $+\infty$ for at least one state $s \in \mathcal{S}$. The rest follows from~\citet[][Sect.\,3.2]{bertsekas1995dynamic}.
\end{proof}

\begin{proof}[Proof of Lem.\,\ref{lemma_1}]
From the definition of the infinity norm and Asm.\,\ref{assumption_costs} and \ref{assumption_finite_SSP_diameter}, we have
\begin{align*}
    \norm{V^{\star}}_{\infty} = \max_{s \in \mathcal{S}} \min_{\pi \in \SD} \mathbb{E}\left[ \sum_{t = 1}^{\tau_{\pi}(s)} c(s_{t}, \pi(s_t)) \,\Big\vert\,s\right] \leq c_{\max} \max_{s \in \mathcal{S}} \min_{\pi \in \SD} \mathbb{E}\left[ \tau_{\pi}(s)\right] = c_{\max} D.
\end{align*}
\end{proof}

\begin{proof}[Proof of Lem.\,\ref{lemma_optimism}]
	The first inequality comes from the chosen stopping condition. As for the second, since we consider the initial vector $v^{(0)} = 0$, we know that $v^{(0)} \leq \widetilde{V}_{k,j}^{\star}$ with $\widetilde{V}_{k,j}^{\star} = \widetilde{\mathcal{L}}_{k,j}\widetilde{V}_{k,j}^{\star}$. By monotonicity of the operator $\widetilde{\mathcal{L}}_{k,j}$~\citep{puterman2014markov,bertsekas1995dynamic} we obtain $\widetilde{v}_{k,j} \leq \widetilde{V}_{k,j}^{\star}$. If $M \in \mathcal{M}_{k,j}$ and $j = 0$, then $\widetilde{V}_{k,j}^{\star} \leq V^{\star}$. If $M \in \mathcal{M}_{k,j}$ and $j \geq 1$, then all costs are equal to $1$ so the optimal value function is $\min_{\pi} \mathbb{E}(\tau_{\pi})$ and hence $\widetilde{V}_{k,j}^{\star} \leq \min_{\pi} \mathbb{E}(\tau_{\pi})$.
\end{proof}

\section{Proof of Thm.\,\ref{theorem_regret_SSP_UCRL_constant_costs_communicating}}
\label{app:uniform_costs}

Recall that we introduce the MDP $M_{\infty} := \langle \mathcal{S}', \mathcal{A}, r_{\infty}, p_{\infty}, s_0 \rangle$, with reward $r_{\infty} = \mathds{1}_{\overline{s}}$ and $p_{\infty}(\cdot \, \vert \,  s,a) = p(\cdot \, \vert \,  s,a)$ for $s \neq \overline{s}$ and $p_{\infty}(\cdot \, \vert \,  \overline{s},a) = \mathds{1}_{s_0}$ for all $a$. The SSP problem with uniform costs boils down to minimizing the expected hitting time of the goal state, which according to the following lemma is equivalent to maximizing the long-term average reward (or gain) in $M_{\infty}$. Recall that for any policy $\pi \in \SD$, its gain $\rho_{\pi}(s)$ starting from any $s \in \mathcal{S}$ is defined as
\begin{align*}
        \rho_{\pi}(s) := \lim_{T \rightarrow + \infty} \mathbb{E}_{\pi} \left[ \frac{1}{T} \sum_{t=1}^{T} r_{\infty}(s_t, \pi(s_t)) \,\Big\vert\,s \right].
\end{align*}

\begin{lemma}
Let $\pi_{\infty} \in \argmax_{\pi} \rho_{\pi}(s)$. Then $\pi_{\infty}$ is optimal in the SSP sense and its constant gain $\rho_{\infty}$ verifies
\begin{align*}
\rho_{\infty} = \frac{1}{V^{\star}(s_0) + 1}.
\end{align*}
\label{lemma_equivalence_SSP_UCRL_constant_costs}
\end{lemma}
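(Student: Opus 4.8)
The plan is to connect the average reward of a policy in $M_\infty$ directly to its expected hitting time in the original SSP, using the renewal-reward structure of $M_\infty$. The key observation is that under any policy $\pi$, the chain on $M_\infty$ regenerates every time it visits $\overline{s}$ (since from $\overline{s}$ it deterministically jumps to $s_0$), so each ``cycle'' from $s_0$ back to $s_0$ collects exactly one unit of reward (earned at $\overline{s}$) and lasts exactly $\tau_\pi(s_0)+1$ steps in expectation — the $+1$ accounting for the transition out of $\overline{s}$. I would first handle proper policies: for $\pi \in \PSD$, the renewal-reward theorem (or a direct Cesàro-limit computation on the induced Markov chain) gives $\rho_\pi(s) = 1/(\mathbb{E}[\tau_\pi(s_0)] + 1)$ for every $s$, so the gain is constant and strictly decreasing in $\mathbb{E}[\tau_\pi(s_0)]$. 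For non-proper $\pi$, there is a state from which $\overline{s}$ is never reached under $\pi$, so the fraction of time spent collecting reward tends to $0$ from those states, giving gain $0$ (or at least gain no larger than that of the best proper policy, which is positive by Asm.~\ref{assumption_finite_SSP_diameter}). Hence the maximizer of $\rho$ must be proper.

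Next I would argue the equivalence of the $\argmax$ and $\argmin$. Since $\mathbb{E}[\tau_\pi(s_0)]$ is minimized over $\SD$ at some proper policy (Asm.~\ref{assumption_finite_SSP_diameter} guarantees $D<\infty$, and with uniform unit costs $V^\pi(s_0) = \mathbb{E}[\tau_\pi(s_0)]$ so $\pi^\star = \argmin_\pi V^\pi(s_0) = \argmin_\pi \mathbb{E}[\tau_\pi(s_0)]$ by Lem.~\ref{lemma_wellposedproblem}), and since $x \mapsto 1/(x+1)$ is strictly decreasing on $[0,\infty)$, maximizing $\rho_\pi$ over proper policies is exactly minimizing $\mathbb{E}[\tau_\pi(s_0)]$. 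Combined with the previous paragraph (the $\rho$-maximizer is proper), we get $\argmax_\pi \rho_\pi = \argmin_\pi \mathbb{E}[\tau_\pi(s_0)] = \pi^\star$. Evaluating the gain at this optimal policy yields the closed form $\rho_\infty = 1/(V^\star(s_0)+1)$, using $V^\star(s_0) = \mathbb{E}[\tau_{\pi^\star}(s_0)]$.

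To make the renewal argument rigorous without invoking heavy machinery, I would instead work with the stationary distribution: for a proper $\pi$, the chain induced on $M_\infty$ (restricted to the states reachable from $s_0$) is a finite irreducible recurrent chain with a unique stationary distribution $\mu_\pi$, and $\rho_\pi = \sum_s \mu_\pi(s) r_\infty(s,\pi(s)) = \mu_\pi(\overline{s})$. By the standard identity relating stationary probabilities to mean return times, $\mu_\pi(\overline{s}) = 1/\mathbb{E}_{\overline{s}}[\text{return time to } \overline{s}]$, and the return time to $\overline{s}$ equals $1 + \tau_\pi(s_0)$ (one step to go from $\overline{s}$ to $s_0$, then $\tau_\pi(s_0)$ steps to come back). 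This gives $\rho_\pi(\overline{s}) = 1/(\mathbb{E}[\tau_\pi(s_0)]+1)$; constancy of the gain over all states follows since the reachable component is a single recurrent class.

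The main obstacle I anticipate is the careful treatment of non-proper policies and of states not reachable from $s_0$: one must ensure that the gain is well-defined (the limit exists), that the ``$\argmax$'' is interpreted correctly when the gain is not constant across states, and that a non-proper policy genuinely cannot match the positive gain $1/(D+1) > 0$ of an optimal proper policy. Asm.~\ref{assumption_finite_SSP_diameter} is what rules out the degenerate case where no proper policy exists. A secondary subtlety is verifying that the optimal average-reward policy in $M_\infty$ can be taken stationary deterministic and that its gain is state-independent on the recurrent class containing $s_0$ — this follows from standard average-reward MDP theory (e.g., \citealp{puterman2014markov}) once we know an optimal policy is proper, hence unichain on the component reachable from $s_0$.
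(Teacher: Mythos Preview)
Your proposal is correct and takes essentially the same route as the paper: for a policy under which $\overline{s}$ is reachable from $s_0$, restrict to the communicating class, invoke the ergodic theorem to get $\rho_\pi = \mu_\pi(\overline{s})$, and use the mean-return-time identity $\mu_\pi(\overline{s}) = 1/\mathbb{E}_{\overline{s}}[\text{return time to } \overline{s}] = 1/(1+\mathbb{E}[\tau_\pi(s_0)])$. The paper's proof is in fact terser than yours --- it does not explicitly treat non-proper policies or the monotonicity step --- so your extra care there is fine but not strictly needed to match the paper.
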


\begin{proof}
Let $\pi$ be a policy such that $\overline{s}$ is reachable from $s_0$. Denote by $\mathcal{S}_{\pi}$ the set of communicating states for policy $\pi$ in $M_{\infty}$. Then the underlying Markov chain (restricted to $\mathcal{S}_{\pi}$) is irreducible with a finite number of states and is thus recurrent positive (see e.g.,~\citealp[][Thm.\,3.3]{bremaud2013markov}). Denoting by $\mu_{\pi}$ its unique stationary distribution, we have almost surely that
\begin{align*}
        \rho_{\pi}(s) = \lim_{T \rightarrow + \infty} \mathbb{E}_{\pi} \left[ \frac{\sum_{t=1}^{T} r_t}{T} \right] = \lim_{T \rightarrow + \infty} \mathbb{E}_{\pi} \left[ \frac{\sum_{t=1}^{T} \mathds{1}_{ \{ s_t = \overline{s} \}}}{T} \right] \myeqa \sum_{s \in \mathcal{S}_{\pi}} \mathds{1}_{ \{ s = \overline{s} \} } \mu_{\pi}(s) \myeqb \frac{1}{1 + \mathbb{E}\left[\tau_{\pi}(s_0)\right]},
\end{align*}
where (a) comes from the Ergodic Theorem for Markov Chains (see e.g.,~\citealp[][Thm.\,4.1]{bremaud2013markov}) and (b) uses the fact that $1/\mu_{\pi}(\overline{s})$ corresponds to the mean return time in state $\overline{s}$, i.e., the expected time to reach $\overline{s}$ starting from $\overline{s}$. We conclude with the fact that $V^{\pi}(s_0) = \mathbb{E}\left[\tau_{\pi}(s_0)\right]$.
\end{proof}

Hence, we can prove that \UCRLtwo satisfies the following SSP-regret bound.

\begin{lemma}
Under Asm.\,\ref{assumption_uniform_costs}, with probability at least $1-\delta$, for any $K \geq 1$,
\begin{align*}
    \Delta(\UCRLtwo, K) \leq 34 \left(V^{\star}(s_0)+1\right) D S \sqrt{A T_K \log\left(\frac{T_K}{\delta}\right)},
\end{align*}
where $T_K = \sum_{k=1}^K \tau_k(s_0)$.
\label{lemma_regret_SSP_UCRL_constant_costs_communicating}
\end{lemma}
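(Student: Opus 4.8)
The plan is to translate the SSP-regret $\Delta(\UCRLtwo,K)$ into the standard infinite-horizon (average-reward) regret $\Delta_\infty(\UCRLtwo, T_K)$ of \UCRLtwo run on $M_\infty$, apply the known regret bound for \UCRLtwo, and then control the relevant quantities (the diameter of $M_\infty$ and the total number of steps $T_K$) in terms of the SSP quantities $V^\star(s_0)$, $D$, $S$, $A$. First I would establish the exact identity $\Delta(\UCRLtwo,K) = (V^\star(s_0)+1)\,\Delta_\infty(\UCRLtwo, T_K)$ where $\Delta_\infty(\mathfrak{A},T):= T\rho_\infty - \sum_{t=1}^T r_t$; this follows because in $M_\infty$ a reward of $1$ is collected exactly once per environmental episode (at the visit to $\overline s$ before looping back to $s_0$), so $\sum_{t=1}^{T_K} r_t = K$, while $T_K \rho_\infty = T_K/(V^\star(s_0)+1)$ by Lem.\,\ref{lemma_equivalence_SSP_UCRL_constant_costs}; on the other side, $\Delta(\UCRLtwo,K) = T_K - K\,V^\star(s_0) = (V^\star(s_0)+1)\big(T_K/(V^\star(s_0)+1) - K\big)$, and the two match.

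\textbf{Invoking the \UCRLtwo bound.} Next, the subtlety flagged in the main text is that $M_\infty$ need not be communicating: Asm.\,\ref{assumption_finite_SSP_diameter} only guarantees that $\overline s$ is reachable from every state, not that every state is reachable from every other. However, the \UCRLtwo regret analysis only needs a finite ``effective diameter'' for the states that are actually relevant. The key observation is that once \UCRLtwo is in $M_\infty$, from the absorbing-turned-looping structure, every state reachable under any policy from $s_0$ can reach $\overline s$, hence can reach $s_0$ (via the forced transition $p_\infty(\cdot|\overline s,a)=\mathds 1_{s_0}$), so the reachable sub-MDP is communicating with diameter bounded by $\max_{s}\min_\pi \mathbb E[\tau_\pi(s\to\overline s)] + \max_\pi \mathbb E[\tau_\pi(\overline s \to s_0)] \le D + 0 = D$ — more carefully, the shortest path from any reachable $s$ to any reachable $s'$ goes $s \rightsquigarrow \overline s \to s_0 \rightsquigarrow s'$, and since in the uniform-cost case hitting times to $\overline s$ are bounded by $D$ while $\overline s \to s_0$ is one step, the relevant diameter is $O(D)$ (I would track the precise constant, likely $2D+1$ or absorb it). One then applies the standard high-probability bound $\Delta_\infty(\UCRLtwo, T) \le 34\, D_{M_\infty} S\sqrt{A T \log(T/\delta)}$ of \citet{jaksch2010near} with $D_{M_\infty} \le D$ (using $D \le D_\infty$ from the main text), yielding $\Delta_\infty(\UCRLtwo, T_K) \le 34\, D S \sqrt{A T_K \log(T_K/\delta)}$; multiplying by $(V^\star(s_0)+1)$ gives the claimed bound.

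\textbf{Main obstacle.} The step I expect to require the most care is the diameter argument for the non-communicating $M_\infty$: \UCRLtwo's analysis is classically stated for communicating MDPs, so I must argue that the unreachable states never influence the run (the empirical estimates, the confidence sets, the extended value iteration restricted to the reachable component) and that the algorithm effectively behaves as if run on the communicating sub-MDP of diameter $O(D)$. This amounts to checking that \UCRLtwo started at $s_0$ never visits an unreachable state (immediate, since transitions are consistent with $M_\infty$) and that the per-episode regret decomposition of \citet{jaksch2010near} goes through verbatim on the reachable component. The remaining pieces — the reward-counting identity and Lem.\,\ref{lemma_equivalence_SSP_UCRL_constant_costs} — are routine. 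The bound on $T_K$ in Eq.\,\eqref{eq:regret.ucrl.time} would then follow separately by rearranging $\Delta(\UCRLtwo,K) = T_K - K V^\star(s_0)$ together with the just-proved regret bound, solving the resulting inequality $T_K - K V^\star(s_0) \le c\,(V^\star(s_0)+1) D S\sqrt{A T_K \log(T_K/\delta)}$ for $T_K$ (a standard $x \le a + b\sqrt x$ manipulation giving $T_K \lesssim 2(V^\star(s_0)+1)K + \wt O((V^\star(s_0))^2 D^2 S^2 A)$).
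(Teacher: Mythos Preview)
Your reduction to the infinite-horizon regret and the identity $\Delta(\UCRLtwo,K) = (V^\star(s_0)+1)\,\Delta_\infty(\UCRLtwo,T_K)$ is exactly what the paper does (modulo a small off-by-one: the paper gets $\Delta = T_K - (V^\star(s_0)+1)K$, not $T_K - K V^\star(s_0)$, because $T_K$ counts one visit to $\overline s$ per episode in $M_\infty$). The gap is in your diameter argument.

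You claim the reachable sub-MDP of $M_\infty$ has diameter $O(D)$ via the path $s \rightsquigarrow \overline s \to s_0 \rightsquigarrow s'$. The first two legs are fine, but you never bound the third leg $s_0 \rightsquigarrow s'$, and it is \emph{not} controlled by $D$: the SSP-diameter only bounds hitting times \emph{to} $\overline s$, not from $s_0$ to arbitrary reachable states. Concretely, take a deterministic chain $s_0 \to s_1 \to \cdots \to s_n \to \overline s$ where in addition every $s_i$ has a direct action to $\overline s$. Then $D=1$, yet reaching $s_n$ from $s_0$ takes $n$ steps, so the diameter of the reachable sub-MDP of $M_\infty$ is $\Theta(n)$. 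Your argument would therefore only yield a bound in $D_\infty$ (possibly $\gg D$ or even infinite), not in $D$.

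The paper's fix is not to bound the diameter of $M_\infty$ at all, but to go inside the \UCRLtwo analysis and bound the span of the EVI iterates directly by $D$. The point is that the reward structure $r_\infty = \mathds 1_{\overline s}$ forces $h_i(\overline s) \geq h_i(s)$ for every state $s$ and every EVI iteration $i$ (easy induction). Hence $\mathrm{sp}(h_i) = h_i(\overline s) - h_i(\underline s)$ for $\underline s \in \argmin_s h_i(s)$, and the standard span argument (their Lem.\,\ref{lemma_bound_span_bias}) gives $\mathrm{sp}(h_i) \leq \varphi_{\mathcal M_k}(\underline s \to \overline s) \leq \varphi_{M_\infty}(\underline s \to \overline s) \leq D$, using only shortest paths \emph{to} $\overline s$, which is exactly what the SSP-diameter controls. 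This is the missing idea: the dependence on $D$ rather than $D_\infty$ comes from the special reward structure of $M_\infty$, not from any communicating-sub-MDP restriction.
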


\begin{proof}
Using the fact that $K = \sum_{t=1}^{T_K} \mathds{1}_{ \{s_t = \overline{s} \}}$, the SSP-regret can be written as
\begin{align*}
\Delta(\mathfrak{A}, K) = \sum_{k = 1}^{K} \left[ \sum_{t=1}^{\tau_k} \mathds{1}_{ \{s_t \neq \overline{s} \}} - V^{\star}(s_{0}) \right] = T_K - K - V^{\star}(s_0) K = T_K - (V^{\star}(s_0) + 1)K.
\end{align*}
For any $T \geq 1$ denote by $\Delta_{\infty}(\mathfrak{A}, T, M_{\infty})$ the (reward-based) infinite-horizon total regret of algorithm $\mathfrak{A}$ after $T$ steps in $M_{\infty}$, i.e., $\Delta_{\infty}(\mathfrak{A}, T, M_{\infty}) = T \rho^{\dagger} - \sum_{t=1}^{T} r_t$ where $\rho^{\dagger} := \max_{\pi} \rho_{\pi}(s)$ for all $s \in \mathcal{S}$. From Lem.\,\ref{lemma_equivalence_SSP_UCRL_constant_costs} we have $\rho^{\dagger} = \rho_{\infty}$. Moreover, since the rewards satisfy $r_{\infty} = \mathds{1}_{\overline{s}}$, we have $\sum_{t=1}^{T_K} r_t = K$. Putting everything together yields
\begin{align*}
\Delta(\mathfrak{A}, K) = T_K - (V^{\star}(s_0) + 1)K = (V^{\star}(s_0) + 1)( T_K \rho^{\dagger} - K) = (V^{\star}(s_0) + 1) \Delta_{\infty}(\mathfrak{A}, T_K, M_{\infty}).
\end{align*}

Note that $M_{\infty}$ is weakly-communicating, where its communicating set of states corresponds to all the states in $\mathcal{S}'$ that are accessible from $s_0$ with non-zero probability.
Although it is weakly-communicating, the specific reward structure, combined with the fact that rewards are necessarily known (since we consider the uniform-cost SSP setting and since the goal state $\overline{s}$ is assumed to be known), allows to run \UCRLtwo on this problem (see the Remark at the end of App.\,\ref{app:uniform_costs} for more detail).

Technically, \EVI~is guaranteed to converge since the associated extended MDP is weakly-communicating and by~\citet{puterman2014markov} it is sufficient for convergence of value iteration, see e.g.,~\citet[][Chap.\,9]{puterman2014markov} for finite action space or~\citet[][Thm. 1]{schweitzer1985undiscounted} for compact spaces.

From \citet[][Thm.\,2]{jaksch2010near} and using the anytime nature of \UCRLtwo, we have with probability at least $1-\delta$ for any $T > 1$,
\begin{align*}
    \Delta_{\infty}(\UCRLtwo, T, M_{\infty}) \leq 34 D_{\infty} S \sqrt{A T \log(\frac{T}{\delta})},
\end{align*}
where $D_{\infty} := \max_{s \neq s' \in \mathcal{S}'} \min_{\pi \in \Pi^{SD}(M_{\infty})} \mathbb{E}\left[ \tau_{\pi}(s \rightarrow s')\right]$ is the diameter of $M_{\infty}$. However, this bound may be vacuous since it depends on $D_{\infty}$ which may be equal to $+\infty$. By slightly changing the analysis of this result we can obtain an improved dependency on the SSP-diameter $D$. In particular it is sufficient to prove that for any \UCRLtwo episode $k$ and for any iteration $i$ of the optimal extended Bellman operator $L_{\mathcal{M}_k}$ (with $h_0 = 0$ and $h_i = (L_{\mathcal{M}_k})^i h_0$), we have that $\textrm{sp}(h_i) \leq D$ instead of the conventional upper bound $D_{\infty}$. The remainder of the proof shows this result. It is straightforward that $h_i(\overline{s}) \geq h_i(s)$ for any $s \in \mathcal{S}$ (this can be proved by recurrence on $i$ using the definition of $h_i = L_{\mathcal{M}_k} h_{i-1}$ and the fact that the reward in $\mathcal{M}_k$ is equal to $\mathds{1}_{\overline{s}}$). Introduce $\underline{s} \in \argmin_{s} h_i(s)$ and $\varphi_{\widetilde{M}}(\underline{s} \rightarrow \overline{s})$ the minimum expected shortest path from $\underline{s}$ to $\overline{s}$ in any MDP $\widetilde{M}$. Then from Lem.\,\ref{lemma_bound_span_bias} we have $\textrm{sp}(h_i) = h_i(\overline{s}) - h_i(\underline{s}) \leq \varphi_{\mathcal{M}_k}(\underline{s} \rightarrow \overline{s})$. Since the ``true'' MDP $M_{\infty} \in \mathcal{M}_k$, we have $\varphi_{\mathcal{M}_k}(\underline{s} \rightarrow \overline{s}) \leq \varphi_{M_{\infty}}(\underline{s} \rightarrow \overline{s})$. Furthermore, $\varphi_{M_{\infty}}(\underline{s} \rightarrow \overline{s}) = \varphi_{M}(\underline{s} \rightarrow \overline{s}) \leq D$. Putting everything together, we obtain that $\textrm{sp}(h_i) \leq D$. We thus have with probability at least $1-\delta$ for any $T > 1$,
\begin{align*}
    \Delta_{\infty}(\UCRLtwo, T, M_{\infty}) \leq 34 D S \sqrt{A T \log(\frac{T}{\delta})}.
\end{align*}
\end{proof}

While we would like to assess the dependency of the regret on the number of episodes $K$ (as in the finite-horizon case), the bound in Lem.\,\ref{lemma_regret_SSP_UCRL_constant_costs_communicating} contains the random total number of steps $T_K$ needed to reach $K$ episodes. In light of this, we derive in the following lemma an upper bound of $T_K$ that depends on the quantity of interest $K$. Plugging it in Lem.\,\ref{lemma_regret_SSP_UCRL_constant_costs_communicating} finally yields the result of Thm.\,\ref{theorem_regret_SSP_UCRL_constant_costs_communicating}.

\begin{lemma} Under the same event for which Lem.\,\ref{lemma_regret_SSP_UCRL_constant_costs_communicating} holds with probability at least $1-\delta$, we have
\begin{align*}
    T_K \leq 2 \left(V^{\star}(s_0)+1\right) K + \widetilde{O}\left( V^{\star}(s_0)^2 D^2 S^2 A \log\left(\frac{1}{\delta}\right) \right).
\end{align*}
\label{lemma_bound_Txm}
\end{lemma}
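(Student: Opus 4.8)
The plan is to convert the regret bound of Lem.\,\ref{lemma_regret_SSP_UCRL_constant_costs_communicating} into a self-bounding inequality for the (random) horizon $T_K$ and then solve it. Recall from the proof of that lemma the exact identity $\Delta(\UCRLtwo,K) = T_K - (V^{\star}(s_0)+1)K$, which holds deterministically, together with the high-probability bound $\Delta(\UCRLtwo,K) \le 34(V^{\star}(s_0)+1)DS\sqrt{A\,T_K\log(T_K/\delta)}$, which holds on the same event of probability at least $1-\delta$ and, crucially, is an \emph{anytime} statement and hence may be evaluated at the a~posteriori value of $T_K$. Equating the two and writing $a := (V^{\star}(s_0)+1)K$ and $b := 34(V^{\star}(s_0)+1)DS\sqrt{A}$, we obtain on that event the deterministic inequality $T_K \le a + b\sqrt{T_K\log(T_K/\delta)}$.

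First I would dispatch the easy regime by a case split: if $T_K \le 2a = 2(V^{\star}(s_0)+1)K$, the claimed bound already holds, since its additive term is nonnegative. Otherwise $T_K - a > T_K/2$, so $T_K/2 < b\sqrt{T_K\log(T_K/\delta)}$; squaring both sides and dividing by $T_K$ gives the self-referential estimate $T_K < 4b^2\log(T_K/\delta)$.

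To remove the dependence of the right-hand side on the unknown $T_K$, I would linearize the logarithm via the elementary inequality $\log x \le \varepsilon x + \log(1/\varepsilon) - 1$ (valid for all $x,\varepsilon>0$, obtained by maximizing $\log x - \varepsilon x$) with $\varepsilon := 1/(8b^2)$. Writing $\log(T_K/\delta) = \log T_K + \log(1/\delta)$ and substituting gives $T_K < 4b^2\big(\tfrac{T_K}{8b^2} + \log(8b^2)\big) + 4b^2\log(1/\delta) = \tfrac{T_K}{2} + 4b^2\log(8b^2) + 4b^2\log(1/\delta)$, hence $T_K < 8b^2\log(8b^2) + 8b^2\log(1/\delta)$. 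Plugging in $b^2 = 34^2(V^{\star}(s_0)+1)^2 D^2 S^2 A$, absorbing the slowly-growing factor $\log(8b^2)$ and the numerical constants into $\widetilde{O}(\cdot)$, and using $V^{\star}(s_0)\ge 1$ (true in the uniform-cost setting, where at least one unit-cost step is needed to reach $\overline{s}$) to bound $(V^{\star}(s_0)+1)^2 = O(V^{\star}(s_0)^2)$, this regime yields $T_K = \widetilde{O}\!\big(V^{\star}(s_0)^2 D^2 S^2 A \log(1/\delta)\big)$. Combining the two regimes gives the stated bound.

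The main thing to be careful about is using the bound of Lem.\,\ref{lemma_regret_SSP_UCRL_constant_costs_communicating} as a genuine \emph{anytime} guarantee, so that it can legitimately be evaluated at the realized (random) value of $T_K$; once that is in place, the case split and the log-linearization that kills the self-reference $T_K \lesssim b^2\log(T_K/\delta)$ are routine algebra.
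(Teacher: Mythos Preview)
Your argument is correct. The paper takes a closely related but slightly different route: instead of a case split, it doubles the basic inequality to obtain $T_K \le 2(V^\star(s_0)+1)K + \big(-T_K + 68(V^\star(s_0)+1)DS\sqrt{A\,T_K\log(T_K/\delta)}\big)$ and then bounds the bracketed term \emph{uniformly in $T_K$} via a technical lemma of \citet[Lem.\,8]{kazerouni2017conservative}, which controls expressions of the form $-a_3 x + a_1\sqrt{x}\log(a_2 x)$ by $O\!\big(\tfrac{a_1^2}{a_3}\log^2(a_1\sqrt{a_2}/a_3)\big)$. This sidesteps the need for your $T_K\le 2a$ versus $T_K>2a$ split and yields the additive form directly. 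Your approach, by contrast, is fully self-contained and avoids invoking an external lemma; the log-linearization $\log x \le \varepsilon x + \log(1/\varepsilon)$ is exactly the kind of elementary step that underlies such lemmas anyway. Both arguments rely on the same crucial input, namely the anytime nature of the regret bound so that it may be evaluated at the random $T_K$, and both land on the same $\widetilde O$ conclusion.
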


\begin{proof}
With probability at least $1-\delta$, we have from the proof of Lem.\,\ref{lemma_regret_SSP_UCRL_constant_costs_communicating} that
\begin{align*}
    T_K - \left(V^{\star}(s_0)+1\right) K \leq 34 \left(V^{\star}(s_0)+1\right) D S \sqrt{A T_K \log\left(\frac{T_K}{\delta}\right)}.
\end{align*}
This implies that
\begin{align*}
    T_K \leq 2\left(V^{\star}(s_0)+1\right) K \underbrace{- T_K + 68 \left(V^{\star}(s_0)+1\right) D S \sqrt{A T_K \log\left(\frac{T_K}{\delta}\right)}}_{:= (y)},
\end{align*}
where $(y)$ can be bounded using Lem.\,\ref{lemma_kazerouni_1} (with the constants $a_1 = 68 \left(V^{\star}(s_0)+1\right) D S \sqrt{A}$, $a_2 = \frac{1}{\delta}$ and $a_3=1$) as follows
\begin{align*}
    (y) \leq \frac{16}{9} \left(68 \left(V^{\star}(s_0)+1\right) D S \sqrt{A} \right)^2 \left[ \log\left(\frac{136 \left(V^{\star}(s_0)+1\right) D S \sqrt{A} e}{\sqrt{\delta}} \right)\right]^2.
\end{align*}

\end{proof}

\begin{lemma}
Consider an (extended) MDP $\widetilde{M}$ and define $L_{\widetilde{M}}$ as the associated optimal (extended) Bellman operator (of undiscounted value iteration). Given $h_0 = 0$ and $h_i = (L_{\widetilde{M}})^i h_0$ we have that
\begin{align*}
    \forall s_1, s_2 \in \mathcal{S}', h_i(s_2) - h_i(s_1) \leq r_{\max}\,\varphi_{\widetilde{M}}(s_1 \rightarrow s_2),
\end{align*}
where $\varphi_{\widetilde{M}}(s_1 \rightarrow s_2)$ is the minimum expected shortest path from $s_1$ to $s_2$ in $\widetilde{M}$ and $r_{\max}$ is the maximal state-action reward.
\label{lemma_bound_span_bias}
\end{lemma}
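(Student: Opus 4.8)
The plan is to prove the inequality by induction on the iteration index $i$, interpreting $h_i(s)$ as the optimal value of an $i$-step finite-horizon control problem in $\widetilde{M}$ with per-step reward bounded by $r_{\max}$. The base case $i=0$ is immediate since $h_0 \equiv 0$, so $h_0(s_2) - h_0(s_1) = 0 \leq r_{\max}\,\varphi_{\widetilde M}(s_1 \to s_2)$ (the shortest path length is nonnegative). For the inductive step, fix $s_1, s_2 \in \mathcal{S}'$ and consider an (extended) action $a$ together with a transition kernel $\widetilde p(\cdot \mid s_1, a)$ that achieves $h_{i}(s_1) = L_{\widetilde M} h_{i-1}(s_1) = r(s_1,a) + \sum_{y} \widetilde p(y \mid s_1, a)\, h_{i-1}(y)$. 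The key idea is to construct a suboptimal $i$-step policy starting from $s_2$ that first ``navigates'' along an optimal path from $s_2$ toward $s_1$ and then behaves optimally once it reaches $s_1$; comparing its value to $h_i(s_2)$ (which dominates it by optimality) yields the bound.

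Concretely, I would argue as follows. Let $\varphi := \varphi_{\widetilde M}(s_1 \to s_2)$. Since $h_i(s_2) \geq$ the value of \emph{any} $i$-step policy from $s_2$, pick the policy that follows the shortest-path policy from $s_2$ to $s_1$; on the (random) step at which it hits $s_1$, it has accumulated nonnegative reward (at least $0$, since rewards are in $[0,r_{\max}]$, and it has "spent" an expected number of steps equal to some quantity $\leq \varphi$ hence "lost" at most $r_{\max}\varphi$ relative to an optimal run of the same length). One clean way to make this rigorous: consider instead the $(i+1)$-step problem. We have $h_{i+1}(s_2) \geq$ [reward collected by taking one optimal shortest-path step from $s_2$] $+ \mathbb{E}[h_{i}(\text{next state})]$, and more carefully, by unrolling the shortest-path policy and using the recursion $h_{j}(s) \geq h_{j-1}(s)$ (monotonicity: $h_1 = L h_0 \geq h_0$, hence $h_j \geq h_{j-1}$ by monotonicity of $L$), one shows $h_i(s_2) \geq h_{i}(s_1) - r_{\max}\varphi$ by telescoping the one-step losses along the path. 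I would formalize the ``$\leq r_{\max}\varphi$ loss'' claim via: for any state $s$ and any (extended) action $a$, $L_{\widetilde M} h_{i-1}(s) \geq r(s,a) + \sum_y \widetilde p(y\mid s,a) h_{i-1}(y) \geq \sum_y \widetilde p(y \mid s,a) h_{i-1}(y)$, so moving probability mass one step along the path costs at most $r_{\max}$ in value (plus we may lose one unit of horizon, but $h_{i-1} \leq h_i$ compensates), then iterate over the expected path length $\varphi$.

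The main obstacle I anticipate is handling the \emph{random length} of the shortest path cleanly: $\varphi_{\widetilde M}(s_1 \to s_2)$ is an \emph{expected} hitting time, so the naive ``unroll for exactly $\varphi$ steps'' argument does not literally typecheck, and one must either (a) invoke an optional-stopping / Wald-type argument on the accumulated one-step value losses along the path, or (b) set up the induction on $i$ so that the shortest-path-policy comparison is done one step at a time, using $h_i \geq h_{i-1}$ to absorb the horizon decrement, and then take expectations so that the total expected loss telescopes to $r_{\max}\,\varphi_{\widetilde M}(s_1 \to s_2)$. I would go with approach (b): it keeps everything within the finite-horizon Bellman recursion already set up, avoids measure-theoretic subtleties, and the monotonicity fact $h_i \geq h_{i-1}$ (itself proved by a trivial induction using monotonicity of $L_{\widetilde M}$ and $h_1 \geq h_0 = 0$) is exactly the tool that lets the ``one step of travel costs $\leq r_{\max}$ in value'' bound accumulate correctly over the expected path length.
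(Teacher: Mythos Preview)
Your approach is essentially the standard argument of \citet[][Sect.\,4.3.1]{jaksch2010near}, which is exactly what the paper invokes, and the core idea --- lower-bound one state's $i$-step optimal value by a suboptimal policy that first travels along the shortest path to the other state and then behaves optimally, absorbing the horizon decrement via the monotonicity $h_{j-1} \leq h_j$ (equivalently $h_i - h_j \leq r_{\max}(i-j)$ for $j \leq i$) --- is correct.

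However, you have the roles of $s_1$ and $s_2$ reversed. To prove $h_i(s_2) - h_i(s_1) \leq r_{\max}\,\varphi_{\widetilde M}(s_1 \to s_2)$ you must \emph{lower-bound $h_i(s_1)$}: start the suboptimal policy from $s_1$, follow the shortest-path policy \emph{from $s_1$ to $s_2$} (whose expected hitting time is $\varphi_{\widetilde M}(s_1 \to s_2)$), and then act optimally from $s_2$ for the remaining horizon. Your construction instead starts from $s_2$ and travels to $s_1$; the hitting time along that route is $\varphi_{\widetilde M}(s_2 \to s_1)$, not the $\varphi := \varphi_{\widetilde M}(s_1 \to s_2)$ you defined, so the conclusion ``$h_i(s_2) \geq h_i(s_1) - r_{\max}\varphi$'' does not follow from your policy (and even if it did, it rearranges to $h_i(s_1) - h_i(s_2) \leq r_{\max}\varphi$, which is the wrong sign). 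Since $\varphi_{\widetilde M}$ is not symmetric in its arguments, this is not a cosmetic slip. Once you swap the starting state to $s_1$ and the target to $s_2$, your ``approach (b)'' (one-step telescoping with $h_i \geq h_{i-1}$) goes through and is just a reformulation of the cited argument.
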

\begin{proof}
    The proof follows from the application of the argument of \citet[][Sect.\,4.3.1]{jaksch2010near}.
\end{proof}

\begin{lemma}[\citealp{kazerouni2017conservative}, Lem.\,8]
For any $x \geq 2$ and $a_1, a_2, a_3 > 0$, the following holds
\begin{align*}
    -a_3 x + a_1 \sqrt{x} \log(a_2 x) \leq \frac{16 a_1^2}{9 a_3} \left[ \log\left(\frac{2 a_1 \sqrt{a_2} e}{a_3} \right)\right]^2.
\end{align*}
\label{lemma_kazerouni_1}
\end{lemma}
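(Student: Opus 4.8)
This is a one-variable calculus inequality, and the plan is to regard the left-hand side as a function of $x$ alone, $f(x) := -a_3 x + a_1\sqrt{x}\log(a_2 x)$, and to bound $\sup_{x>0} f(x)$, which majorizes the quantity of interest $\sup_{x\ge 2} f(x)$. Since $f$ is continuous on $(0,\infty)$ with $\lim_{x\to 0^+} f(x) = 0$ and $\lim_{x\to+\infty} f(x) = -\infty$, it is bounded above and its supremum over $(0,\infty)$ is either the boundary value $0$ (which is trivially below the right-hand side, a nonnegative quantity), or is attained at an interior stationary point $x^\star$ with $f'(x^\star)=0$. So the whole argument reduces to controlling $f$ at such a stationary point.

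First I would write out stationarity: since $f'(x) = -a_3 + \frac{a_1}{2\sqrt{x}}(\log(a_2 x) + 2)$, the equation $f'(x^\star)=0$ reads $2a_3\sqrt{x^\star} = a_1(\ell+2)$ with $\ell := \log(a_2 x^\star)$; in particular $\ell > -2$ since the right-hand side is positive. Substituting $\sqrt{x^\star} = \frac{a_1(\ell+2)}{2a_3}$ back into $f$ and simplifying yields the key closed form
\[
f(x^\star) = \frac{a_1^2(\ell+2)(\ell-2)}{4a_3} = \frac{a_1^2(\ell^2-4)}{4a_3}.
\]
If $-2 < \ell \le 2$ this is nonpositive and we are done; otherwise $\ell > 2$, and setting $s := \ell/2 > 1$ gives $f(x^\star) = \frac{a_1^2}{a_3}(s^2-1)$.

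It remains to express $\zeta := \frac{2a_1\sqrt{a_2}\,e}{a_3}$ along the same stationary locus. From $x^\star = \left(\frac{a_1(s+1)}{a_3}\right)^2$ and $\log(a_2 x^\star) = 2s$ one obtains $e^s = \frac{a_1\sqrt{a_2}(s+1)}{a_3}$, hence $\frac{a_1\sqrt{a_2}}{a_3} = \frac{e^s}{s+1}$ and $\zeta = \frac{2e^{s+1}}{s+1}$, so that
\[
\log\zeta = s + 1 + \log 2 - \log(s+1),
\]
which one checks is positive for every $s > -1$ (its minimum over $s>-1$ is $1+\log 2$, at $s=0$). The target bound $f(x^\star) \le \frac{16a_1^2}{9a_3}(\log\zeta)^2$ then reduces to $s^2-1 \le \frac{16}{9}(\log\zeta)^2$, for which — as $s>1$ and $\log\zeta>0$ — it suffices to show $s \le \frac{4}{3}\log\zeta = \frac{4}{3}\big(s+1+\log 2 - \log(s+1)\big)$. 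Rearranged, this is precisely the elementary inequality $\log(s+1) \le \frac{s}{4} + 1 + \log 2$: the function $\psi(s) := \frac{s}{4} + 1 + \log 2 - \log(s+1)$ is convex on $(-1,\infty)$, has its unique critical point at $s=3$ (where $\psi'(s) = \frac{1}{4} - \frac{1}{s+1}$ vanishes), and $\psi(3) = \frac{3}{4} + 1 - \log 2 > 0$.

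I expect no genuine obstacle here, only careful bookkeeping: the delicate part is carrying out the two back-substitutions exactly so that the parametrizations $f(x^\star) = \frac{a_1^2}{a_3}(s^2-1)$ and $\log\zeta = s+1+\log 2 - \log(s+1)$ come out right, since it is these that pin down the precise constant $\frac{16}{9} = \left(\frac{4}{3}\right)^2$. Once the problem is collapsed to the single variable $s$ the remaining inequality is routine, and the degenerate configurations (no interior stationary point, or $\ell\le 2$) are absorbed into the dichotomy "$\sup_{x>0} f$ is either $0$ or a stationary value".
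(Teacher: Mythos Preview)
Your proof is correct. The paper does not supply its own proof of this lemma; it is quoted verbatim from \citet{kazerouni2017conservative} and used as a black box, so there is no in-paper argument to compare against. Your approach---reducing to a one-variable maximization, computing $f(x^\star)=\frac{a_1^2(\ell^2-4)}{4a_3}$ at any stationary point, and then reparametrizing via $s=\ell/2$ to collapse the claim to the elementary inequality $\log(s+1)\le \frac{s}{4}+1+\log 2$---is clean and self-contained. The edge cases (no interior stationary point, or $\ell\le 2$) are handled by your dichotomy, and when there are two stationary points your pointwise bound applies to each, so the supremum is still controlled. The constant $\frac{16}{9}$ falls out exactly from $\psi(3)=\frac{7}{4}-\log 2>0$, as you note.
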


\begin{figure}[t]
    \centering
    \begin{tikzpicture}
    \node[draw, circle] (s0) at (0,0) {$s_0$};
    \node[draw, circle] (s1) at (-2,0) {$s_1$};
    \node[draw, circle] (sb) at (2,0) {$\overline{s}$};
    \draw[->] (s1) -- (s0) node[midway, above] {$a_{10}$};
    \draw[->] (s0) -- (sb) node[midway, above] {$a_{01}$};
    \path[->] (s0) edge[loop above] (s0) node[above, yshift=22pt] {$a_{00}$};
    \path[->] (sb) edge[loop above] (sb) node[above, yshift=22pt] {$a_{\overline{s}0}$};
    \node[below of=s1, node distance=20pt] {$r(s_1)=0$};
    \node[below of=s0, node distance=20pt] {$r(s_0)=0$};
    \node[below of=sb, node distance=20pt] {$r(\overline{s})=1$};
    \end{tikzpicture}
    \caption{A toy example of SSP-communicating ($D=2$) reward-based MDP.}
    \label{fig:sspcom}
\end{figure}
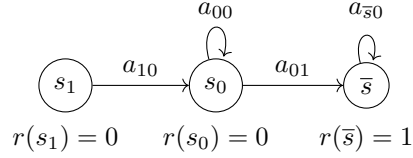
\begin{remark}
Consider the reward-based SSP $M$ in Fig.\,\ref{fig:sspcom}. $M$ is SSP-communicating while the associated MDP $M_\infty$ is weakly-communicating since $s_1$ is transient under every policy. There are just two possible deterministic policies: $\pi_0(s_0) = a_{00}$ and $\pi_1(s_0) = a_{01}$.
If rewards are unknown, \UCRLtwo will periodically alternate between policy $\pi_0$ and $\pi_1$ without converging to any of the two. This is due to the fact that, in the set of plausible MDPs $\mathcal{M}_k$ there will always be (i.e., $\forall k >0 $) an MDP with arbitrarily small but non-zero transition probability $\tilde{p}$ to state $s_1$, where, due to maximum uncertainty, there will be a self loop with probability $1$ and reward $r_{\max}$ (since $N_k(s_1,a_{10}) \in \{0,1\}$ depending on the initial state for any $k$). The probability $\tilde{p}$ will be sometimes higher for action $a_{00}$ and sometimes for $a_{01}$ depending on the counter $N_k$.
This is why \UCRLtwo will never converge.
However, if the rewards are known (which is always the case under Asm.\,\ref{assumption_uniform_costs} and as long as the goal state $\overline{s}$ is known), after a burn-in phase, it will be clear to \UCRLtwo that action $a_{00}$ is suboptimal. Even if there is probability $\tilde{p}>0$ to go to $s_1$, in $s_1$ the optimistic behaviour will be to go to $\overline{s}$ since it is the only one to provide reward. However, this imagined policy is suboptimal since it has an additional step and thus \UCRLtwo will select $\pi_1$. Note that while it is possible to make the MDP stochastic, this will lead to a longer burn-in phase but will not change the behaviour of \UCRLtwo in the long run.
\label{rmk_ucrl}
\end{remark}


\section{Proof of Lem.\,\ref{lemma_high_probability_intersection_bound_M_in_M_k}}

The proof is almost identical to the proof of \citet[][Thm.\,10]{improved_analysis_UCRL2B} and we report it below for completeness.

Recall that we define $\mathcal{M}_{k,j} := \{ \langle \mathcal{S}, \mathcal{A}, c, \widetilde{p} \rangle ~\vert ~ \widetilde{p} \in B_{k,j} \}$ to be the extended MDP defined by the confidence interval $B_{k,j} := \{ \widetilde{p} \in \mathcal{C} ~\vert ~ \widetilde{p}(\cdot \vert  \overline{s},a) = \mathds{1}_{\overline{s}} ~\textrm{and}~ \forall (s,a) \in \mathcal{S} \times \mathcal{A}, \norm{\widetilde{p}(\cdot \vert  s,a) - \widehat{p}_{k,j}(\cdot \vert  s,a)}_1 \leq \beta_{k,j}(s,a)\}$, with $\mathcal{C}$ the $S'$-dimensional simplex and
\begin{align*}
\beta_{k,j}(s,a) &:= \sqrt{\frac{8S\log\left(\frac{2 AN_{k,j}^+(s,a)}{\delta}\right)}{N_{k,j}^+(s,a)}}.
\end{align*}
Furthermore we introduce $B_{k,j}(s,a) := \{\widetilde{p} \in \mathcal{C} : \|\widetilde{p}(\cdot \vert s,a) - \widehat{p}_{k,j}(\cdot \vert s,a)\|_1 \leq \beta_{k,j}(s,a)\}$ (and similarly for $B_{k,j}(s,a,s')$).
We want to bound the probability of event $\mathcal{E}^{\mathcal{C}} := \bigcup_{k=1}^{+\infty} \bigcup_{j=1}^{J_k} \left\{M \not\in {\mathcal{M}}_{k,j} \right\}$.
As explained by \citet[][Chap.\,5]{bandittorcsaba}, when $(s,a)$ is visited for the $n$-th times, the next state that we observe is the $n$-th element of an infinite sequence of i.i.d.\ r.v. lying in $\mathcal{S'}$ with probability density function $p(\cdot \vert s,a)$. In \UCRLtwo \citep{jaksch2010near}, the sample means $\widehat{p}_{k,j}$ and the confidence intervals $B_{k,j}$ are defined as depending on $(k,j)$. Actually, these quantities depend only on the first $N_{k,j}(s,a)$ elements of the infinite i.i.d.\ sequences that we just mentioned. For the rest of the proof, we will therefore slightly change our notations and denote by $\widehat{p}_n(s'\vert s,a)$ and $B_n(s'\vert s,a)$ the sample means and confidence intervals after the first $n$ visits in $(s,a)$. Thus, the r.v.\ that we denoted by $\widehat{p}_{k,j}$ actually corresponds to $\widehat{p}_{N_{k,j}(s,a)}$ with our new notation (and similarly for $B_{k,j}$). This change of notation will make the proof easier.

If $M \not\in \mathcal{M}_{k,j}$, then there exists a $k\geq 1$ and $j \geq 0$ s.t.\ $p(\cdot \vert s,a) \not\in B_{N_{k,j}(s,a)}(s,a)$ for at least one $(s,a,s') \in \mathcal{S} \times \mathcal{A} \times \mathcal{S'}$. This means that there exists at least one value $n\geq 0$ s.t.\ $p(s'\vert s,a) \not\in B_n(s,a,s')$. Consequently we have the following inclusion
\begin{align*}
 \mathcal{E}^{\mathcal{C}} \subseteq \bigcup_{s,a} \bigcup_{n=0}^{+\infty} \left\{ p(\cdot \vert s,a) \not\in B_{n}(s,a) \right\}.
\end{align*}
Using Boole's inequality we have
\begin{align*}
        \mathbb{P}(\mathcal{E}^{\mathcal{C}}) \leq \sum_{s,a} \sum_{n=0}^{+\infty} \mathbb{P}(p(\cdot \vert s,a) \not\in B_n(s,a) ).
\end{align*}
Let us fix a tuple $(s,a) \in \mathcal{S} \times \mathcal{A}$ and define for all $n\geq 0$
 \begin{align*}
         \epsilon_{n}(s,a) := \sqrt{\frac{2 \log\left((2^{S'} - 2) 5 SA (n^+)^2/\delta\right) }{n^+}},
 \end{align*}
where $n^+ := \max\{n,1\}$.
Since $S' = S+1 \leq 2 S$, it is immediate to verify that almost surely, $\epsilon_{n}(s,a) \leq \beta_{n}(s,a)$.
Using Weissman's inequality~\citep{weissman2003inequalities, jaksch2010near} we have that for all $n \geq 1$
 \begin{align*}
         \mathbb{P}( \| p(\cdot \vert s,a) - \widehat{p}_n(\cdot \vert s,a) \|_1 \geq \beta_{n}(s,a) ) &\leq \mathbb{P}(\| p(\cdot \vert s,a) - \widehat{p}_n(\cdot \vert s,a) \|_1 \geq \epsilon_{n}(s,a) ) \leq \frac{\delta}{5 n^2SA}.
 \end{align*}
Note that when $n = 0$ (i.e., when there has not been any observation of $(s,a)$), $\epsilon_{0}(s,a) \geq 2$ so $\mathbb{P} (\| p(\cdot \vert s,a) - \widehat{p}_0(\cdot \vert s,a) \|_1 \geq \epsilon_{0}(s,a)) = 0$ by definition.
As a result, we have that for all $n \geq 1$
\begin{align*}
 \mathbb{P}( p(\cdot \vert s,a) \notin B_n(s,a) ) \leq \frac{\delta}{5 n^2SA},
\end{align*}
and this probability is equal to $0$ if $n=0$. Finally we obtain
\begin{align*}
 \mathbb{P} \left( \exists k \geq 1, \exists j \in [0, J_k], \text{ s.t.\ } M \not\in {\mathcal{M}}_{k,j} \right) \leq \sum_{s,a} \left(0 + \sum_{n=1}^{+\infty} \frac{\delta}{5 n^2 SA} \right) = \frac{\pi^2 \delta}{30} \leq \frac{\delta}{3},
\end{align*}
which concludes the proof.




\section{Proof of Lem.\,\ref{regret_key_lemma}}
\label{app:proof_regret_key_lemma}

For notational ease, in Sect.\,\ref{app:proof_regret_key_lemma} we adopt the notation $H_{k} :=H_{k,0}$, $\wt{\pi}_{k} := \wt{\pi}_{k,0}$, $\varepsilon_{k} := \varepsilon_{k,0}$ (i.e., we remove the subscript $0$).

Furthermore, for any $k \in [K]$ and $h \in [H_k]$, we denote by $s_{k,h}$ the state visited in the $h$-th step of episode $k$.

Assume from now on that the event $\mathcal{E}$ holds. From Lem.\,\ref{lemma_optimism} we have
\begin{align*}
    \mathcal{W}_K = \sum_{k=1}^K \left[ \left( \sum_{h=1}^{H_k} c(s_{k,h}, \wt{\pi}_k(s_{k,h})) \right) - V^{\star}(s_0) \right] \leq \sum_{k=1}^K \left[ \left( \sum_{h=1}^{H_k} c(s_{k,h}, \wt{\pi}_k(s_{k,h})) \right) - \widetilde{v}_k(s_{0}) \right] = \sum_{k=1}^K  \Theta_{k,1}(s_{k,1}),
\end{align*}
where $s_{k,1} := s_0$, and for any $k \in [K]$ and $h \in [H_k]$, we introduce
\begin{align*}
    \Theta_{k,h}(s_{k,h}) := \sum_{t=h}^{H_k} c(s_{k,t}, \wt{\pi}_k(s_{k,t})) - \widetilde{v}_{k}(s_{k,h}).
\end{align*}

For any $h \in [H_k - 1]$, we introduce
\begin{align*}
    \Phi_{k,h} := \widetilde{v}_k(s_{k,h+1}) - \sum_{y \in \mathcal{S}} p(y\, \vert \,  s_{k,h},\wt{\pi}_k(s_{k,h})) \widetilde{v}_k(y).
\end{align*}
We then have
\begin{align}
\nonumber
    \Theta_{k,h}(s_{k,h}) &= \sum_{t=h}^{H_k} c(s_{k,t}, \wt{\pi}_k(s_{k,t})) - \widetilde{v}_{k}(s_{k,h}) \\
\nonumber
    &\leq \sum_{t=h}^{H_k} c(s_{k,t}, \wt{\pi}_k(s_{k,t})) - \widetilde{\mathcal{L}}_{k} \widetilde{v}_{k}(s_{k,h}) + \varepsilon_k\\
\nonumber
    &\myeqa \sum_{t=h}^{H_k} c(s_{k,t}, \wt{\pi}_k(s_{k,t})) - c(s_{k,h},\wt{\pi}_k(s_{k,h})) - \sum_{y \in \mathcal{S}} \widetilde{p}_k(y \, \vert \,  s_{k,h},\wt{\pi}_k(s_{k,h})) \widetilde{v}_k(y) + \varepsilon_k\\
\nonumber
    &= \sum_{t=h+1}^{H_k} c(s_{k,t}, \wt{\pi}_k(s_{k,t})) - \sum_{y \in \mathcal{S}} [\widetilde{p}_k(y \, \vert \,  s_{k,h},\wt{\pi}_k(s_{k,h})) - p(y\, \vert \,  s_{k,h},\wt{\pi}_k(s_{k,h})) + p(y\, \vert \,  s_{k,h},\wt{\pi}_k(s_{k,h}))] \widetilde{v}_k(y) + \varepsilon_k\\
\nonumber
    &\myineeqb \sum_{t=h+1}^{H_k} c(s_{k,t}, \wt{\pi}_k(s_{k,t})) - \sum_{y \in \mathcal{S}} p(y\, \vert \,  s_{k,h},\wt{\pi}_k(s_{k,h})) \widetilde{v}_k(y) \\
\nonumber
    &\quad + \norm{p(\cdot \, \vert  s_{k,h},\wt{\pi}_k(s_{k,h})) - \widetilde{p}_k(\cdot \, \vert  s_{k,h}, \wt{\pi}_k(s_{k,h}))}_{1} \norm{\widetilde{v}_k}_{\infty} + \varepsilon_k\\
\nonumber
    &\myineeqc \sum_{t=h+1}^{H_k} c(s_{k,t}, \wt{\pi}_k(s_{k,t})) - \sum_{y \in \mathcal{S}} p(y\, \vert \,  s_{k,h},\wt{\pi}_k(s_{k,h})) \widetilde{v}_k(y) + 2 \beta_k(s_{k,h}, \wt{\pi}_k(s_{k,h})) c_{\max} D + \varepsilon_k\\
\nonumber
    &= \Theta_{k,h+1}(s_{k,h+1}) + \widetilde{v}_k(s_{k,h+1}) - \sum_{y \in \mathcal{S}} p(y\, \vert \,  s_{k,h},\wt{\pi}_k(s_{k,h})) \widetilde{v}_k(y) + 2 \beta_k(s_{k,h}, \wt{\pi}_k(s_{k,h})) c_{\max} D + \varepsilon_k\\
    &= \Theta_{k,h+1}(s_{k,h+1}) + \Phi_{k,h} + 2 \beta_k(s_{k,h}, \wt{\pi}_k(s_{k,h})) c_{\max} D + \varepsilon_k,
    \label{eq:Theta.expansion1}
\end{align}
where (a) stems from the fact that $\wt{\pi}_k$ is the greedy policy with respect to $(\widetilde{v}_{k}, \varepsilon_k)$, (b) leverages that $\wt{v}_k \geq 0$ component-wise and (c) combines Lem.\,\ref{lemma_optimism} and \ref{lemma_1}. Furthermore, whatever the value of $s_{k,H_k}$ we have
\begin{align*}
    \Theta_{k,H_k}(s_{k,H_k}) &= c(s_{k,H_k}, \wt{\pi}_k(s_{k,H_k})) - \widetilde{v}_k(s_{k, H_k}) \\
    &\leq c(s_{k,H_k}, \wt{\pi}_k(s_{k,H_k})) - \widetilde{\mathcal{L}}_{k} \widetilde{v}_k(s_{k,H_k}) + \varepsilon_k\\
    &= c(s_{k,H_k}, \wt{\pi}_k(s_{k,H_k})) - c(s_{k,H_k}, \wt{\pi}_k(s_{k,H_k})) - \sum_{y \in \mathcal{S}} \widetilde{p}_k(y \, \vert \,  s_{k,H_k},\wt{\pi}_k(s_{k,H_k})) \underbrace{\widetilde{v}_k(y)}_{\geq 0} + \varepsilon_k \\
    &\leq \varepsilon_k.
\end{align*}

By telescopic sum we get (using Eq.\,\ref{eq:Theta.expansion1})
\begin{align*}
    \Theta_{k,1}(s_{k,1}) &= \sum_{h=1}^{H_k-1} (\Theta_{k,h}(s_{k,h}) - \Theta_{k,h+1}(s_{k,h+1})) + \Theta_{k,H_k}(s_{k,H_k}) \\
    &\leq \sum_{h=1}^{H_k-1} \Phi_{k,h} + 2 c_{\max} D \sum_{h=1}^{H_k-1} \beta_k(s_{k,h},\wt{\pi}_k(s_{k,h})) + (H_k-1) \varepsilon_k + \Theta_{k,H_k}(s_{k,H_k}) \\
    &\leq \sum_{h=1}^{H_k-1} \Phi_{k,h} + 2 c_{\max} D \sum_{h=1}^{H_k-1} \beta_k(s_{k,h},\wt{\pi}_k(s_{k,h})) + H_k \varepsilon_k.
\end{align*}

Summing over the episode index $k$ yields
\begin{align*}
    \sum_{k=1}^K  \Theta_{k,1}(s_{k,1}) \leq \underbrace{\sum_{k=1}^K \sum_{h=1}^{H_k-1} \Phi_{k,h}}_{:= X_K} + 2 c_{\max} D \underbrace{\sum_{k=1}^K \sum_{h=1}^{H_k-1} \beta_k(s_{k,h},\wt{\pi}_k(s_{k,h}))}_{:= Y_K} + \underbrace{\sum_{k=1}^K H_k \varepsilon_k}_{:= Z_K}.
\end{align*}

In order to bound $X_K$, we can write
\begin{align*}
    &\mathbb{P}\left( \sum_{k=1}^K \sum_{h=1}^{H_k - 1} \Phi_{k,h} \geq 2c_{\max}D \sqrt{2 \left(\sum_{k=1}^K H_k \right) \log\left(\frac{2 \left(\sum_{k=1}^K H_k \right)^2}{\delta}\right)} \right) \\
    &\leq \sum_{n=1}^{+\infty} \mathbb{P}\left( \sum_{k=1}^K \sum_{h=1}^{H_k} \Phi_{k,h} \geq 2c_{\max}D \sqrt{2 n \log\left(\frac{2 n^2}{\delta}\right)} ~ \bigcap ~ \sum_{k=1}^K H_k = n\right) \\
    &\leq \sum_{n=1}^{+\infty} \mathbb{P}\left( \sum_{t=1}^n \widetilde{\Phi}_t \geq 2c_{\max}D \sqrt{2 n \log\left(\frac{2 n^2}{\delta}\right)} \right),
\end{align*}
where we introduce for any $t > 0$, $$\widetilde{\Phi}_t = \left\{
    \begin{array}{cl}
        \Phi_{\widetilde{k}_t,t - Z_t} & \mbox{if~} t > Z_t, \\
        \Phi_{\widetilde{k}_t+1,1} & \mbox{otherwise},
    \end{array}
\right.$$
where $\widetilde{k}_t = \max{ \{ k ~\vert~ \sum_{k'=1}^{k} H_{k'} \leq t \} }$ and $Z_t = \sum_{k'=1}^{\widetilde{k}_t-1} H_{k'} + 1$, i.e., we map a value $t$ to the double index $(k,h)$. Denote by $\mathcal{G}_q$ the history of all random events up to (and including) step $h$ of episode $k$ (i.e.,~$q = \sum_{k'=1}^{k-1} H_k + h$). We have $\mathbb{E}\left[\Phi_{k,h} \vert \mathcal{G}_q \right] = 0$ (since $\widetilde{v}_k(\overline{s})=0$), and furthermore the stopping time $H_k$ is selected at the beginning of episode $k$ so it is adapted w.r.t.\,$\mathcal{G}_q$. Hence, $(\widetilde{\Phi}_t)$ is a martingale difference sequence, such that $\abs{\widetilde{\Phi}_t} \leq 2 c_{\max} D$. For any fixed $n > 0$, we thus have from Azuma-Hoeffding's inequality that
\begin{align*}
    \mathbb{P}\left( \sum_{t=1}^n \widetilde{\Phi}_t \geq 2c_{\max}D \sqrt{2 n \log\left(\frac{2 n^2}{\delta}\right)} \right) \leq \frac{\delta}{2 n^2}.
\end{align*}
As a result, from a union bound over all possible values of $n > 0$, we have with probability at least $1-\frac{2\delta}{3}$,
\begin{align}
    \sum_{k=1}^K \sum_{h=1}^{H_k - 1} \Phi_{k,h} \leq 2c_{\max}D \sqrt{2 \left(\sum_{k=1}^K H_k \right) \log\left(\frac{3 \left(\sum_{k=1}^K H_k \right)^2}{\delta}\right)}.
\label{azuma_martingale}
\end{align}

We now proceed in bounding $Y_K$ using a pigeonhole principle. Denoting by $N^{(1)}$ the counter of samples \textit{only} collected during attempts in phase \ding{172}, we get
\begin{align*}
    \sum_{k=1}^K \sum_{h=1}^{H_k-1} \sqrt{\frac{1}{N_k^{(1)}(s_{k,h},\wt{\pi}_k(s_{k,h}))}} \leq \sum_{s,a} \sum_{n=1}^{N_K^{(1)}(s,a)} \sqrt{\frac{1}{n}} \leq \sum_{s,a} 2 \sqrt{ N_K^{(1)}(s,a) } \leq 2 \sqrt{SA} \sqrt{\sum_{s,a} N_K^{(1)}(s,a)} \leq 2 \sqrt{S A T_{K,1}}.
\end{align*}

We have $N_k^+(s,a) \geq N_k^{(1)+}(s,a)$ so by applying the technical Lem.\,\ref{technical_lemma_function} (and considering that $A \geq 2$ since if $A=1$ there is no learning problem), we get
\begin{align*}
    \beta_k(s,a) = \sqrt{\frac{8S\log\left(\frac{2AN_k^+(s,a)}{\delta}\right)}{N_k^+(s,a)}} \leq \sqrt{\frac{8S\log\left(\frac{2A N_k^{(1)+}(s,a)}{\delta}\right)}{N_k^{(1)+}(s,a)}}.
\end{align*}
Therefore we obtain
\begin{align}
    \sum_{k=1}^K \sum_{h=1}^{H_k-1} \beta_k(s_{k,h},\wt{\pi}_k(s_{k,h})) \leq 2 S \sqrt{8 A T_{K,1} \log\left(\frac{2 A T_{K,1}}{\delta}\right)}.
\label{pigeonhole_principle}
\end{align}

We finally bound $Z_K$. We have for any $k \in [K]$, $H_k \leq \Omega_K$ and we select $\varepsilon_k = \frac{c_{min}}{2 t_{k,0}}$, hence we have $T_{K,1} \leq \Omega_K K$ and
\begin{align*}
    \sum_{k=1}^K H_k \varepsilon_k \leq \frac{c_{\min}}{2} \sum_{t=1}^{T_{K,1}} \frac{\Omega_K}{t} \leq \frac{c_{\min}}{2} \Omega_K \left(1 + \log(\Omega_K K)\right).
\end{align*}

Putting everything together, a union bound and Lem.\,\ref{lemma_high_probability_intersection_bound_M_in_M_k} yields with probability at least $1-\delta$,
\begin{align*}
     \sum_{k=1}^K \left[ \left( \sum_{h=1}^{H_k} c(s_{k,h}, \wt{\pi}_k(s_{k,h})) \right) - \widetilde{v}_k(s_{0}) \right] &\leq 4c_{\max} D S \sqrt{8 A T_{K,1} \log\left(\frac{2 A T_{K,1}}{\delta}\right)} \\ & \quad + 2c_{\max}D \sqrt{2 T_{K,1} \log\left(\frac{3 T_{K,1}^2}{\delta}\right)} + \frac{c_{\min}}{2} \Omega_K \left(1 + \log(\Omega_K K)\right).
\end{align*}

\begin{lemma}
    For any constant $c \geq 4$, the function $f(x) := \sqrt{ \frac{\log(cx)}{x}}$ is a non-increasing function for $x \geq 1$.
\label{technical_lemma_function}
\end{lemma}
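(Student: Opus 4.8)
The plan is to reduce the claim to a statement about the inner function $g(x) := \log(cx)/x$. Indeed $f(x) = \sqrt{g(x)}$, and since $t \mapsto \sqrt{t}$ is non-decreasing on $[0,\infty)$, it suffices to show that $g$ is well-defined, nonnegative, and non-increasing on $[1,\infty)$. Nonnegativity (which also justifies taking the square root) is immediate: for $x \geq 1$ and $c \geq 4$ we have $cx \geq 4 > 1$, hence $\log(cx) > 0$ and therefore $g(x) > 0$.

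Next I would differentiate $g$. By the quotient rule, for $x > 0$,
\[
g'(x) = \frac{(1/x)\cdot x - \log(cx)}{x^2} = \frac{1 - \log(cx)}{x^2}.
\]
Since $x^2 > 0$, the sign of $g'(x)$ equals the sign of $1 - \log(cx)$, which is $\leq 0$ exactly when $\log(cx) \geq 1$, i.e.\ when $cx \geq e$. For $x \geq 1$ and $c \geq 4$ we have $cx \geq 4 > e$, so $g'(x) \leq 0$ for every $x \in [1,\infty)$.

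Finally I would conclude: $g$ is differentiable on $[1,\infty)$ with nonpositive derivative there, hence non-increasing on that interval; composing with the non-decreasing map $t \mapsto \sqrt{t}$ (valid because $g \geq 0$ on $[1,\infty)$) shows that $f = \sqrt{g}$ is non-increasing on $[1,\infty)$, as claimed. There is no real obstacle in this argument; the only point worth emphasizing is that the hypothesis $c \geq 4$ is used precisely to guarantee $cx > e$ uniformly over all $x \geq 1$ — any constant strictly larger than $e$ would suffice, and $4$ is the convenient value that appears elsewhere in the analysis (e.g.\ in Lem.\,\ref{lemma_high_probability_intersection_bound_M_in_M_k}).
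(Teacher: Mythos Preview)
Your proof is correct and follows essentially the same approach as the paper: define $g = f^2$, compute $g'(x) = (1 - \log(cx))/x^2 \leq 0$ for $x \geq 1$ using $c \geq 4 > e$, and conclude by composing with the non-decreasing square root. Your version is actually a bit more careful than the paper's in explicitly verifying $g \geq 0$ so that the square root and the composition argument are justified.
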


\begin{proof}
    Introduce the function $g(x) := f(x)^2$. We have $g'(x) = \frac{1 - \log(cx)}{x^2} \leq 0$ since $x \geq 1 \geq \frac{e}{c}$. So $g$ is non-increasing, hence by composition of functions, $f = \sqrt{g}$ is also non-increasing.
\end{proof}

Interestingly, the bound of Lem.\,\ref{regret_key_lemma} resembles a combination of finite- and infinite-horizon guarantees. On the one hand, we have the standard dependency of finite-horizon problems on the horizon $H$ and number of episodes $K$. On the other hand, $H$ is no longer bounding the range of the value functions, which is replaced by $c_{\max} D$ as in infinite-horizon problems.


\section{Proof of Lem.\,\ref{lemma_bound_H_k_analysis_1}}
\label{app:bound.maximal.episode.length}

We start the proof of Lem.\,\ref{lemma_bound_H_k_analysis_1} by deriving a general result --- which may be of independent interest --- that \textit{upper bounds the moments of any discrete PH distribution}.\footnote{Note that while there actually exists a \textit{closed-form} expression of the moments of a continuous PH distribution (see e.g.,~\citealp[][Eq.\,2.13]{latouche1999introduction}), it does not extend to the discrete case.}

\begin{lemma} Consider an absorbing Markov Chain with state space $\mathcal{Y} \cup \{ \overline{y} \}$, a single absorbing state $\overline{y}$ and $\abs{\mathcal{Y}}$ transient states. Denote by $Q \in \mathbb{R}^{Y \times Y}$ the transition matrix within the states in $\mathcal{Y}$ and by $\tau (y) := \tau (y\rightarrow \overline{y})$ the first hitting time of state $\overline{y}$ starting from state $y$. Suppose that there exists a constant $\lambda \geq 2$ such that for any state $y \in \mathcal{Y}$, we have $\mathbb{E}\left[\tau(y \rightarrow \overline{y}) \right] \leq \lambda$.
Then for any $r \geq 1$ and any state $y \in \mathcal{Y}$, we have
\begin{align*}
    \mathbb{E}\left[\tau(y)^r\right] \leq 2 \left( r \lambda \right)^{r}.
\end{align*}
\label{lem:raw_moments_discrete_ph_distribution}
\end{lemma}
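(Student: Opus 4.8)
The plan is to route through the \emph{fundamental matrix} $N := (I-Q)^{-1} = \sum_{n\ge 0}Q^n$ of the absorbing chain. This matrix is well defined with nonnegative entries: the hypothesis $\mathbb{E}[\tau(y)]\le\lambda<\infty$ forces $\tau(y)<\infty$ almost surely, hence every state of $\mathcal{Y}$ is transient and $\rho(Q)<1$. Writing $h:=N\mathds{1}$, the standard identity $\mathbb{E}[\tau(y)]=\sum_{n\ge0}\mathbb{P}(\tau(y)>n)=(N\mathds{1})_y$ (using $\mathbb{P}(\tau(y)>n)=(Q^n\mathds{1})_y$ from Prop.\,\ref{equation_discrete_phase_type_distribution}) together with the assumption gives $h\le\lambda\mathds{1}$ componentwise; and since $N$ is nonnegative with row-sum vector $h$, a one-line induction yields $N^k\mathds{1}\le\lambda^{k}\mathds{1}$ for all $k\ge 0$. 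The key realization is that the quantity interacting \emph{linearly} with $Q$ is not an ordinary moment of $\tau(y)$ but a \emph{factorial-type moment}.

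Concretely, I would first prove the identity $\mathbb{E}\!\left[\binom{\tau(y)+r-1}{r}\right]=(N^{r}\mathds{1})_y$ for integer $r\ge1$. This combines three ingredients: the hockey-stick identity $\binom{m+r-1}{r}=\sum_{i=0}^{m-1}\binom{i+r-1}{r-1}$ applied with $m=\tau(y)$; Prop.\,\ref{equation_discrete_phase_type_distribution} to write $\mathbb{P}(\tau(y)>i)=(Q^i\mathds{1})_y$, so that after exchanging sum and expectation (Tonelli, nonnegative summands) $\mathbb{E}\binom{\tau(y)+r-1}{r}=\sum_{i\ge0}\binom{i+r-1}{r-1}(Q^i\mathds{1})_y$; and the matrix power-series identity $\sum_{i\ge0}\binom{i+r-1}{r-1}Q^i=(I-Q)^{-r}=N^r$, which I would establish by induction on $r$ using Pascal's rule $\binom{i+r-1}{r-1}-\binom{i+r-2}{r-1}=\binom{i+r-2}{r-2}$ (the base case $r=1$ being the Neumann series $\sum_i Q^i=(I-Q)^{-1}$), convergence being guaranteed by $\rho(Q)<1$.

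Given this identity, the conclusion is immediate: $N^{r}\mathds{1}=N^{r-1}h\le\lambda\,N^{r-1}\mathds{1}\le\lambda^{r}\mathds{1}$, hence $\mathbb{E}\binom{\tau(y)+r-1}{r}\le\lambda^{r}$. Since $r!\binom{\tau+r-1}{r}=\tau(\tau+1)\cdots(\tau+r-1)\ge\tau^{r}$ (every factor is $\ge\tau\ge0$), taking expectations gives $\mathbb{E}[\tau(y)^{r}]\le r!\,\lambda^{r}\le r^{r}\lambda^{r}=(r\lambda)^{r}\le 2(r\lambda)^{r}$ --- in fact slightly stronger than the claimed bound.

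The only subtleties are bookkeeping: checking $\rho(Q)<1$ so that $N$ exists and all infinite sums converge (immediate from $\tau(y)<\infty$ a.s.), and justifying the termwise interchange in the matrix series (standard once the spectral radius is below $1$). I expect the main --- indeed the only nontrivial --- step to be recognizing that it is factorial moments, not ordinary moments, that linearize against $Q$; once that is seen, everything reduces to the elementary inequality $N^{k}\mathds{1}\le\lambda^{k}\mathds{1}$. As a fallback one could instead iterate Markov's inequality to get $\mathbb{P}(\tau(y)>\lceil2\lambda\rceil)\le\tfrac12$ uniformly in $y$, hence $\mathbb{P}(\tau(y)>m\lceil2\lambda\rceil)\le2^{-m}$ by the strong Markov property, and then bound $\mathbb{E}[\tau(y)^{r}]$ by summing the resulting geometric-type tail; this also works but requires more care to keep the leading constant as small as $2$, so I would prefer the factorial-moment argument above.
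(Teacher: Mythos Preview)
Your proposal is correct and in fact yields a sharper bound than the paper obtains. Both arguments share the same starting observation --- that factorial-type moments of $\tau$ are expressed through powers of the fundamental matrix $N=(I-Q)^{-1}$, and that the row sums of $N$ equal the expected hitting times, so $\|N\|_\infty\le\lambda$ (equivalently, $N^k\mathds{1}\le\lambda^k\mathds{1}$). The difference is in which factorial moments are used. The paper works with the \emph{falling} factorials, invoking the closed form $\mathbb{E}[(\tau)_j]=j!\,\mathds{1}_y^\top N^{j}Q^{j-1}\mathds{1}\le j!\,\lambda^{j}$; since $(\tau)_j\le\tau^{j}$ this inequality points the wrong way, so the paper must expand $\mathbb{E}[\tau^{r}]=\sum_{j}\left\{r\atop j\right\}\mathbb{E}[(\tau)_j]$, bound the Stirling numbers by $j^{r}/j!$, and then sum the resulting geometric series, which is where the factor $2$ and the hypothesis $\lambda\ge 2$ enter. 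You instead use the \emph{rising} version $r!\binom{\tau+r-1}{r}=\tau(\tau+1)\cdots(\tau+r-1)\ge\tau^{r}$, whose expectation is exactly $r!\,(N^{r}\mathds{1})_y\le r!\,\lambda^{r}$; this immediately gives $\mathbb{E}[\tau^{r}]\le r!\,\lambda^{r}\le(r\lambda)^{r}$, bypassing the Stirling numbers entirely and rendering both the constant $2$ and the assumption $\lambda\ge 2$ unnecessary. Your route is thus more elementary and strictly stronger; the paper's route has the minor advantage of quoting a textbook formula for the factorial moments of a discrete phase-type distribution rather than deriving the analogous identity from scratch.
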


\begin{proof}
We first leverage a closed-form expression of the \textit{factorial moments} of discrete PH distributions. For any $r\geq 1$, denoting by $(\tau)_r$ the $r$-th factorial moment of $\tau$, i.e.,~$(\tau)_r := \tau (\tau - 1)...(\tau - r + 1)$, we have (see e.g.,~\citealp[][Eq.\,2.15]{latouche1999introduction}) that for any starting state $y \in \mathcal{Y}$,
\begin{align*}
    \mathbb{E}\left[ (\tau)_r(y) \right] = r! \mathds{1}_{y}^\top (I - Q)^{-r} Q^{r-1} \mathds{1}.
\end{align*}
Recalling that the $\norm{\cdot}_{\infty}$ (resp.\,$\norm{\cdot}_{1}$) norm of a matrix is equal to its maximum absolute row (resp.\,column) sum, we have by H{\"o}lder's inequality, for any $j \in [r]$,
\begin{align}
     \mathbb{E}\left[ (\tau)_j(y) \right] &= j! \left\langle (\mathds{1}_{y}^\top (I - Q)^{-j})^\top, Q^{j-1} \mathds{1} \right\rangle \nonumber \\
    &\leq j! \norm{(\mathds{1}_y^\top (I - Q)^{-j})^\top}_{1} \norm{Q^{j-1} \mathds{1}}_{\infty} \nonumber \\
    &= j! \norm{((I - Q)^{-j})^\top \mathds{1}_y}_{1} \norm{Q^{j-1} \mathds{1}}_{\infty} \nonumber \\
    &\leq j! \norm{((I-Q)^{-j})^\top}_{1} \norm{\mathds{1}_y}_{1} \norm{Q^{j-1}}_{\infty} \norm{\mathds{1}}_{\infty} \nonumber \\
    &\leq j! \norm{(I-Q)^{-j}}_{\infty} \norm{Q^{j-1}}_{\infty} \nonumber \\
    &\leq j! \norm{(I-Q)^{-1}}_{\infty}^{j},
\label{ineq_moments_1}
\end{align}
where the last inequality uses the fact that $\norm{Q^{j-1}}_{\infty} \leq 1$ since the matrix $Q^{j-1}$ is substochastic. There remains to upper bound the quantity $\norm{(I - Q) ^{-1}}_{\infty}$. Consider a state $$z \in \argmax_{y \in \mathcal{Y}} \sum_{y' \in \mathcal{Y}} (I - Q)^{-1}_{y y'}.$$ By choice of $z$ and non-negativity of the matrix $(I-Q)^{-1}$, we have
\begin{align*}
\norm{(I - Q) ^{-1}}_{\infty} &= \sum_{y' \in \mathcal{Y}} \abs{(I - Q)^{-1}_{z y'}} \\ &= \sum_{y' \in \mathcal{Y}} (I - Q)^{-1}_{z y'} \\ &=\mathds{1}_z^\top (I - Q)^{-1} \mathds{1} \\
&= \sum_{n=0}^{\infty} \mathds{1}_z^\top Q^n \mathds{1}.
\end{align*}
Since $\tau(z)$ follows a discrete PH distribution, we have from Lem.\,\ref{equation_discrete_phase_type_distribution} that
\begin{align*}
    \mathds{1}_z^\top Q^n \mathds{1} = \mathbb{P}(\tau(z) > n).
\end{align*}
Consequently,
\begin{align}
\norm{(I - Q) ^{-1}}_{\infty} = \sum_{n=0}^{\infty} \mathbb{P}(\tau(z) > n) = \mathbb{E}[\tau(z)] \leq \lambda.
\label{ineq_bound_infinite_norm}
\end{align}
Plugging Eq.\,\eqref{ineq_bound_infinite_norm} into Eq.\,\eqref{ineq_moments_1} thus yields for any $y \in \mathcal{Y}$,
\begin{align*}
     \mathbb{E}\left[ (\tau)_j(y) \right] \leq j! \lambda^j.
\end{align*}
Furthermore, the (raw) moment of a random variable can be expressed in terms of its factorial moments by the following formula (see e.g.,~\citealp[][Eq.\,3.1]{mahmoodinductive})
\begin{align*}
    \mathbb{E}\left[ \tau(y)^r \right] = \sum_{j=1}^r \left\{ {r \atop j} \right\} \mathbb{E}\left[ (\tau)_j(y) \right],
\end{align*}
where the curly braces denote Stirling numbers of the second kind, i.e.,
\begin{align*}
\left\{ {r \atop j} \right\} := \frac{1}{j!}\sum_{i=0}^{j}(-1)^{j-i}{j \choose i} i^r.
\end{align*}
Using the upper bound (see e.g.,~\citealp[][Eq.\,9]{canfield2002problem})
\begin{align*}
    \left\{ {r \atop j} \right\} \leq \frac{j^r}{j!},
\end{align*}
we obtain
\begin{align*}
    \mathbb{E}\left[ \tau(y)^r \right] &\leq \sum_{j=1}^r j^r \lambda^j.
\end{align*}
We conclude the proof of Lem.\,\ref{lem:raw_moments_discrete_ph_distribution} with the fact that
\begin{align*}
\sum_{j=1}^r j^r \lambda^j \leq r^r \sum_{j=1}^r \lambda^j \leq r^r \lambda \frac{\lambda^r - 1}{\lambda - 1} \leq r^r 2 \lambda^{r},
\end{align*}
where the last inequality holds since $\lambda \geq 2$.
\end{proof}

We are now ready to prove Lem.\,\ref{lemma_bound_H_k_analysis_1}.

For notational ease, in Sect.\,\ref{app:bound.maximal.episode.length} we adopt the notation $H_{k} :=H_{k,0}$, $\wt{\pi}_{k} := \wt{\pi}_{k,0}$, $\varepsilon_{k} := \varepsilon_{k,0}$ (i.e., we remove the subscript $0$).

Denote by $\mathcal{G}_{k-1}$ the history of all random events up to (and including) episode $k-1$. In this section as well as in Sect.\,\ref{app:bound.failure.events}, we will write $\mathbb{E}\left[ \mathds{1}_{\{ \tau_{\pi}^{p}(s) > H_k-1 \}} \, \vert \,  \mathcal{G}_{k-1} \right] = \mathbb{P}(\tau_{\pi}^{p}(s) > H_k-1)$, i.e.\,the probability $\mathbb{P}$ is only over the randomization of the sequence of states generated by the policy $\pi$ in the model $p$ starting from state $s$ (i.e., it is conditioned on $\mathcal{G}_{k-1}$, the policy $\pi$, the model $p$ and the starting state $s$).

Suppose that the event $\mathcal{E}$ holds and fix an episode $k \in [K]$. Denote by $\wt{Q} := Q_{\widetilde{\pi}_k}^{\widetilde{p}_k}$ the optimistic transition matrix within $\mathcal{S}$ of policy $\widetilde{\pi}_k$ in the transition model $\widetilde{p}_k$. Also, for any state $s \in \mathcal{S}$, denote by $\widetilde{\tau}(s) := \tau_{\widetilde{\pi}_k}^{\widetilde{p}_k}(s)$ the hitting time of $\overline{s}$ starting from $s$ following policy $\widetilde{\pi}_k$ in the transition model $\widetilde{p}_k$.

We introduce the Bellman operator $\mathcal{T}_{\varepsilon_k}^{\wt{\pi}_k}$ for policy $\wt{\pi}_k$, that verifies for any vector $v \in \mathbb{R}^S$ and state $s \in \mathcal{S}$,
\begin{align*}
\mathcal{T}_{\varepsilon_k}^{\wt{\pi}_k} v(s) := c(s, \wt{\pi}_k(s)) - \varepsilon_k + \sum_{y \in \mathcal{S}} \wt{p}_k(y  \, \vert \,  s,\wt{\pi}_k(s)) v(y),
\end{align*}
i.e., it corresponds to the operator $\widetilde{\mathcal{L}}^{\wt{\pi}_{k}}_k$ with $\varepsilon_k$ subtracted to all the costs. Note that its costs are all positive by choice of $\varepsilon_k = \frac{c_{\min}}{2 t_k}$. Combining Lem.\,\ref{lemma_optimism} and the fact that $\wt{\pi}_k$ is the greedy policy w.r.t.\,$\wt{v}_k$ yields that $\widetilde{\mathcal{L}}^{\wt{\pi}_{k}}_k \widetilde{v}_{k} = \widetilde{\mathcal{L}}_{k} \widetilde{v}_{k} \leq \widetilde{v}_{k} + \varepsilon_k$. Consequently, we have the following component-wise inequality
\begin{align*}
    \mathcal{T}_{\varepsilon_k}^{\wt{\pi}_k} \wt{v}_k \leq \wt{v}_k.
\end{align*}
By monotonicity of the operator $\mathcal{T}_{\varepsilon_k}^{\wt{\pi}_k}$~\citep{puterman2014markov,bertsekas1995dynamic}, we have for all $m > 0$,
\begin{align*}
    (\mathcal{T}_{\varepsilon_k}^{\wt{\pi}_k})^m \wt{v}_k \leq \wt{v}_k,
\end{align*}
and hence taking the limit $m \rightarrow + \infty$ yields $\wt{U}_{\wt{\pi}_k, \varepsilon_k} \leq \wt{v}_k$, where $\wt{U}_{\wt{\pi}_k, \varepsilon_k}$ is defined as the value function of policy $\wt{\pi}_k$ in the model $\wt{p}_k$ with $\varepsilon_k$ subtracted to all the costs, i.e.,
\begin{align*}
    \wt{U}_{\wt{\pi}_k, \varepsilon_k}(s) &:= \mathbb{E}_{\wt{p}_k}\left[ \sum_{t=1}^{\wt{\tau}(s)} \left( c(s_t, \wt{\pi}_k(s_t)) - \varepsilon_k \right) ~\mid~ s_1 = s \right] = \wt{V}_{\wt{\pi}_k}(s) - \varepsilon_k \mathbb{E}\left[\wt{\tau}(s) \right],
\end{align*}
where $\wt{V}_{\wt{\pi}_k}(s) := \mathbb{E}_{\wt{p}_k}\left[ \sum_{t=1}^{\wt{\tau}(s)} c(s_t, \wt{\pi}_k(s_t)) ~\mid~ s_1 = s \right]$ is the value function of policy $\wt{\pi}_k$ in the model $\wt{p}_k$. Since $\wt{V}_{\wt{\pi}_k}(s) \geq c_{\min} \mathbb{E}[\widetilde{\tau}(s)]$, we have
\begin{align*}
    (c_{\min} - \varepsilon_k) \mathbb{E}[\widetilde{\tau}(s)] \leq \wt{V}_{\wt{\pi}_k}(s) - \varepsilon_k \mathbb{E}\left[\wt{\tau}(s) \right] = \wt{U}_{\wt{\pi}_k, \varepsilon_k}(s) \leq \wt{v}_k(s).
\end{align*}

Using successively 
the above inequality, the fact that $\varepsilon_k \leq \frac{c_{\min}}{2}$, Lem.\,\ref{lemma_optimism} and \ref{lemma_1}, we obtain for any $s \in \mathcal{S}$,
\begin{align}
    \mathbb{E}[\widetilde{\tau}(s)] \leq \frac{\wt{v}_k(s)}{c_{\min} - \varepsilon_k} \leq \frac{2V^{\star}(s)}{c_{\min}} \leq \frac{2c_{\max} D}{c_{\min}}.
\label{cost_weighted_optimism_expected_hitting_time}
\end{align}

Fix any $r \geq 1$ and $s \in \mathcal{S}$. According to a corollary of Markov's inequality (since $x \mapsto x^r $ is a monotonically increasing non-negative function for the non-negative reals), we have
\begin{align*}
    \mathbb{P}(\widetilde{\tau}(s) \geq H_k - 1) \leq \frac{\mathbb{E}\left[\widetilde{\tau}(s)^r\right]}{(H_k - 1)^r}.
\end{align*}

We can apply Lem.\,\ref{lem:raw_moments_discrete_ph_distribution} to the discrete PH distribution $\wt{\tau}$ with the choice of $\lambda := \frac{2c_{\max}D}{c_{\min}}$ guaranteed by Eq.\,\eqref{cost_weighted_optimism_expected_hitting_time}. This yields
\begin{align*}
    \mathbb{E}\left[\widetilde{\tau}(s)^r\right] \leq 2 \left( r \frac{2c_{\max}D}{c_{\min}} \right)^{r}.
\end{align*}
Hence, we have
\begin{align}
    \mathbb{P}(\widetilde{\tau}(s) \geq H_k - 1) \leq \frac{ 2 \left( r \frac{2c_{\max}D}{c_{\min}} \right)^{r} }{(H_k - 1)^r}.
\label{ineq_101}
\end{align}

There exists $y \in \mathcal{S}$ such that
\begin{align}
\norm{\wt{Q}^{H_k-2}}_{\infty} = \mathds{1}_y^\top \wt{Q}^{H_k-2} \mathds{1} = \mathbb{P}(\widetilde{\tau}(y) > H_k - 2) = \mathbb{P}(\widetilde{\tau}(y) \geq H_k - 1),
\label{norm_satisfied_one_state_y}
\end{align}
where the before-last equality uses Lem.\,\ref{equation_discrete_phase_type_distribution} applied to $\wt{\pi}_k \in \Pi^{PSD}(\langle\mathcal{S}', \mathcal{A}, c, \wt{p}_k, y \rangle)$ (the fact that $\wt{\pi}_k$ is proper in $\wt{p}_k$ stems from Eq.\,\ref{cost_weighted_optimism_expected_hitting_time}), while the last equality uses that the hitting time $\wt{\tau}(y)$ is an integer. 
By definition of $H_k := \min \left\{ n > 1 : \norm{\wt{Q}^{n-1}}_{\infty} \leq \frac{1}{\sqrt{k}} \right\}$, we have $\norm{\wt{Q}^{H_k-2}}_{\infty} > \frac{1}{\sqrt{k}}$. Combining this with Eq.\,\eqref{ineq_101} and \eqref{norm_satisfied_one_state_y} yields
\begin{align*}
    \frac{ 2 \left( r \frac{2c_{\max}D}{c_{\min}} \right)^{r}}{(H_k-1)^r} > \frac{1}{\sqrt{k}},
\end{align*}
which implies that
\begin{align*}
 H_k - 1 < r \frac{2c_{\max}D}{c_{\min}} \left( 2 \sqrt{k} \right)^{\frac{1}{r}}.
\end{align*}
In particular, selecting $r := \lceil \log(2  \sqrt{k}) \rceil$ yields
\begin{align*}
    H_k - 1 &< \frac{2c_{\max}D}{c_{\min}} \lceil \log(2  \sqrt{k}) \rceil (2 \sqrt{k})^{\frac{1}{\lceil \log(2  \sqrt{k}) \rceil}} \\
    &\leq \frac{2c_{\max}D}{c_{\min}} \lceil \log(2  \sqrt{k}) \rceil \underbrace{(2 \sqrt{k})^{\frac{1}{ \log(2  \sqrt{k})}}}_{= e}. 
\end{align*}
Hence,
\begin{align*}
    \Omega_K \leq \left\lceil 6 \frac{c_{\max}}{c_{\min}} D  \log(2  \sqrt{K}) \right\rceil.
\end{align*}


\section{Proof of Lem.\,\ref{lemma_bound_F}}
\label{app:bound.failure.events}

For notational ease, in Sect.\,\ref{app:bound.failure.events} we adopt the notation $H_{k} :=H_{k,0}$, $\wt{\pi}_{k} := \wt{\pi}_{k,0}$, $\varepsilon_{k} := \varepsilon_{k,0}$ (i.e., we remove the subscript $0$).

We denote by $\tau_k$ (resp.\,$\wt{\tau}_k$) the hitting time of policy $\pi_k$ in the true model $p$ (resp.\,in the optimistic model $\wt{p}_k$). For any $h \in [H_k]$ we define
\begin{align*}
    \Gamma_{k,h}(s_{k,h}) = \mathds{1}_{\{ \tau_k(s_{k,h}) > H_k - h \}} - \mathbb{P}(\wt{\tau}_k(s_{k,h}) > H_k - h).
\end{align*}

Since $F_K = \sum_{k=1}^K \mathds{1}_{\{ \tau_k(s_{k,1}) > H_k-1 \}}$, we have
\begin{align*}
    F_K = \sum_{k=1}^K \Gamma_{k,1}(s_{k,1}) + \sum_{k=1}^K \mathbb{P}(\wt{\tau}_k(s_{0}) > H_k-1).
\end{align*}
We have for $h \in [H_k-1]$, $\mathds{1}_{\{ \tau_k(s_{k,h}) > H_k - h \}} = \mathds{1}_{\{ \tau_k(s_{k,h+1}) > H_k - h - 1\}}$ and therefore
\begin{align*}
    \Gamma_{k,h}(s_{k,h}) &= \mathds{1}_{\{ \tau_k(s_{k,h+1}) > H_k - h - 1\}} - \sum_{y \in \mathcal{S}'} \wt{p}_k(y \, \vert \,  s_{k,h}, \wt{\pi}_k(s_{k,h})) \mathbb{P}(\wt{\tau}_k(y) > H_k - h - 1) \\
    &\leq \mathds{1}_{\{ \tau_k(s_{k,h+1}) > H_k - h - 1\}} - \sum_{y \in \mathcal{S}'} p(y \, \vert \,  s_{k,h}, \wt{\pi}_k(s_{k,h})) \mathbb{P}(\wt{\tau}_k(y) > H_k - h -1) + 2 \beta_k(s_{k,h},\wt{\pi}_k(s_{k,h})) \\
    &= \Gamma_{k,h+1}(s_{k,h+1}) + \Psi_{k,h} + 2 \beta_k(s_{k,h},\wt{\pi}_k(s_{k,h})),
\end{align*}
where we define
\begin{align*}
    \Psi_{k,h} = \mathbb{P}(\wt{\tau}_k(s_{k,h+1}) > H_k - h  - 1) - \sum_{y \in \mathcal{S}'} p(y \, \vert \,  s_{k,h}, \wt{\pi}_k(s_{k,h})) \mathbb{P}(\wt{\tau}_k(y) > H_k - h - 1).
\end{align*}
Furthermore, whatever the value of $s_{k,H_k}$ we have
\begin{align*}
    \Gamma_{k,H_k}(s_{k,H_k}) = \mathds{1}_{\{ \tau_k(s_{k,H_k}) > 0\}} - \mathbb{P}(\wt{\tau}_k(s_{k,H_k}) > 0) = \mathds{1}_{\{s_{k,H_k} \neq \overline{s}\}} -  \mathds{1}_{\{s_{k,H_k} \neq \overline{s}\}} = 0.
\end{align*}
By telescopic sum we thus get
\begin{align*}
    \Gamma_{k,1}(s_{k,1}) &= \sum_{h=1}^{H_k-1} (\Gamma_{k,h}(s_{k,h}) - \Gamma_{k,h+1}(s_{k,h+1})) + \Gamma_{k,H_k}(s_{k,H_k}) \\
    &\leq \sum_{h=1}^{H_k-1} \Psi_{k,h} + 2 \sum_{h=1}^{H_k-1} \beta_k(s_{k,h},\wt{\pi}_k(s_{k,h})).
\end{align*}
Summing over the episode index $k$ yields
\begin{align*}
    F_K \leq \sum_{k=1}^K \sum_{h=1}^{H_k-1} \Psi_{k,h} + 2 \sum_{k=1}^K \sum_{h=1}^{H_k-1} \beta_k(s_{k,h},\wt{\pi}_k(s_{k,h})) + \sum_{k=1}^K \mathbb{P}(\wt{\tau}_k(s_{0}) > H_k - 1).
\end{align*}
$(\Psi_{k,h})$ is a martingale difference sequence with $\abs{\Psi_{k,h}} \leq 2$, so from Azuma-Hoeffding's inequality, in the same vein as in Eq.\,\eqref{azuma_martingale}, we have with probability at least $1-\frac{2\delta}{3}$
\begin{align*}
    \sum_{k=1}^K \sum_{h=1}^{H_k-1} \Psi_{k,h} \leq 2 \sqrt{2 \left(\sum_{k=1}^K H_k \right) \log\left(\frac{3 \left(\sum_{k=1}^K H_k \right)^2}{\delta}\right)} \leq 2 \sqrt{2 \Omega_{K} K \log\left(\frac{3 ( \Omega_{K} K)^2}{\delta}\right)}.
\end{align*}
By the pigeonhole principle (Eq.\,\ref{pigeonhole_principle}), we have
\begin{align*}
    \sum_{k=1}^K \sum_{h=1}^{H_k-1} \beta_k(s_{k,h},\wt{\pi}_k(s_{k,h})) \leq 2 S \sqrt{8 A \Omega_{K} K \log\left(\frac{2 A \Omega_{K} K}{\delta}\right)}.
\end{align*}
From Lem.\,\ref{equation_discrete_phase_type_distribution} and H{\"o}lder's inequality, we have
\begin{align*}
    \sum_{k=1}^K \mathbb{P}(\wt{\tau}_k(s_{0}) > H_k - 1) = \sum_{k=1}^K \mathds{1}_{s_0} (Q_{\wt{\pi}_k}^{\wt{p}_k})^{H_k-1} \mathds{1} \leq \sum_{k=1}^K \norm{\mathds{1}_{s_0}}_1 \norm{(Q_{\wt{\pi}_k}^{\wt{p}_k})^{H_k-1} \mathds{1}}_{\infty} \leq \sum_{k=1}^K \norm{(Q_{\wt{\pi}_k}^{\wt{p}_k})^{H_k-1}}_{\infty}.
\end{align*}
Consequently, by choice of $H_k := \min \{ n > 1 ~\vert~ \norm{(Q_{\wt{\pi}_k}^{\wt{p}_k})^{n-1}}_{\infty} \leq \frac{1}{\sqrt{k}} \}$, we get
\begin{align*}
     \sum_{k=1}^K \mathbb{P}(\wt{\tau}_k(s_{0}) > H_k - 1) \leq \sum_{k=1}^K \frac{1}{\sqrt{k}} \leq 2 \sqrt{K}.
\end{align*}


\section{Proof of Lem.\,\ref{lemma_bound_duration_second_phases}}
\label{app:proof.duration.phasetwo}

Recall that $T_{K,2}$ is the number of time steps during attempts in phase \ding{173} up to the end of environmental episode $K$. We introduce $\Omega'_{K} := \max_{k \in [K]} \max_{j \in [J_k]} H_{k,j}$ and $G_K := \sum_{k=1}^K J_k$ which is the total number of attempts in phase \ding{173} up to episode $K$. This means that $T_{K,2} \leq \Omega'_K G_K$.

First, by adapting Lem.\,\ref{lemma_bound_H_k_analysis_1} and using that in attempts in phase \ding{173} we have $c_{\max} = c_{\min} = 1$, we have under the event $\mathcal{E}$,
\begin{align}
    \Omega'_K \leq \left\lceil 6 D \log(2 \sqrt{G_K}) \right\rceil.
\label{eq_a}
\end{align}

We can decompose $G_K$ as the sum of attempts that succeed in reaching $\overline{s}$ (equal to $F_K$ which is upper bounded by Lem.\,\ref{lemma_bound_F}) and of those that fail in reaching $\overline{s}$, whose number we denote by $F_K^{\dagger}$. We then have
\begin{align}
    G_K \leq F_K + F_K^{\dagger}.
\label{eq_b}
\end{align}
By adapting Lem.\,\ref{lemma_bound_F}, we have the following high-probability bound, for any value of $G_K$,
\begin{align}
    F_K^{\dagger} 
    &= O\left( S \sqrt{A \Omega'_K G_K  \log\left(\frac{A \Omega'_{K} G_K}{\delta}\right) } \right).
\label{eq_c}
\end{align}
Plugging Eq.\,\eqref{eq_a} and \eqref{eq_b} into Eq.\,\eqref{eq_c} yields
\begin{align*}
      G_K \leq F_K + O\left( S \sqrt{A D G_K }   \log\left(\frac{A D G_K}{\delta}\right) \right).
\end{align*}
Hence we get
\begin{align*}
    G_K \leq 2 F_K \underbrace{- G_K + O\left( S \sqrt{A D G_K } \log\left(\frac{A D G_K}{\delta}\right) \right)}_{:= (y)},
\end{align*}
where $(y)$ can be bounded using the technical Lem.\,\ref{lemma_kazerouni_1} as follows
\begin{align*}
    (y) \leq O\left( S^2 A D \left[ \log\left(\frac{ S A D }{\sqrt{\delta}} \right)\right]^2 \right).
\end{align*}
Plugging in the result of Lem.\,\ref{lemma_bound_F} yields
\begin{align*}
    G_K = \wt{O}\left( S \sqrt{\frac{ c_{\max}}{c_{\min}} A D K \log\left( \frac{K}{\delta}\right)} + S^2 A D \log\left(\frac{1}{\delta}\right) \right).
\end{align*}
This bound can be translated in a bound on $T_{K,2}$ using Eq.\,\eqref{eq_a} as follows
\begin{align*}
    T_{K,2} &= O\left( D G_K \log( S \sqrt{G_K}) \right) = \wt{O}\left( D S  \sqrt{\frac{ c_{\max}}{c_{\min}} A D K \log\left( \frac{K}{\delta}\right)} \log(K) + S^2 A D^2 \log\left(\frac{1}{\delta}\right) \log(K)  \right).
\end{align*}




\section{Proof of Thm.\,\ref{theorem_regret_nonproper}}

The (possibly non-stationary) policy $\mu_k$ executed at each episode $k$ can be written as $(\wt{\pi}_{k,0}, \wt{\pi}_{k,1}, \ldots, \wt{\pi}_{k,J_k})$. As explained in Sect.\,\ref{proof_sketch}, by assigning a regret of $c_{\max}$ to each time step during attempts in phase \ding{173} (i.e., during the executions of the policies $\wt{\pi}_{k,1}, \ldots, \wt{\pi}_{k,J_k}$), we can decompose the regret as
\begin{align*}
    \Delta(\UCSSP,K) = \sum_{k=1}^K \left[ \left( \sum_{h=1}^{\tau_{k,0}} c(s_{k,h}, \mu_{k}(s_{k,h})) \right) - V^{\star}(s_0) \right] \leq \sum_{k=1}^K \left[ \left(\sum_{h=1}^{H_{k,0}} c(s_{k,h}, \wt{\pi}_{k,0}(s_{k,h})) \right)  - V^{\star}(s_0) \right] + c_{\max} T_{K,2}.
\end{align*}

Suppose from now on that the event $\mathcal{E}$ is true (this holds with probability at least $1-\frac{\delta}{3}$). Lem.\,\ref{regret_key_lemma} yields that with probability at least $1-\frac{2\delta}{3}$,
\begin{align*}
     \sum_{k=1}^K \left[ \left( \sum_{h=1}^{H_{k,0}} c(s_{k,h}, \wt{\pi}_{k,0}(s_{k,h})) \right) - V^{\star}(s_{0}) \right] &\leq  4c_{\max} D S \sqrt{8 A \Omega_K K \log\left(\frac{2 A \Omega_K K}{\delta}\right)} \\ &+ 2c_{\max}D \sqrt{2 \Omega_K K \log\left(\frac{3 (\Omega_K K)^2}{\delta}\right)} + \frac{c_{\min}}{2} \Omega_K \left(1 + \log(\Omega_K K)\right),
\end{align*}

where according to Lem.\,\ref{lemma_bound_H_k_analysis_1},
\begin{align*}
    \Omega_{K} \leq \left\lceil 6 \frac{c_{\max}}{c_{\min}} D \log(2 \sqrt{K}) \right\rceil.
\end{align*}


On the other hand, Lem.\,\ref{lemma_bound_duration_second_phases} yields 
\begin{align*}
    T_{K,2} &= \wt{O}\Bigg( D S  \sqrt{\frac{ c_{\max}}{c_{\min}} A D K \log\left( \frac{K}{\delta}\right)} \log(K) + S^2 A D^2 \log\left(\frac{1}{\delta}\right) \log(K)  \Bigg).
\end{align*}

Putting everything together finally yields that with probability at least $1-\delta$, for any $K \geq 1$,
\begin{align*}
    \Delta(\UCSSP, K) = \wt{O}\left( c_{\max} D S  \sqrt{\frac{ c_{\max}}{c_{\min}} A D K \log\left( \frac{K}{\delta}\right)} \log(K) + c_{\max} S^2 A D^2 \log\left(\frac{1}{\delta}\right) \log(K)  \right).
\end{align*}




\section{Relaxation of Assumptions}
\label{app_relax_asm}


\subsection{Straightforward extension to unknown, stochastic costs}
\label{subsection_relaxation_unknown_costs}

Although we assume (as in e.g., \citealp{azar2017minimax}) that the costs are known and deterministic for ease of exposition, we emphasize that extending the setting to unknown stochastic costs poses no major difficulty. The only requirement is that the learner needs to know in advance the range of the non-goal costs, i.e., the constants $c_{\min}$ and $c_{\max}$. In that case, at the beginning of each attempt $(k,0)$ (i.e., in phase \ding{172}), the confidence set $\mathcal{M}_{k,0}$ is not only defined with the confidence interval on the transition probabilities but also with a confidence interval on the costs. Namely, we consider
\begin{align*}
    \mathcal{M}_{k,0} := \{ \langle \mathcal{S}, \mathcal{A}, \widetilde{c}, \widetilde{p} \rangle ~\vert ~ \widetilde{p}(\cdot|s,a) \in B_{k,0}(s,a), \widetilde{c}(s,a) \in B'_{k,0}(s,a) \},
\end{align*}
where $B_{k,0}(s,a)$ is defined as in Sect.\,\ref{sect_ucssp_alg}, and where for any $a \in \mathcal{A}$, $\widetilde{c}(\overline{s},a) = 0$ while for any $s \in \mathcal{S}$,
\begin{align*}
    B'_{k,0}(s,a) := [\widehat{c}_{k,0}(s,a) - \beta'_{k,0}(s,a), \widehat{c}_{k,0}(s,a) + \beta'_{k,0}(s,a)] \cap [c_{\min}, c_{\max}],   
\end{align*}
with $\wh{c}_{k,0}(s,a)$ the empirical costs and
\begin{align*}
    \beta'_{k,0}(s,a) := 2 \sqrt{\frac{\log \left( \frac{6 S A N_{k,0}^+(s,a)}{\delta} \right)}{N_{k,0}^+(s,a)}}.
\end{align*}

The analysis on the regret bound of \UCSSP then only adds an additional error term on estimating the transition costs, which is subsumed by the other terms. Consequently, we obtain exactly the same regret bound as in Thm.\,\ref{theorem_regret_nonproper}.

\subsection{Relaxation of Asm.\,\ref{assumption_finite_SSP_diameter} (i.e., if $M$ is non-SSP-communicating, i.e., $D = + \infty$)}
\label{subsection_relaxation_asm_finite_SSP_diameter}

The requirement that the goal is reachable from any state (Asm.\,\ref{assumption_finite_SSP_diameter}) is a natural and inherent assumption of the SSP problem as introduced in \citet{bertsekas1995dynamic}. However, a reasonable extension is to allow for the existence of (potentially unknown) \textit{dead-end} states, i.e., states from which reaching the goal is impossible. In that case, $\EVI_{\SSP}$, which operates of the entire state space $\mathcal{S}$, fails to converge since the values at dead-end states are infinite. \citet{kolobov2012theory} propose to put a ``cap'' on any state's cost by optimizing the \textit{truncated value function}, or Finite-Penalty criterion,
\begin{align*}
    V_{J}^{\pi}(s) := \min \left\{ J, ~V^{\pi}(s) \right\},
\end{align*}
where $J > 0$ corresponds to a penalty incurred if a dead-end state is visited. From \citet{kolobov2012theory}, there exists an optimal policy $\pi_J^{\star}(s)$ that minimizes $V_{J}^{\pi}(s)$ and the optimal truncated value function $V_{J}^{\star}$ is a fixed point of the modified Bellman operator $\mathcal{L}_J$ defined as
\begin{align*}
    \mathcal{L}_J V(s) := \min \Big\{ J,~ \min_{a \in \mathcal{A}} \Big[ c(s, a) + \sum_{y \in \mathcal{S}} p(y| s,a) V(y) \Big] \Big\}.
\end{align*}

Denote by $\mathcal{S}^{DE} \subsetneq \mathcal{S}$ the set of dead-end states. We replace Asm.\,\ref{assumption_finite_SSP_diameter} with the following assumptions.
\begin{assumption}
    1) $s_0 \notin \mathcal{S}^{DE}$. 2) $V^{\star}(s_0) < +\infty$ and an upper bound $J$ on $V^{\star}(s_0)$ is known. 3) We augment the action space $\mathcal{A}$ with an action $\overline{a}$ that causes a transition from any state in $\mathcal{S}$ to the target state with probability $1$ and cost $J$ (i.e., we place ourselves in a resetting environment).
    \label{assumption_J}
\end{assumption}
Note that 1) and 3) of Asm.\,\ref{assumption_J} are required to make the learning problem and the definition of regret sensible (i.e., we have $V^{\star}(s_0) < +\infty$ and we have the possibility to reset whenever we are stuck in a dead-end state). Moreover, 2) guarantees that $V^{\star}(s_{0}) = V_J^{\star}(s_{0})$ and that if we run $\EVI_{\SSP}$ on $\mathcal{L}_J$ instead of $\mathcal{L}$, then $J$ is an upper bound on the optimistic value function output by $\EVI_{\SSP} $(instead of $c_{\max} D$ which is vacuous when $D = + \infty$). Note that 2) is tightly related to the requirement of \citet{fruit2018efficient} of prior knowledge on an upper bound of the span of the optimal bias function, and that 1) is similar to the assumption of a starting state belonging to the set of communicating states in \TUCRL \citep{fruit2018near}.

With those assumptions at hand, we consider the algorithm $\UCSSP\textrm{-}\mathcal{L}_J$, which differs from $\UCSSP$ in 3 ways: it iterates \EVI$_\textrm{SSP}$ on the operator $\mathcal{L}_J$, the length of the $k$-th phase \ding{172} is set to $H_k^{(J)} := 6 \frac{J}{c_{\min}} \log(2 \sqrt{k})$, and it executes action $\overline{a}$ at the end of each attempt \ding{172} (this means that there is no more phase \ding{173}, and the $k$-th attempt \ding{172} exactly corresponds to the $k$-th environmental episode). 

\begin{lemma} Under Asm.\,\ref{assumption_J} and \ref{assumption_costs}, with probability at least $1-\delta$,
\begin{align*}
\Delta(\UCSSP\textrm{-}\mathcal{L}_J, K) = O\left( J S \sqrt{ A \Omega_K^{(J)} K } \log\left(\frac{\Omega_K^{(J)} K}{\delta}\right) \right),
\end{align*}
where $\Omega_K^{(J)} := 6 \frac{J}{c_{\min}} \log(2 \sqrt{K})$.
\label{regret_J_asm_relaxed}
\end{lemma}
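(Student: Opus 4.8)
The plan is to mirror the analysis of Thm.~\ref{theorem_regret_nonproper}, exploiting two simplifications and adding one new ingredient. Since every attempt in $\UCSSP\textrm{-}\mathcal{L}_J$ now has a \emph{fixed} length $H_k^{(J)}=6\tfrac{J}{c_{\min}}\log(2\sqrt k)$ followed (only if $\overline s$ was not reached) by the resetting action $\overline a$, the $k$-th attempt coincides with environmental episode $k$ and there is no phase \ding{173}. Writing $\wt v_k,\wt\pi_k,\wt\tau_k$ for the quantities of the (single) attempt of episode $k$ and $F_K^{(J)}:=\sum_{k=1}^K \mathds{1}_{\{\tau_k(s_0)>H_k^{(J)}\}}$ for the number of episodes that trigger a reset, and using that costs vanish at $\overline s$, the regret identity
\begin{align*}
  \Delta(\UCSSP\textrm{-}\mathcal{L}_J,K)
  = \underbrace{\sum_{k=1}^K\Big[\Big(\sum_{h=1}^{H_k^{(J)}} c\big(s_{k,h},\wt\pi_k(s_{k,h})\big)\Big)-V^\star(s_0)\Big]}_{=:\ \mathcal{W}_K^{(J)}} \ +\ J\,F_K^{(J)}
\end{align*}
holds. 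Thus it suffices to bound $\mathcal{W}_K^{(J)}$ as in Lem.~\ref{regret_key_lemma} and $JF_K^{(J)}$ as in Lem.~\ref{lemma_bound_F}, in both cases replacing the range bound $\norm{V^\star}_\infty\le c_{\max}D$ of Lem.~\ref{lemma_1} by the truncation bound $\norm{\wt v_k}_\infty\le J$, and keeping $\Omega_K^{(J)}=\max_{k\le K}H_k^{(J)}=6\tfrac{J}{c_{\min}}\log(2\sqrt K)$ in the role of $\Omega_K$.

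For the optimism ingredient I would first note that, because the augmented action set contains the deterministic, cost-$J$ action $\overline a$, we have $\mathcal L V=\mathcal L_J V$ for every $V\ge 0$, a proper policy always exists (reset everywhere), and — since $c_{\min}>0$ — every improper policy has infinite value somewhere; hence the SSP theory of \citet[][Sect.\,3.3]{bertsekas1995dynamic} applies to the extended operator, $\EVI_{\SSP}$ on $\mathcal L_J$ converges, and under the event $\mathcal E$ one gets $\wt v_k\le\wt V_k^\star\le V_J^\star$ componentwise (with $V_J^\star$ the optimal truncated value of the true MDP). In particular $\norm{\wt v_k}_\infty\le J$ and, since $J\ge V^\star(s_0)$ forces $V_J^\star(s_0)=V^\star(s_0)$, also $\wt v_k(s_0)\le V^\star(s_0)$. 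Replaying verbatim the unrolling argument of App.~\ref{app:proof_regret_key_lemma} (telescoping the $\Theta_{k,h}$, Azuma–Hoeffding on the martingale differences $\Phi_{k,h}$, a pigeonhole principle on the confidence widths $\beta_k$, and the lower-order $Z_K$ term coming from the $\EVI_{\SSP}$ accuracy $\varepsilon_k$) with $c_{\max}D$ replaced by $J$ and $\Omega_K$ by $\Omega_K^{(J)}$ yields, with probability $1-\delta$, $\mathcal{W}_K^{(J)}=O\big(JS\sqrt{A\,\Omega_K^{(J)}K\log(\Omega_K^{(J)}K/\delta)}\big)$.

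The genuinely new part is $F_K^{(J)}$, and specifically $\sum_k\mathbb P(\wt\tau_k(s_0)>H_k^{(J)})$, since here $H_k^{(J)}$ is prescribed \emph{a priori} rather than (as in \UCSSP) defined through the criterion $\norm{\wt Q_k^{n-1}}_\infty\le 1/\sqrt k$. I would first adapt Eq.~\eqref{cost_weighted_optimism_expected_hitting_time}: its derivation (show $\mathcal T_{\varepsilon_k}^{\wt\pi_k}\wt v_k\le\wt v_k$, iterate, pass to the limit, use $\wt V_{\wt\pi_k}\ge c_{\min}\mathbb E[\wt\tau_k]$ and $\varepsilon_k\le c_{\min}/2$) carries over unchanged, the reset action ensuring $\wt\pi_k$ is proper in $\wt p_k$, and gives $\mathbb E[\wt\tau_k(s)]\le\tfrac{\norm{\wt v_k}_\infty}{c_{\min}-\varepsilon_k}\le\tfrac{2J}{c_{\min}}=:\lambda\ge 2$ for all $s\in\mathcal S$. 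Since $\wt\tau_k$ then follows a discrete PH distribution, Lem.~\ref{lem:raw_moments_discrete_ph_distribution} gives $\mathbb E[\wt\tau_k(s_0)^r]\le 2(r\lambda)^r$ for any integer $r\ge 1$, so Markov's inequality yields $\mathbb P(\wt\tau_k(s_0)\ge H_k^{(J)})\le 2\big(r\lambda/H_k^{(J)}\big)^r$; choosing $r=\lfloor\log(2\sqrt k)\rfloor$ (and $r=1$ for $k=1$) makes $r\lambda/H_k^{(J)}\le 1/3$, hence $\mathbb P(\wt\tau_k(s_0)\ge H_k^{(J)})\le 2\cdot 3^{-r}=O(1/\sqrt k)$ and $\sum_{k\le K}\mathbb P(\wt\tau_k(s_0)>H_k^{(J)})=O(\sqrt K)$. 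The remaining contribution $\sum_k\big[\mathds{1}_{\{\tau_k(s_0)>H_k^{(J)}\}}-\mathbb P(\wt\tau_k(s_0)>H_k^{(J)})\big]$ is bounded exactly as in the proof of Lem.~\ref{lemma_bound_F} (telescope the $\Gamma_{k,h}$, Azuma on $\Psi_{k,h}$, pigeonhole on $\beta_k$), giving $O\big(S\sqrt{A\Omega_K^{(J)}K\log(\Omega_K^{(J)}K/\delta)}\big)$. Combining, $JF_K^{(J)}=O\big(JS\sqrt{A\Omega_K^{(J)}K\log(\Omega_K^{(J)}K/\delta)}\big)$, and a union bound over the few failure events then gives the claimed $O\big(JS\sqrt{A\Omega_K^{(J)}K}\log(\Omega_K^{(J)}K/\delta)\big)$ bound.

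The main obstacle I expect is precisely this last step: because $H_k^{(J)}$ is no longer tied to the hitting-time tail of the current optimistic model, one must re-establish the moment bound $\mathbb E[\wt\tau_k(s)]\le 2J/c_{\min}$ purely from the truncation (the only available substitute for Lem.~\ref{lemma_1} when $D=+\infty$) and then choose the Markov exponent carefully so that the prescribed constant $6$ in the definition of $H_k^{(J)}$ already suffices to make $\sum_k\mathbb P(\mathrm{reset}_k)$ of order $\sqrt K$ (and hence negligible against the leading $S\sqrt{A\Omega_K^{(J)}K}$ term). Checking that $\EVI_{\SSP}$ on $\mathcal L_J$ is well-behaved — convergence, componentwise optimism, and properness of $\wt\pi_k$ in $\wt p_k$ — despite the presence of dead ends is a secondary point, handled cleanly by the resetting action, which turns the extended truncated problem into a bona fide SSP instance.
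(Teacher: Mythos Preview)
Your proposal is correct and follows essentially the same route as the paper: the same regret decomposition $\mathcal{W}_K^{(J)}+J\,F_K^{(J)}$, the same adaptation of Lem.~\ref{regret_key_lemma} with $c_{\max}D$ replaced by $J$, and the same reduction of $F_K^{(J)}$ to the argument of Lem.~\ref{lemma_bound_F} once one shows $\mathbb{P}(\wt\tau_k(s_0)\ge H_k^{(J)})=O(1/\sqrt{k})$ via Lem.~\ref{lem:raw_moments_discrete_ph_distribution}. Your observation that the reset action forces $\mathcal{L}=\mathcal{L}_J$ on nonnegative vectors (so the truncation never binds and the unrolling of App.~\ref{app:proof_regret_key_lemma} carries over verbatim) is a clean way to handle the well-posedness of $\EVI_{\SSP}$ and the properness of $\wt\pi_k$ that the paper leaves somewhat implicit.
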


\begin{proof}
We have
\begin{align*}
\Delta(\UCSSP\textrm{-}\mathcal{L}_J, K) = \sum_{k = 1}^{K} \left[ \left( \sum_{h=1}^{\tau_k(s_0)} c(s_{k,h}, \wt{\pi}_k(s_{k,h})) \right) - V_J^{\star}(s_{0}) \right] \leq \sum_{k = 1}^{K} \left[ \left( \sum_{h=1}^{H_k^{(J)}} c(s_{k,h}, \wt{\pi}_k(s_{k,h})) \right) - V_J^{\star}(s_{0}) \right] + J F_K,
\end{align*} 
where the double sum can be bounded by 
\begin{align*}
     O \left( J S \sqrt{A \Omega_K K \log\left(\frac{\Omega_K K}{\delta}\right)} \right)
\end{align*}
by adapting the proof of Lem.\,\ref{regret_key_lemma}, since $\wt{\pi}_k$ is the greedy policy w.r.t.\,the optimistic value function $\widetilde{v}^{(J)}_k$ which satisfies both $\widetilde{v}^{(J)}_k(s_0) \leq V^{\star}_J(s_0)$ and $\norm{\widetilde{v}^{(J)}_k}_{\infty} \leq J$.

Note that the optimistic hitting time $\tau_{\wt{\pi}_k}^{\wt{p}_k}$ starting from any state in $\mathcal{S} \setminus \mathcal{S}^{DE}$ still follows a discrete PH distribution with $\abs{\mathcal{S}^{DE}}+1$ absorbing states (which can be reduced to a discrete PH distribution with a single absorbing state and with the same distribution of the time to absorption). Consequently, using the same reasoning as in the proof of Lem.\,\ref{lemma_bound_H_k_analysis_1}, we can prove that under the event $\mathcal{E}$,
\begin{align*}
    \mathbb{P}(\tau_{\wt{\pi}_k}^{\wt{p}_k}(s_0) \geq H_k^{(J)}) \leq \frac{1}{\sqrt{k}}.
\end{align*}
Hence we can bound $F_K$ exactly as in Lem.\,\ref{lemma_bound_F}. We obtain the desired regret bound by using that $\Omega_K^{(J)} := \max_{k \in [K]} H_K^{(J)} = 6 \frac{J}{c_{\min}} \log(2 \sqrt{K})$ by choice of $H_k^{(J)}$.
\end{proof}

Interesting future directions in the setting where $D = + \infty$ could be to attempt to remove the need for the prior knowledge $J$ (i.e., weaken Asm.\,\ref{assumption_J}), or to focus on the related problem of maximizing the probability of reaching the goal state while keeping cumulative costs low (see e.g.,~\citealp[][Sect.\,6]{kolobov2012theory}).

\subsection{Relaxation of Asm.\,\ref{assumption_costs} (i.e., if $c_{\min} = 0$)}
\label{subsection_relaxation_asm_positive_costs}

\jt{While the assumption of positive costs seems natural in numerous episodic problems and is commonly used in the SSP literature~\citep[see e.g.,][]{hansen2012suboptimality, teichteil2012stochastic}, we now consider the case where zero non-goal costs may exist, i.e., $c_{\min} = 0$. In such case, the optimal policy is not guaranteed to be proper anymore \cite{bertsekas1995dynamic}. We thus change the definition of SSP-regret and compare to the best proper policy, that is,
\begin{align}
\Delta(\mathfrak{A},K) := \sum_{k = 1}^{K} \bigg[ \Big( \sum_{h=1}^{\tau_k(s_0)} c(s_{k,h}, \mu_k(s_{k,h})) \Big) - V^\star(s_{0}) \bigg], \quad \quad \textrm{with} \quad V^{\star} := \min_{\pi \in \PSD} V^\pi, \quad \pi^{\star} \in \argmin_{\pi \in \PSD} V^\pi.
\label{eq_regret_zero_costs}
\end{align} }%
The existence of $c_{\min} > 0$ is leveraged in our analysis to bound $\Omega_K$, more specifically in Eq.\,(\ref{cost_weighted_optimism_expected_hitting_time}), which uses that the property of optimism w.r.t.\,the value functions (i.e.,~$\widetilde{v}_{k,0} \leq V^{\star}$ component-wise) yields a ``cost-weighted optimism'' w.r.t.\,the expected hitting times, i.e.,~$\mathbb{E}(\wt{\tau}_{k,0}) \leq \frac{2 c_{\max}}{c_{\min}} \mathbb{E}(\tau_{\pi^{\star}})$ component-wise. Yet if zero costs are possible (i.e.,~$c_{\min}=0$), then this implication fails to hold.

To circumvent this problem a natural idea is to introduce an additive perturbation $\eta_{k,0} > 0$ to the cost of each transition in the true SSP (note that a small offset of costs to avoid to tricky case of zero costs is also performed by \citealp{bertsekas2013stochastic}). One may hope that this would not affect the behavior of the optimal policy, yet whereas in finite- and infinite-horizon this is indeed the case (i.e.,~offsetting the costs by a positive constant does not affect the behavior of the optimal policy), Lem.\,\ref{lemma_costs_offset} shows that this property does not hold in the SSP setting.


\begin{lemma}\label{lemma_costs_offset}
For any $\eta>0$, there exists an SSP instance whose optimal policy is different from the one of an identical SSP with all of its transition costs offset by $\eta$.  
\end{lemma}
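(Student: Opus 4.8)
The plan is to exhibit an explicit two‑state SSP in which the optimal policy must trade off a deterministic ``short but expensive'' route to the goal against a randomized ``cheap but slow'' route, and then to show that a uniform additive offset $\eta$ on all costs reverses the ranking of these two routes. The underlying intuition is that adding $\eta$ to every transition cost penalizes a policy \emph{in proportion to its expected hitting time}: a policy that lingers near $s_0$ pays the offset many times in expectation, whereas a policy that jumps straight to $\overline{s}$ pays it only once. This is exactly the mechanism that is absent in the finite‑ and infinite‑horizon settings and that makes offsetting costs biasing in SSP.

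Concretely, fix $\eta>0$ and take $\mathcal{S}=\{s_0\}$, $\mathcal{S}'=\{s_0,\overline{s}\}$, with two actions available at $s_0$: action $a_1$ moves deterministically to $\overline{s}$ at cost $c(s_0,a_1)=\eta$; action $a_2$ moves to $\overline{s}$ with probability $\tfrac12$ and stays at $s_0$ with probability $\tfrac12$, at cost $c(s_0,a_2)=\tfrac{\eta}{4}$. This is a legitimate SSP satisfying Assumptions~\ref{assumption_costs} and \ref{assumption_finite_SSP_diameter} (with $c_{\min}=\tfrac{\eta}{4}$, $c_{\max}=\eta$, and $D=1$ achieved through $a_1$), and there are exactly two stationary deterministic policies: $\pi_1$ which always plays $a_1$, and $\pi_2$ which always plays $a_2$.

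The heart of the argument is then two elementary computations. Since $\tau_{\pi_2}(s_0)$ is geometric with success probability $\tfrac12$, it has mean $2$, so $V^{\pi_1}(s_0)=\eta$ and $V^{\pi_2}(s_0)=\tfrac{\eta}{4}\cdot 2=\tfrac{\eta}{2}<\eta$; hence $\pi_2$ is the unique optimal policy of this SSP. After offsetting all costs by $\eta$, action $a_1$ costs $2\eta$ and action $a_2$ costs $\tfrac{5\eta}{4}$, so the value functions become $V^{\pi_1}(s_0)=2\eta$ and $V^{\pi_2}(s_0)=\tfrac{5\eta}{4}\cdot 2=\tfrac{5\eta}{2}>2\eta$; hence $\pi_1$ is now the unique optimal policy. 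Since $\pi_1\neq\pi_2$, the optimal policy of the offset SSP differs from that of the original one, which proves the lemma.

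I do not foresee any real obstacle: the only points requiring care are (i) checking that $\pi_2$ is proper, so that $V^{\pi_2}$ is finite and well defined — this is immediate since the chain induced by $\pi_2$ is absorbed into $\overline{s}$ with geometric hitting time — and (ii) choosing the numerical constants so that the two strict inequalities $V^{\pi_2}(s_0)<V^{\pi_1}(s_0)$ in the original SSP and $V^{\pi_1}(s_0)<V^{\pi_2}(s_0)$ in the offset SSP hold simultaneously. With success probability $\tfrac12$ for $a_2$, any choice of costs $c(s_0,a_1)=c_1$ and $c(s_0,a_2)=c_2$ with $0<c_1-2c_2<\eta$ works, and the values $c_1=\eta$, $c_2=\tfrac{\eta}{4}$ chosen above satisfy this.
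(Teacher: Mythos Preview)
Your construction is correct and establishes the lemma. The key insight---that offsetting all costs by $\eta$ adds $\eta\,\mathbb{E}[\tau_\pi(s_0)]$ to every policy's value, thereby penalizing policies with longer expected hitting times---is the same as in the paper, but the concrete SSP instance you build is genuinely different.

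The paper exhibits a \emph{deterministic} instance with three non-goal states $s_0, s_1, s_2$ and two routes from $s_0$ to $\overline{s}$: a direct edge of cost $4\eta$ (one step) versus a three-step path $s_0 \to s_1 \to s_2 \to \overline{s}$ with cost $\eta$ per edge (total $3\eta$). Offsetting by $\eta$ turns these into $5\eta$ versus $6\eta$, flipping the ranking. Your instance instead uses a single non-goal state and \emph{stochastic} transitions (a self-loop with probability $\tfrac12$) to create the disparity in expected hitting times. Your example is more economical in state space, while the paper's example is more economical in that it stays within the deterministic shortest-path world (so the phenomenon does not rely on randomness in the dynamics). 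Both constructions make the same point equally well; your closing remark that any $c_1,c_2$ with $0<c_1-2c_2<\eta$ works is a nice touch making the robustness of the construction explicit.
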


\begin{figure}[t]
    \centering
    \begin{tikzpicture}
    \node[draw, circle] (s0) at (0,0) {$s_0$};
    \node[draw, circle] (s1) at (6,0) {$\overline{s}$};
    \node[draw, circle] (s2) at (2,1) {$s_1$};
    \node[draw, circle] (s3) at (4,1) {$s_2$};
    \draw[->] (s0) -- (s1) node[midway, above] {$a_{0,0}$};
    \draw[->] (s0) -- (s2) node[midway, above] {$a_{0,1}$};
    \draw[->] (s2) -- (s3) node[midway, above] {$a_{1,0}$};
    \draw[->] (s3) -- (s1) node[midway, above] {$a_{2,0}$};
    \end{tikzpicture}
    \caption{SSP instance used in the proof of Lem.\,\ref{lemma_costs_offset}.}
    \label{fig:SSP_example_costs_offset}
\end{figure}
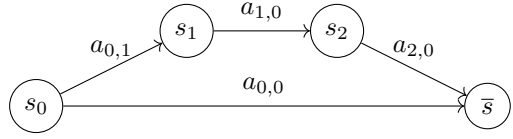

\begin{proof}
Let us consider the SSP from Fig.\,\ref{fig:SSP_example_costs_offset}, whose costs are $c(s_{0}, a_{0,0}) = 4\eta$ and $c(s_{0}, a_{0,1}) = c(s_{1}, a_{1,0}) = c(s_{2}, a_{2,0}) = \eta$. The optimal policy executes action $a_{0,0}$ in state $s_{0}$. Yet if the costs are all offset by $\eta$, the optimal policy executes action $a_{0,1}$ in state $s_{0}$.
\end{proof}

Offsetting the costs thus introduces a bias which should be adequately controlled by the choice of $\eta_{k,0}$. We consider the algorithm $\UCSSP\textrm{-}\mathcal{L}_{\eta}$, which differs from \UCSSP by introducing an additive perturbation $\eta_{k,0} > 0$ to the cost of each transition in the \textit{optimistic model} for each attempt $(k,0)$ (i.e., in phase \ding{172}), i.e., the algorithm iterates \EVI$_\textrm{SSP}$ up to an accuracy of $\varepsilon_{k,0} := \frac{c_{\max}}{t_{k,0}}$ on the operator $\mathcal{L}_{\eta}$ defined as
\begin{align*}
    \mathcal{L}_{\eta} V(s) := \min_{a \in \mathcal{A}} \Big[ c(s, a) + \eta + \sum_{y \in \mathcal{S}} p(y| s,a) V(y) \Big],
\end{align*}
where $\eta > 0$ depends on the episode $k \in [K]$. 

\begin{lemma} If $c_{\min}=0$, under Asm.\,\ref{assumption_finite_SSP_diameter} and the regret definition of Eq.\,\eqref{eq_regret_zero_costs} , by selecting $\eta_{k,0} = \frac{1}{k^{1/3}}$, we get with overwhelming probability that
\begin{align*}
\Delta(\UCSSP\textrm{-}\mathcal{L}_{\eta}, K) = ~ &\widetilde{O}\left( c_{\max} D S \sqrt{ c_{\max} D A} K^{2/3} + \Upsilon^{\star} K^{2/3} + c_{\max} D S \sqrt{\Upsilon^{\star} A K } \right. \\
& \left. + \Upsilon^{\star} S \sqrt{ c_{\max} D A} K^{1/3} + \Upsilon^{\star} S \sqrt{\Upsilon^{\star} A} K^{1/6} + S^2 A D^2 \right),
\end{align*}
where $\Upsilon^{\star} := \norm{\mathbb{E}\left[\tau_{\pi^{\star}}\right]}_{\infty}$ is the worst-case (in terms of starting state) expected hitting time of the optimal policy $\pi^{\star}$ in the original SSP (i.e., without any cost offset).
\label{regret_cmin_0_asm_relaxed}
\end{lemma}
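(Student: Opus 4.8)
The plan is to re-run the proof of Thm.\,\ref{theorem_regret_nonproper} while tracking the bias that the cost-offset $\eta_{k,0}$ injects into the optimistic value function, and to pick $\eta_{k,0}=k^{-1/3}$ so as to balance this bias against the enlarged pivot horizons. The starting point is a \emph{two-scale} optimism bound. First note $\PSD\neq\emptyset$ and $\pi^\star$ is well-defined under Asm.\,\ref{assumption_finite_SSP_diameter}, and $\pi^\star$ is proper (hence proper also in the $\eta_{k,0}$-offset SSP, since properness is cost-independent). Under the event $\mathcal E$ we have $M\in\mathcal M_{k,0}$, and since $\EVI_{\textrm{SSP}}$ is run on $\mathcal L_\eta$ (all optimistic costs $\geq\eta_{k,0}$), starting from $v_0=0$ and using monotonicity we get componentwise $\widetilde v_{k,0}\leq V^\star_{\eta_{k,0}}$, the optimal value of the true SSP with all costs offset by $\eta_{k,0}$; hence, for every $s\in\mathcal S$,
$\widetilde v_{k,0}(s)\leq V^{\pi^\star}_{\eta_{k,0}}(s)=V^\star(s)+\eta_{k,0}\,\mathbb E[\tau_{\pi^\star}(s)]\leq c_{\max}D+\eta_{k,0}\Upsilon^\star$, using $\|V^\star\|_\infty\leq c_{\max}D$ (the $c_{\min}=0$ version of Lem.\,\ref{lemma_1}) and $\mathbb E[\tau_{\pi^\star}(s)]\leq\Upsilon^\star$. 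Evaluated at $s_0$ this controls the bias; in $\|\cdot\|_\infty$ form it controls the transition-estimation error; and crucially the two summands $c_{\max}D$ and $\eta_{k,0}\Upsilon^\star$ must be carried separately everywhere below. Reprising App.\,\ref{app:bound.maximal.episode.length} with the operator $\mathcal T^{\widetilde\pi_{k,0}}_{\varepsilon_{k,0}}$ (now with offset costs $\geq\eta_{k,0}-\varepsilon_{k,0}$), and noting $\varepsilon_{k,0}=c_{\max}/t_{k,0}\leq c_{\max}/k\leq\eta_{k,0}/2$ for $k\gtrsim c_{\max}^{3/2}$ (since each episode lasts $\geq1$ step, $t_{k,0}\geq k$), one obtains the analogue of Eq.\,\eqref{cost_weighted_optimism_expected_hitting_time}: $\mathbb E[\widetilde\tau_{k,0}(s)]\leq 2(c_{\max}D+\eta_{k,0}\Upsilon^\star)/\eta_{k,0}=2c_{\max}Dk^{1/3}+2\Upsilon^\star=:\lambda_k$, which in particular shows $\widetilde\pi_{k,0}$ is proper in $\widetilde p_{k,0}$ and $\widetilde\tau_{k,0}$ is a discrete PH distribution (the $O(c_{\max}^{3/2})$ early episodes are absorbed in a trivial lower-order term).

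Next, the pivot horizon. Exactly as in Lem.\,\ref{lemma_bound_H_k_analysis_1}, applying Lem.\,\ref{lem:raw_moments_discrete_ph_distribution} with parameter $\lambda_k$, combining with Markov's inequality and the choice $r=\lceil\log(2\sqrt k)\rceil$, and using that $H_{k,0}$ satisfies $\|\widetilde Q_{k,0}^{H_{k,0}-2}\|_\infty>1/\sqrt k$ (via Prop.\,\ref{equation_discrete_phase_type_distribution} and Eq.\,\eqref{eq_choice_H_k}), yields the per-episode bound $H_{k,0}\leq\lceil 3\lambda_k\log(2\sqrt k)\rceil=\widetilde O(c_{\max}Dk^{1/3}+\Upsilon^\star)$. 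Hence $\Omega_K=\max_{k\in[K]}H_{k,0}=\widetilde O(c_{\max}DK^{1/3}+\Upsilon^\star)$ and $T_{K,1}\leq\sum_{k\leq K}H_{k,0}=\widetilde O(c_{\max}DK^{4/3}+\Upsilon^\star K)$, so $\sqrt{A\,T_{K,1}}=\widetilde O\big(\sqrt{c_{\max}DA}\,K^{2/3}+\sqrt{\Upsilon^\star A K}\big)$ splits along the same two scales.

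Third, we keep the decomposition $\Delta(\UCSSP\textrm{-}\mathcal L_\eta,K)\leq\mathcal W_K+c_{\max}T_{K,2}$ of Eq.\,\eqref{regret_decomposition} (unit costs in phase \ding{173} are unchanged). For $\mathcal W_K$ we write $\mathcal W_K\leq\sum_k\big[\sum_{h=1}^{H_{k,0}}c-\widetilde v_{k,0}(s_0)\big]+\Upsilon^\star\sum_k\eta_{k,0}$, with $\sum_k\eta_{k,0}=\sum_k k^{-1/3}=O(K^{2/3})$ giving the term $\Upsilon^\star K^{2/3}$. The first sum is bounded as in Lem.\,\ref{regret_key_lemma} (the offset only adds a non-positive $-\eta_{k,0}$ per telescoped step of $\Theta_{k,h}$, which we drop), producing a transition-error term $\sum_k\|\widetilde v_{k,0}\|_\infty\sum_h 2\beta_{k,0}$. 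Splitting $\|\widetilde v_{k,0}\|_\infty\leq c_{\max}D+\eta_{k,0}\Upsilon^\star$: the $c_{\max}D$-part against $S\sqrt{A\,T_{K,1}}$ gives $\widetilde O\big(c_{\max}DS\sqrt{c_{\max}DA}\,K^{2/3}+c_{\max}DS\sqrt{\Upsilon^\star A K}\big)$; the $\eta_{k,0}\Upsilon^\star$-part is controlled by Cauchy--Schwarz over episodes, $\sum_k\eta_{k,0}\Upsilon^\star\sum_h\beta_{k,0}\leq\Upsilon^\star\sqrt{\sum_k\eta_{k,0}^2}\,\sqrt{\sum_k(\sum_h\beta_{k,0})^2}$, with $\sum_k\eta_{k,0}^2=O(K^{1/3})$ and $\sum_k(\sum_h\beta_{k,0})^2\leq\Omega_K\sum_k\sum_h\beta_{k,0}^2=\widetilde O(\Omega_K S^2A)$, giving $\widetilde O\big(\Upsilon^\star S\sqrt{c_{\max}DA}\,K^{1/3}+\Upsilon^\star S\sqrt{\Upsilon^\star A}\,K^{1/6}\big)$ (the $\varepsilon_{k,0}$-term and the Azuma term have the same form without the $S\sqrt A$ factor and are dominated). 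For $c_{\max}T_{K,2}$: since phase \ding{173} uses unit costs, the sub-analysis of Apps.\,\ref{app:bound.failure.events}--\ref{app:proof.duration.phasetwo} carries over with $c_{\max}=c_{\min}=1$; the number $F_K$ of failed phase-\ding{172} attempts is bounded as in Lem.\,\ref{lemma_bound_F} (the summands of $F'_K$ are $\leq1/\sqrt k$ by the choice of $H_{k,0}$, while the pigeonhole/martingale part now scales with $\sum_k H_{k,0}$ and hence splits), then $G_K\leq2F_K+\widetilde O(S^2AD)$ and $T_{K,2}\leq\Omega'_K G_K$ with $\Omega'_K=\widetilde O(D)$, reproducing terms $c_{\max}DS\sqrt{c_{\max}DA}\,K^{2/3}$, $c_{\max}DS\sqrt{\Upsilon^\star A K}$ and the residual $S^2AD^2$.

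Finally, collecting the dominant contributions from $\mathcal W_K$ and $c_{\max}T_{K,2}$ and applying a union bound (with Lem.\,\ref{lemma_high_probability_intersection_bound_M_in_M_k}) yields the claimed bound. The exponent in $\eta_{k,0}=k^{-1/3}$ is what equates the bias $\Upsilon^\star\sum_k\eta_{k,0}\asymp\Upsilon^\star K^{1-\alpha}$ with the leading horizon-driven term $c_{\max}DS\sqrt{c_{\max}DA}\,\sqrt{K\,\Omega_K}\asymp c_{\max}DS\sqrt{c_{\max}DA}\,K^{(1+\alpha)/2}$ at $\alpha=1/3$. The main obstacle — and the reason the rate drops from $\sqrt K$ to $K^{2/3}$ — is precisely Lem.\,\ref{lemma_costs_offset}: unlike in the finite- and infinite-horizon settings, offsetting all costs by a constant genuinely changes the SSP-optimal policy, so $\eta_{k,0}$ is an irreducible bias of size $\eta_{k,0}\mathbb E[\tau_{\pi^\star}]$; propagating this second scale $\Upsilon^\star$ (which is independent of, and in general larger than, $D$) through the pivot-horizon bound, through the pigeonhole sums in $\mathcal W_K$, and through the phase-\ding{173} bookkeeping — while keeping every such contribution strictly below $K^{2/3}$ — is the delicate part of the argument.
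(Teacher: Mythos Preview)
Your proposal is correct and follows the same route as the paper: perturb costs by $\eta_{k,0}$, show $\widetilde v_{k,0}\leq V^\star+\eta_{k,0}\mathbb E[\tau_{\pi^\star}]$, deduce $\mathbb E[\widetilde\tau_{k,0}]\leq c_{\max}D/\eta_{k,0}+\Upsilon^\star$ and hence $\Omega_K=\widetilde O(c_{\max}DK^{1/3}+\Upsilon^\star)$, then split the regret into $\mathcal W_K+\Upsilon^\star\sum_k\eta_k+c_{\max}T_{K,2}$ and reuse Lem.\,\ref{regret_key_lemma}, \ref{lemma_bound_F} and \ref{lemma_bound_duration_second_phases} with the new $\Omega_K$. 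The paper's proof is terser: it simply states the double sum is $O\big((c_{\max}D+\eta_K\Upsilon^\star)S\sqrt{A\Omega_K K}\big)$ and then substitutes $\eta_K=K^{-1/3}$ and $\Omega_K$, whereas you make explicit how the $\eta_{k,0}\Upsilon^\star$ part of $\|\widetilde v_{k,0}\|_\infty$ produces the $K^{1/3}$ and $K^{1/6}$ terms via Cauchy--Schwarz over episodes (with $\sum_k\eta_k^2=O(K^{1/3})$ and $\sum_k(\sum_h\beta_k)^2\leq\Omega_K\sum_{k,h}\beta_k^2=\widetilde O(\Omega_K S^2A)$). This is a genuine clarification: the paper's shorthand ``$\eta_K\Upsilon^\star$'' does not follow from a crude uniform bound (since $\eta_k\geq\eta_K$), and your episode-wise treatment is one clean way to justify those two secondary terms. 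Likewise, your handling of the EVI accuracy ($\varepsilon_{k,0}\leq\eta_{k,0}/2$ for $k\gtrsim c_{\max}^{3/2}$) is more careful than the paper, which essentially argues at the level of the exact minimizer in Eq.\,\eqref{eq_argmin_v_tilde_eta}.
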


\begin{proof}

For notational ease, throughout the proof of Lem.\,\ref{regret_cmin_0_asm_relaxed} we adopt the notation $\eta_{k} :=\eta_{k,0}$, $H_{k} :=H_{k,0}$, $\wt{\pi}_{k} := \wt{\pi}_{k,0}$, $\varepsilon_{k} := \varepsilon_{k,0}$ (i.e., we remove the subscript $0$).

$\UCSSP\textrm{-}\mathcal{L}_{\eta}$ modifies the \EVI~procedure so that it selects a pair $(\wt{\pi}_k, \wt{p}_k)$ that satisfies for any $s \in \mathcal{S}$,
\begin{align}
    (\wt{\pi}_k, \wt{p}_k) \in \argmin_{\wt{\pi}, \wt{p}} \wt{v}^{(\eta)}_{\wt{\pi},\wt{p}}(s),
\label{eq_argmin_v_tilde_eta}
\end{align}
where 
\begin{align*}
    \wt{v}^{(\eta)}_{\wt{\pi},\wt{p}}(s) := \mathbb{E}_{\widetilde{p}}\left[ \sum_{t = 1}^{\tau_{\widetilde{\pi}}(s)} c(s_{t}, \widetilde{\pi}(s_t)) + \eta_k \,\Big\vert\,s \right] = \mathbb{E}_{\widetilde{p}}\left[ \sum_{t = 1}^{\tau_{\widetilde{\pi}}(s)} c(s_{t}, \widetilde{\pi}(s_t)) \,\Big\vert \,s \right] + \eta_k \mathbb{E}_{\widetilde{p}}\left[ \tau_{\widetilde{\pi}}(s)\right],
\end{align*}
and we introduce for ease of notation $\widetilde{v}_k^{(\eta)}(s) := \wt{v}^{(\eta)}_{\wt{\pi}_k,\wt{p}_k}(s)$ and $\widetilde{v}_k(s) := \mathbb{E}_{\widetilde{p}_k}\left[ \sum_{t = 1}^{\tau_{\widetilde{\pi}_k}(s)} c(s_{t}, \widetilde{\pi}_k(s_t)) \,\Big\vert \,s \right]$. 

From Eq.\,(\ref{eq_argmin_v_tilde_eta}) we have that under the event $\mathcal{E}$, $\widetilde{v}_k^{(\eta)}(s) \leq \wt{v}^{(\eta)}_{\pi^{\star},p}(s)$, or equivalently by expanding, 
\begin{align}
    \widetilde{v}_k^{(\eta)}(s) = \widetilde{v}_k(s) + \eta_k \mathbb{E}_{\wt{p}_k}\left[ \tau_{\wt{\pi}_{k}}(s) \right] \leq \mathbb{E}_p\left[ \sum_{t = 1}^{\tau_{\pi^{\star}}} c(s_{t}, \pi^{\star}(s_t)) + \eta_k \,\Big\vert\,s \right] = V^{\star}(s) + \eta_k \mathbb{E}\left[\tau_{\pi^{\star}}(s)\right].
\label{eq_eta_opt}
\end{align}
Plugging into Eq.\,\eqref{eq_eta_opt} that $\widetilde{v}_k(s) \geq 0$ and $\norm{V^{\star}}_{\infty} \leq c_{\max} D$ from Lem.\,\ref{lemma_1} (which does not require $c_{\min} > 0$) yields 
\begin{align}
    \norm{\mathbb{E}_{\wt{p}_k}\left[ \tau_{\wt{\pi}_{k}} \right]}_{\infty} \leq \frac{c_{\max} D}{\eta_k} + \Upsilon^{\star}.
\label{ineq_last}
\end{align}

Hence the term $\frac{c_{\max}D}{c_{\min}}$ in Eq.\,\eqref{cost_weighted_optimism_expected_hitting_time} (and thus in Lem.\,\ref{lemma_bound_H_k_analysis_1}) can be replaced by the upper bound in Eq.\,\eqref{ineq_last}, which implies that under the event $\mathcal{E}$,
\begin{align*}
    \Omega_K \leq 6 \left( \frac{c_{\max} D}{\eta_K} + \Upsilon^{\star} \right) \log(S \sqrt{K}).
\end{align*}

Furthermore, using Eq.\,(\ref{eq_eta_opt}) the regret can be decomposed as
\begin{align*}
    \sum_{k = 1}^{K} \left[ \left( \sum_{h=1}^{\tau_k(s_0)} c(s_{k,h}, \wt{\pi}_k(s_{k,h})) \right) - V^{\star}(s_{0}) \right] \leq \sum_{k = 1}^{K} \left[ \left( \sum_{h=1}^{H_k} c(s_{k,h}, \wt{\pi}_k(s_{k,h})) \right) - \widetilde{v}_k^{(\eta)}(s_0) \right] + \Upsilon^{\star} \sum_{k=1}^K \eta_k + c_{\max} T_{K,2},
\end{align*}
where the double sum can be bounded by (excluding lower-order terms)
\begin{align*}
     O \left( (c_{\max} D + \eta_K \Upsilon^{\star}) S \sqrt{A \Omega_K K \log\left(\frac{\Omega_K K}{\delta}\right)} \right),
\end{align*}
by adapting the proof of Lem.\,\ref{regret_key_lemma}, since $\wt{\pi}_k$ is the greedy policy w.r.t.\,the optimistic value function $\widetilde{v}_k^{(\eta)}$ which satisfies $\norm{\widetilde{v}_k^{(\eta)}}_{\infty} \leq c_{\max} D + \eta_k \Upsilon^{\star}$ from Eq.\,(\ref{eq_eta_opt}). Moreover, we can bound $T_{K,2}$ as in Sect.\,\ref{section_general_SSP} by using Lem.\,\ref{lemma_bound_duration_second_phases}. 

Hence selecting $\eta_k = \frac{1}{k^{1/3}}$ and plugging in the bound on $\Omega_K$ yields the desired bound.
\end{proof}

An interesting future direction could be to allow for negative costs yet this extension is outside the scope of the paper.

\subsection{Summary}

We report in Table \ref{tab:regret} the regret guarantees of \UCSSP (by isolating the dependencies on $K$ and on $D$ or $J$), depending on the assumptions made (and the corresponding choices of Bellman operator for $\EVI_{\textrm{SSP}}$). We notice that if $D = + \infty$ and under Asm.\,\ref{assumption_J}, $\UCSSP\textrm{-}\mathcal{L}_J$ satisfies a regret bound where the infinite term $D$ is replaced with the known upper bound $J \geq V^{\star}(s_0)$. Moreover, $\UCSSP\textrm{-}\mathcal{L}_{\eta}$ can deal with the existence of zero costs, however the rate worsens from $\sqrt{K}$ (in Thm.\,\ref{theorem_regret_nonproper} which requires $c_{\min} > 0$) to $K^{2/3}$, due to the bias introduced by offsetting the costs in the optimistic model. Finally, it is straightforward to combine the two aforementioned variants and derive $\UCSSP\textrm{-}\mathcal{L}_{J, \eta}$ which can handle both $D=+\infty$ (under Asm.\,\ref{assumption_J}) and $c_{\min} = 0$.

\begin{table}[h!]
   \centering 
   {\renewcommand{\arraystretch}{1.6}
   \begin{tabular}{|c|c|} 
   \hline
   \textbf{Assumptions} & \textbf{Regret bound} \\
     \hline
     \textbf{$c_{\min} > 0$ (Asm.\,\ref{assumption_costs}) and $D < \infty$ (Asm.\,\ref{assumption_finite_SSP_diameter})} & $\wt{O}(D^{3/2}\sqrt{K})$ \\
   \hline
    \textbf{$c_{\min} > 0$ (Asm.\,\ref{assumption_costs}) and $V^{\star}(s_0) \leq J$ w/ \RESET (Asm.\,\ref{assumption_J})}  & $\wt{O}(J^{3/2} \sqrt{K})$ \\
    \hline
    \textbf{$c_{\min} = 0$ and $D < \infty$ (Asm.\,\ref{assumption_finite_SSP_diameter})} & $\wt{O}(D^{3/2} K^{2/3})$ \\
    \hline
    \textbf{$c_{\min} = 0$ and $V^{\star}(s_0) \leq J$ w/ \RESET (Asm.\,\ref{assumption_J})} & $\wt{O}(J^{3/2} K^{2/3})$ \\
   \hline
   \end{tabular}
   }
   \caption{Regret guarantees of \UCSSP depending on the assumptions made.}
    \label{tab:regret}
\end{table}

\section{Experiments}
\label{app:experiments}

\begin{figure}[t]
        \centering

        \includegraphics[width=.38\textwidth]{./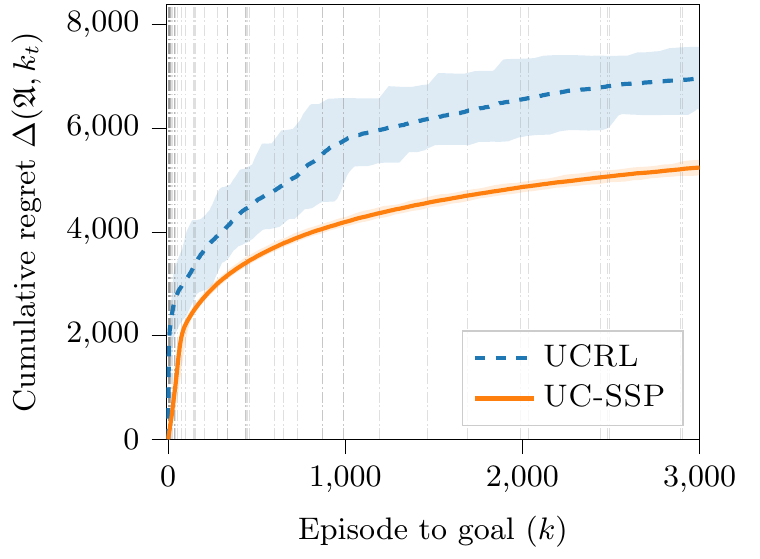} \tikz[remember picture, baseline]{\node (aa) at (0,0) {};}
        \hspace{1.4in}
        \includegraphics[width=.37\textwidth]{./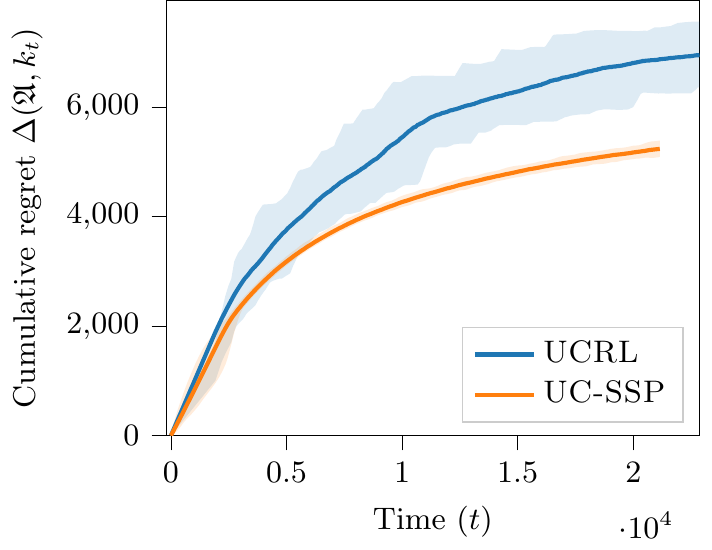}\\
        \hspace*{.1in}\includegraphics[width=.37\textwidth]{./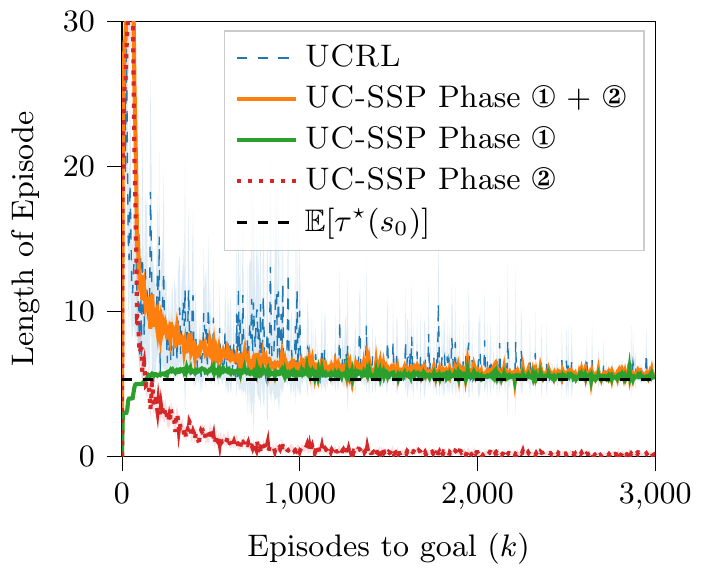}\hspace{1.4in}
        \includegraphics[width=.36\textwidth]{./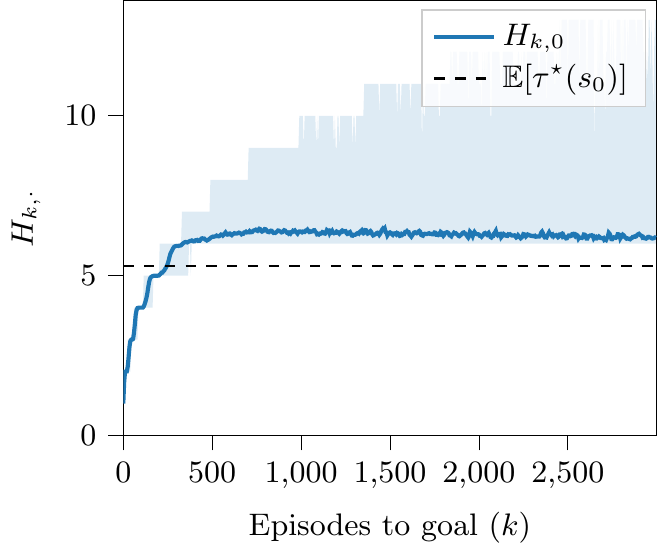}

        \caption{Comparison of \UCSSP and \UCRL in the case of uniform-cost SSP. The plots are averaged over $200$ repetitions. We report the mean and the maximum and minimum value for top line and figure bottom right. For the bottom-left figure, we report the standard deviation of the mean at 96\% to simplify the visualization.}
        \label{fig:gridworld.uniform}
\end{figure}

In this section, we empirically validate our theoretical findings and perform an ablation study of the algorithms.
We consider $3$ scenarios: 1) uniform-cost SSP; 2) SSP with $c_{\min} > 0$ and 3) SSP with $c_{\min} = 0$.
In all the experiments, we consider the same $(3 \times 4)$ gridworld but we modify the cost function.
The agent can move using the cardinal actions (Right, Down, Left, Up).
An action fails with probability $p_{f} = 0.05$. In this case (failure), the agent uniformly follows one of the other directions.
Walls are absorbing, i.e., if the action leads against the wall, the agent stays in the current position with probability 1. For example, $p((0,0) | (0,0), right) = \frac{2p_{f}}{3}$, $p((1,0) | (0,0), right) = \frac{p_{f}}{3}$ and $p((0,1) | (0,0), right) = 1 - p_{f}$.
If we consider \emph{Up}, we have $p((0,0)|(0,0), Up) = 1$.
For the experiments we used the theoretical confidence intervals without constants, i.e., $\beta_{k,j}(s,a) = \sqrt{\frac{S L}{N^+_{k,j}(s,a)}}$ with $L = \log(SAN^+_{k,j}(s,a)/0.1)$.
The remaining parameters are set as prescribed by the theory.
All the results are averaged over $200$ runs.

\textbf{1)} The first experiment aims to compare \UCRLtwo \citep{jaksch2010near} and \UCSSP in the case of uniform-cost SSP studied in Sect.\,\ref{section_uniform_costs} (see Fig.\,\ref{fig:gridworld.uniform}).
Thus we set $c(s,a) = 1$ for any $(s,a) \in \mathcal{S} \times \mathcal{A}$, and $c(\wb s,a) =0$ for all $a \in \mathcal{A}$.
We evaluate the algorithms at $K=3000$ episodes.
Fig.\,\ref{fig:gridworld.uniform}(top left) shows that the regret of both algorithms is sublinear, as stated by the theoretical analysis. Interestingly, the regret of \UCRL is higher than the one incurred by \UCSSP.
This is possibly due to algorithmic structure of \UCRL, which behaves in epochs (or algorithmic episodes) and each epoch ends when the number of visits to some state-action pair is doubled. \UCRL computes the policy only at the beginning of an epoch. As shown by the vertical lines in Fig.\,\ref{fig:gridworld.uniform}(top left), between each planning step, the agent may reach the goal multiple times.
While this can be computationally efficient, the drawback is that \UCRL may execute sub-optimal policies for long time.
On the other hand, we believe that by planning more often, \UCSSP is able to execute better policies than \UCRL. In fact, Fig.\,\ref{fig:gridworld.uniform}(bottom left) shows that the time required by \UCRL to reach the goal $\wb s$ is often higher than the one of \UCSSP.
It also shows that the length of phase \ding{173} in \UCSSP quickly goes to zero, meaning that policy executed by \UCSSP is able to quickly reach the goal.
Fig.\,\ref{fig:gridworld.uniform}(top right) shows that \UCRL requires more time (i.e., steps) than \UCSSP to successfully complete $2000$ episodes.
This test sheds light on the relationship between \UCRL and \UCSSP and shows that, despite the good regret guarantees, \UCRL may not exploit the specific structure of the SSP problem and poorly performs compared to \UCSSP.
Finally, we also plot the estimate of the hitting time computed by \UCSSP (see Fig.\,\ref{fig:gridworld.uniform}(bottom right)). As expected, it is a ``tight'' upper-bound to the expected hitting time of the optimal SSP policy ($\mathbb{E}[\tau_{\pi^\star}(s_0)] = 5.3$), except in the initial episodes where the optimistic model is far away from the true one. In the latter case, the imagined SSP problem has high probability of reaching $\wb s$ from any other state due to the high uncertainty.

\begin{tikzpicture}[remember picture,overlay]
  \node[anchor=west] at ($(aa)+(-.1in,0)$) {\includegraphics[width=4cm]{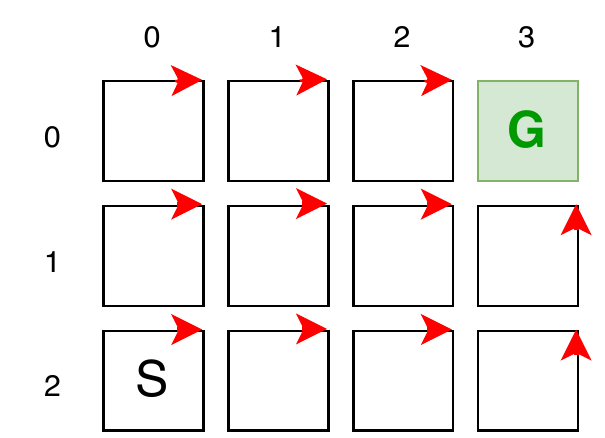}};
\end{tikzpicture}

\begin{figure}[bt]
  \centering
  \begin{minipage}{.3\textwidth}
    \centering
    \includegraphics[width=\textwidth]{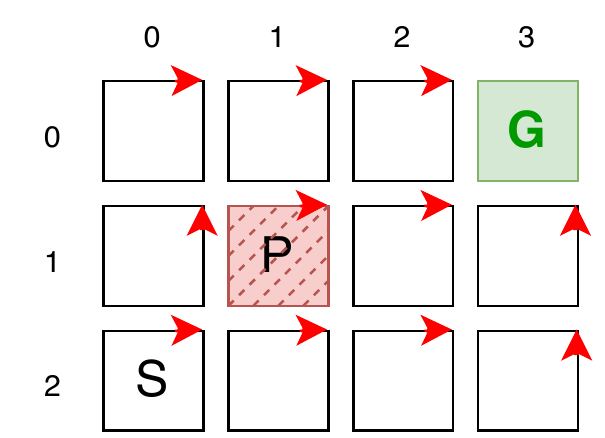}
  \end{minipage}
  \begin{minipage}{.65\textwidth}
    \centering
    \includegraphics[width=.8\textwidth]{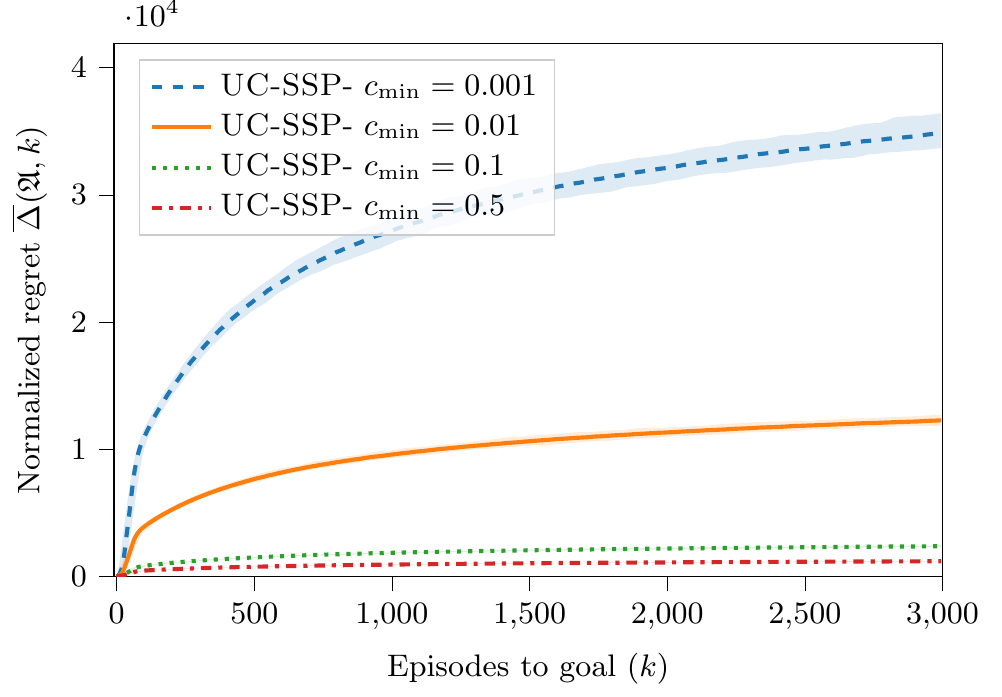}
  \end{minipage}
  \caption{Evaluation of the effect of $c_{\min} > 0$ on the regret of \UCSSP. Results are averaged over $200$ runs. We report mean value and maximum and minimum observed values.}
  \label{fig:gridworld.standard}
\end{figure}

\begin{figure}[tb]
  \centering
            \includegraphics[width=.32\textwidth]{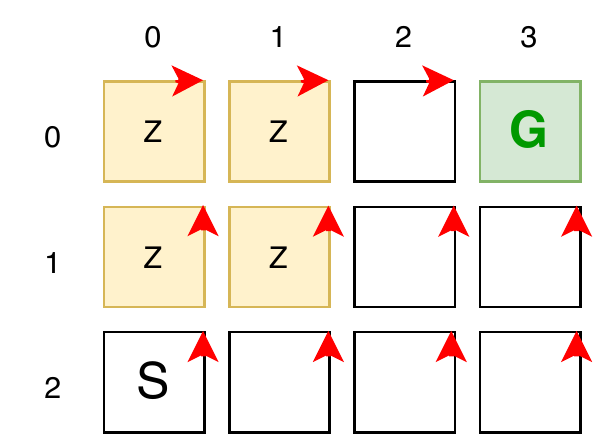}\hspace{1in}
            \includegraphics[width=.4\textwidth]{./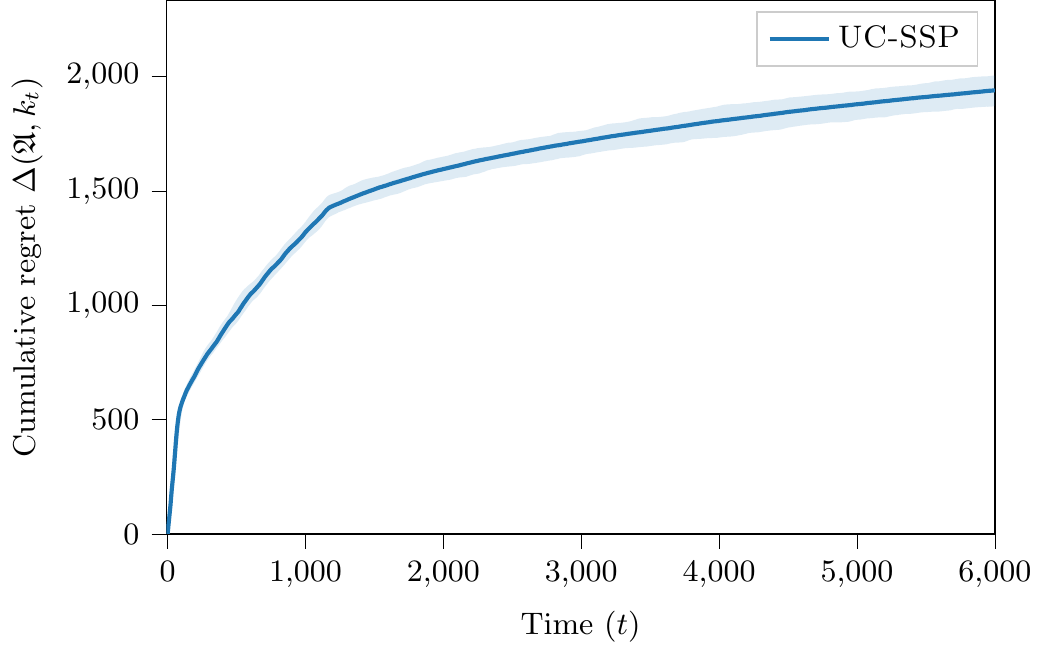}\\
            \includegraphics[width=.4\textwidth]{./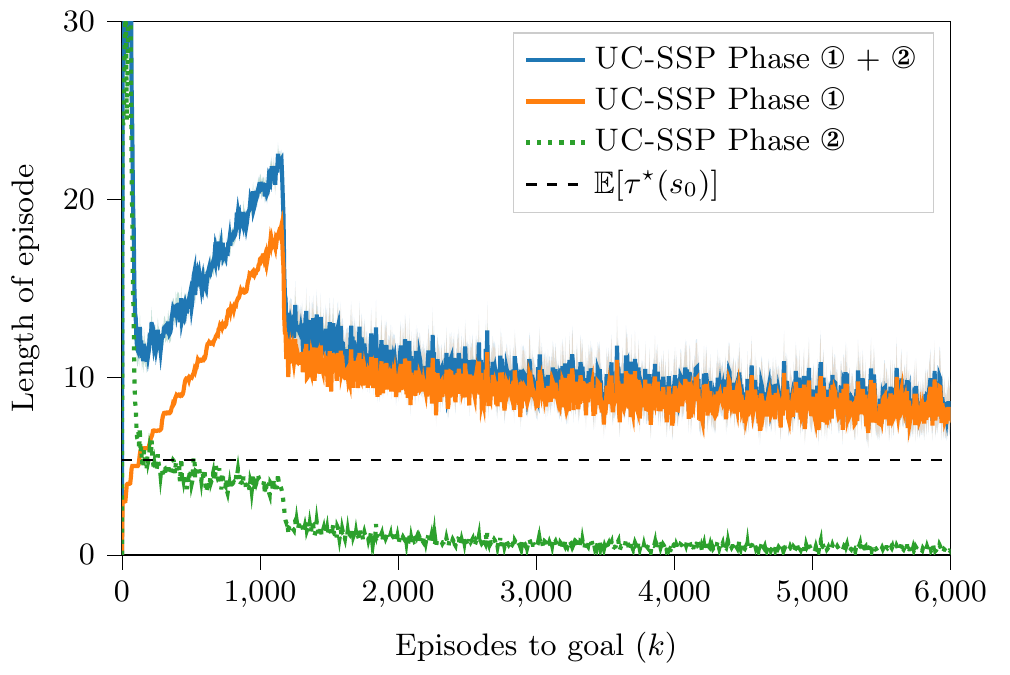}\hspace{1in}
            \includegraphics[width=.4\textwidth]{./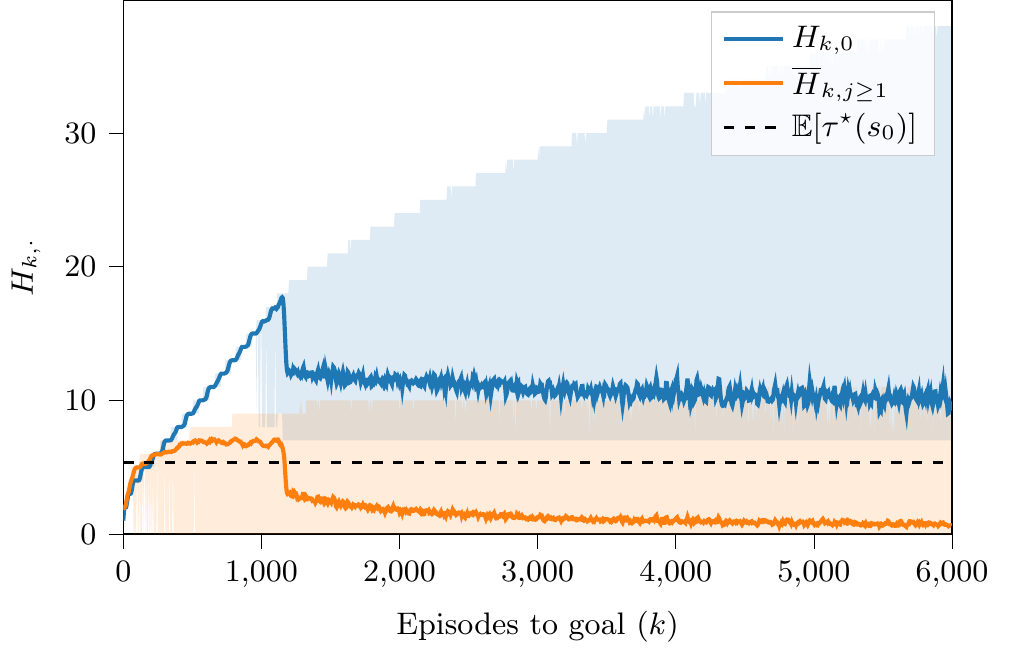}
            \caption{Evaluation of \UCSSP for $c_{\min} = 0$. See Fig.\,\ref{fig:gridworld.uniform} for details.}
  \label{fig:gridworld.zero}
\end{figure}

\vspace{-0.2in}

\textbf{2)} The second experiment focuses on non-uniform cost. At each step, the agent incurs a cost of $\beta>0$ except when in $\wt{s}=(1,1) = P$ where the cost is $1$. The state $\wt{s}$ is considered to be a sand pit and has the effect of slowing down the agent (i.e., higher cost). Formally, $c(s,a) =\beta$ for all $(s,a) \in (\mathcal{S}\setminus\{\wt s\}) \times \mathcal{A}$, $c(\wt s, a) = 1$ for all $a \in \mathcal{A}$, and $c(\wb s,a) =0$ for all $a \in \mathcal{A}$. Clearly, $c_{\min} = \beta > 0$.
Note that the optimal SSP policy is the same for all the selected values of $\beta$.
As before, we evaluate the algorithms at $K=3000$ episodes.
In Fig.\,\ref{fig:gridworld.standard}(right) we show the impact of $c_{\min}$ on the regret of \UCSSP.
First of all, we show how $c_{\min}$ affects the true solution of the SSP problem. To do so, we run VI on the true model with $\epsilon=1.e-10$ and obtain
\[
V^\star(s_0|\beta=0.5) = 2.66,~~
V^\star(s_0|\beta=0.1) = 0.55,~~
V^\star(s_0|\beta=0.01) = 0.07,~~
V^\star(s_0|\beta=0.001) = 0.02.
\]
To remove the impact of the different magnitude of the cost, we consider the normalized regret $\wb \Delta(\mathfrak{A}, K) := \frac{\Delta(\mathfrak{A}, K)}{V^\star(s_0)}$.
Fig.\,\ref{fig:gridworld.standard}(right) shows that the complexity of the learning problem scales inversely with $c_{\min}$, in the sense that the smaller $c_{\min}$ the higher the regret (i.e., the higher the learning complexity). This supports our theoretical result.

\textbf{3)} The final experiment deals with the case $c_{\min} = 0$.
We consider the states $(0,0), (0,1), (1,1)$ and $(1,0)$ to have zero cost, see Fig.\,\ref{fig:gridworld.zero}(left).
All the other states have cost defined as in experiment 2) with $\beta=0.4$. 
Note that there exists loops with zero costs, which means that there exist improper policies with finite $V$-values.
As mentioned in App.\,\ref{subsection_relaxation_asm_positive_costs}, in this case we compete against the optimal proper policy (see Fig.\,\ref{fig:gridworld.zero}(top left)).
To compute the optimal proper policy and its value $V$, we use VI with perturbation of $1e-10$~\citep{bertsekas2013stochastic}.
We evaluate the algorithms at $K=3000$ episodes.
We notice that \UCSSP has sublinear regret as expected. The perturbation of the costs has a large impact on the initial phase of \UCSSP when both uncertainty and perturbation are high. In this case, \UCSSP highly overestimates the hitting time of the optimal policy, leading to the execution of suboptimal policies for a long time (due to Phase \ding{172}). Once the perturbation and/or the uncertainty decreases, we notice that the estimated hitting time drops rapidly and approaches the true value.
It is also interesting to notice that the estimated hitting time of phase \ding{173} is never too high. This is due to the fact that phase \ding{173} aims to find the policy reaching the goal state in the smallest time.

\subsection{Bernstein Inequalities}
In this section, we provide an evaluation of the proposed algorithm with Bernstein inequalities and perform empirical comparison with later work \citep{cohen2020near}. 
Similarly to~\citep[e.g.,][]{azar2017minimax,improved_analysis_UCRL2B}, we consider the following concentration inequality of the transition probabilities: $\forall (s,a,s') \in \mathcal{S} \times \mathcal{A} \times \mathcal{S}'$,
\begin{equation}
  \left| \wt{p}(s'|s,a) - \wh p_{k,j}(s'|s,a) \right| \leq \beta_{k,j}(s,a,s') \approx \sqrt{\frac{\sigma_p^2(s,a,s') L}{N^+_{k,j}(s,a)}} + \frac{L}{N^+_{k,j}(s,a)}
\end{equation}
where $L = \log(SAN^+_{k,j}(s,a)/0.1)$ and $\sigma_p^2(s,a,s') = \wh{p}_{k,j}(s'|s,a) (1 - \wh p_{k,j}(s'|s,a))$.
Optimistic SSP planning can be performed using extended value iteration (as in Alg.\,\ref{algorithm_VI_SSP}).
We thus use the optimistic Bellman operator defined in Eq.\,\eqref{eq_bellman_operator} with $B_{k,j}(s,a) := \{ \widetilde{p} \in \mathcal{C} ~\vert ~ \widetilde{p}(\cdot \, \vert \,  \overline{s},a) = \mathds{1}_{\overline{s}}, |\widetilde{p}(s' \, \vert \,  s,a) - \widehat{p}_{k,j}(s' \, \vert \,  s,a)| \leq \beta_{k,j}(s,a,s')\}$.

\begin{figure}[bt]
  \centering
  \includegraphics[width=.55\textwidth]{./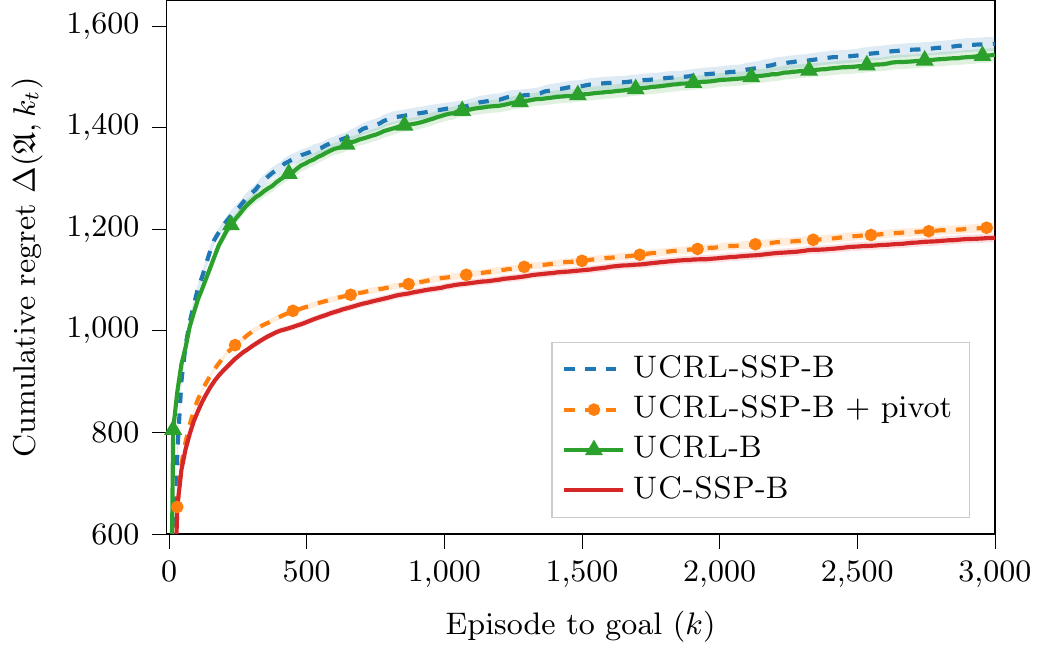}
  \caption{Evaluation of the algorithms with Bernstein inequalities and uniform cost. See Fig.\,\ref{fig:gridworld.uniform} for details. We average the results over $200$ runs and report the standard deviation of the mean at 96\%.}
\label{fig:bernstein.uniform}
\end{figure}

We compare  with \UCRLSSP~\citep{cohen2020near}. \UCRLSSP is a variant of \UCRLB~\citep{improved_analysis_UCRL2B} where the average reward planning is replaced with the SSP planning. When $c_{\min} = 0$, \UCRLSSP leverages the same perturbation idea used by \UCSSP. The cost is then defined as $c(s,a) = \max\{c(s,a), \epsilon\}$ with $\epsilon = \frac{S^2A}{K}$. 

The main goal of this section is to empirically show that, despite the  $K^{2/3}$ regret bound when $c_{\min} = 0$, \UCSSP is competitive with \UCRLSSP whose regret bound scales as $\sqrt{K}$. We also show the role of the pivot horizon used by \UCSSP. 

As done in the previous section, we start considering the uniform cost case. Fig.\,\ref{fig:bernstein.uniform} shows that \UCSSP outperforms \UCRLSSP. From Fig.\,\ref{fig:bernstein.uniform} we can see that the lower regret of \UCSSP comes from the use of the pivot horizon. Indeed, when we integrate the pivot horizon idea in \UCRLSSP\footnote{\UCRLSSP uses the same condition of \UCRLB to terminate an algorithmic episode, i.e., when the number of visits to  a state-action pair is doubled, the algorithmic episode ends. When using the pivot horizon, we simply limit the number of steps in the algorithmic episode to be at most the pivot horizon (as done for \UCSSP). We also integrated the condition of planning every time the goal state is reached but we didn't observe any significant change in this domain.} the algorithms behave similarly.
In Fig.\,\ref{fig:bernstein.uniform} we can see that \UCRLSSP behaves as \UCRLB. This is due to the fact that SSP planning is equivalent to average reward planning in this setting (i.e., uniform cost). Furthermore, it shows that, in this domain, \UCRLSSP is not able to leverage the structure of the SSP problem. In contrast, \UCSSP adapts to the SSP problem thanks to the pivot horizon.

The second experiment focuses on the case when $c_{\min} =0$.
As shown in Fig.\,\ref{fig:bernstein.zero}\emph{(left)}, \UCSSP has a low regret even in this case.
\UCRLSSP achieves the same performance of \UCSSP only when using the pivot horizon as a stopping condition of the algorithmic episode.
This shows again that the stopping condition based on pivot horizon allows the algorithms to better adapt to the the SSP structure of this problem.
Finally, Fig.\,\ref{fig:bernstein.zero}\emph{(right)} shows that phase \ding{173} happens only at the early stages of the learning process. As a consequence, \UCSSP does not suffer additional regret due to phase \ding{173} in this domain.

\begin{figure}[tb]
  \centering
  \includegraphics[width=.48\textwidth]{./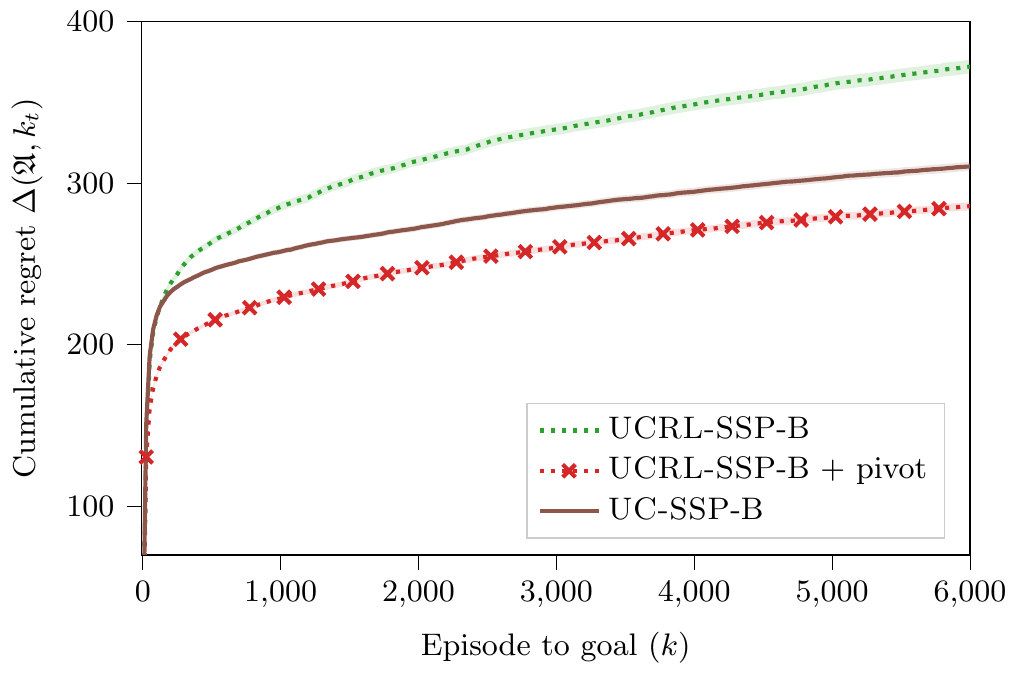}
  \includegraphics[width=.48\textwidth]{./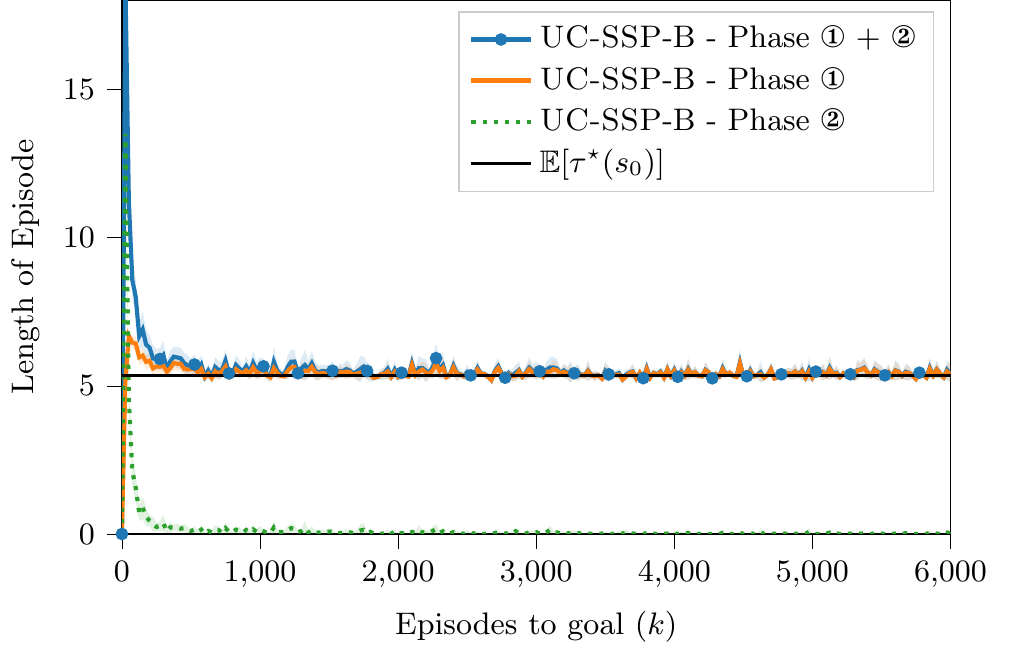}
  \caption{Evaluation of the algorithms with Bernstein inequalities and $c_{\min} = 0$. See Fig.\,\ref{fig:gridworld.zero} for details. Right figure shows the average length of Phase \ding{172} and \ding{173} for \UCSSP with Bernstein inequalities.}
\label{fig:bernstein.zero}
\end{figure}

\end{document}